\newcommand{\PreserveBackslash}[1]{\let\temp=\\#1\let\\=\temp}
\newcolumntype{C}[1]{>{\PreserveBackslash\centering}p{#1}}
\newcolumntype{R}[1]{>{\PreserveBackslash\raggedleft}p{#1}}
\newcolumntype{L}[1]{>{\PreserveBackslash\raggedright}p{#1}}
\newcommand{\slug}{\hbox{\kern1.5pt\vrule width2.5pt height6pt depth1.5pt\kern1.5pt}}
\def\xskip{\hskip 7pt plus 3pt minus 4pt}
\newdimen\algindent
\newif\ifitempar \itempartrue 
\def\algindentset#1{\setbox0\hbox{{\bf #1.\kern.25em}}\algindent=\wd0\relax}
\def\algbegin #1 #2{\algindentset{#21}\alg #1 #2} 
\def\aalgbegin #1 #2{\algindentset{#211}\alg #1 #2} 
\def\alg#1(#2). {\medbreak 
  \noindent{\bf#1}({\it#2\/}).\xskip\ignorespaces}
\def\algstep#1.{\ifitempar\smallskip\noindent\else\itempartrue
  \hskip-\parindent\fi
  \hbox to\algindent{\bf\hfil #1.\kern.25em}%
  \hangindent=\algindent\hangafter=1\ignorespaces}
\newtheorem{algo}{Algorithm}
\def\ci{\perp\!\!\!\perp}
\newcommand{\HT}{\mathcal{T}}
\newcommand{\CG}{\mathcal{C}}
\newcommand{\SG}{\mathcal{S}}
\newcommand{\KG}{\mathcal{K}}
\newcommand{\realz}[1]{\hat{\mathbf{#1}}}
\newcommand{\algvar}[1]{\textit{\textbf{#1}}}
\DeclareMathOperator{\Tr}{Tr}
\newcommand{\mfcffix}{\textsc{MFCF\_FIX}}
\newcommand{\mfcffixi}{\textsc{MFCF\_FIX\_ID}}
\newcommand{\mfcfvar}{\textsc{MFCF\_VAR}}
\newcommand{\mfcfvari}{\textsc{MFCF\_VAR\_ID}}
\newcommand{\glasso}{\textsc{GLASSO}}
\newcommand{\shrinkage}{\textsc{SHRINKAGE}}
\newcommand{\mfcf}{\textsc{MFCF}}
\begin{document}

\title{Learning Clique Forests}

\author{\name Guido Previde Massara \email guido.massara.12@ucl.ac.uk \\
       \name Tomaso Aste \email t.aste@ucl.ac.uk \\
       \addr Department of Computer Science\\
       University College London\\
       Gower Street, London, WC1E 6B\\
        UCL Centre for Blockchain Technologies, UCL, London, UK. \\
        Systemic Risk Centre, London School of Economics, London UK.
}

\editor{TBD}

\maketitle

\begin{abstract}

We propose a new algorithm for learning complex networks from data, named Maximally Filtered Clique Forest (\mfcf{}). The \mfcf{} is a preferential attachment scheme where new nodes are added to a growing forest of cliques by means of a \emph{clique expansion operator}. The clique expansion operator is topologically invariant and always transforms clique forests into clique forests. Nodes are added recursively and the selection of the next node and of the attachment points is driven by the increase in a given score (gain function). We show that the clique forests produced by the \mfcf{} are chordal. 
This makes an associated graphical model decomposable and it makes it easy to construct an associated sparse inverse covariance matrix.
The clique expansion operator can be customized so as to control the size of the cliques and the multiplicity of their separators. 
The algorithm is general, and can be constructed with gain functions suited to diverse applications. In this paper we provide examples for the problem of covariance selection for Gaussian graphical models and compare the results obtained by the \mfcf{} with the Graphical Lasso.

\end{abstract}

\begin{keywords}
 Clique forest, topological learning, structure learning, TMFG, LoGo, chordal graphs, DAG, Topological Data Analysis, network sparsification, Markov Random Fields, 
\end{keywords}

\section{Introduction}

In this paper we introduce a novel algorithm, the Maximally Filtered Clique Forest (\mfcf{}) which produces \emph{clique forests} representative of the structure of input data sets.  
Clique forests are acyclic graphs that span the set of the cliques of a base graph. The nodes of a clique forest are the maximal complete subsets of the graph (the cliques) and the edge, which we will call \emph{separator}, joining two cliques is made of the elements in the intersection of the two cliques\footnote{There is a further technical requirement,  the \emph{clique intersection property} that will be described in section \ref{sec:definitions}}. 

The clique forest, as a data structure, enjoys a number of useful properties.

\begin{enumerate}

    \item The graph underlying a clique forest is a \emph{chordal}, or \emph{triangulated} graph\footnote{See, for instance, the excellent reference \citet[Theorem 3.1]{blairpeython1992} for a proof. Note that the authors refer to \emph{clique trees}, rather than clique forests, with the understanding that all the arguments and demonstration can be repeated separately for all of the connected components of a clique forest.}.
    Chordal graphs are a particular case of \emph{perfect graphs} \citep{golumbic2004algorithmic}, for which there are known polynomial time solutions for many problems that are much harder for general graphs such as graph coloring, maximum clique, maximum independent set \citep{grotschel2012geometric,vandenberghe2015chordal}.

    \item The structure of the cliques and separators is such that the clique forest is also an example of an \emph{abstract simplicial complex} \citep{munkres2018elements}. We believe this fact (although not exploited in the current paper) might add the tools of Topological Data Analysis \citep{wasserman2018topological} to the analysis of networks modelled as clique forests.
    
    \item The maximum clique size allowed in the forest is in direct correspondence with the \emph{treewidth} of the graph and the treewidth is linked to the computational complexity of the most common inference tasks \citep{bach2001thin}. Limiting the treewidth (or equivalently the maximum size of a clique in the forest) is both a way to limit the complexity of inference and the number of parameters associated to a graphical model, see \citet{bach2001thin}. Decomposable models associated with clique forests are particularly suitable for inference \citep[Chap. 10]{koller2009probabilistic}, since in this case  inference can be carried out exactly and efficiently using variable elimination, belief propagation and the junction-tree algorithm \citep{shafer1990probability,koller2009probabilistic}. In the case of clique forests, the complexity of inference is defined by the size of the maximal cliques in the model, hence the interest in producing ``thin junction trees'' \citep{bach2001thin,chechetka2008efficient}, that are graphs where the maximal size of the cliques is small. Conversely, if the underlying graph is not a junction forest, exact inference is infeasible since it is NP-hard (see \cite{chandrasekaran2012complexity}) and has to be carried out using a number of approximate inference procedures based on sampling (e.g. Gibbs, Markov Chain Monte Carlo, importance sampling) or on deterministic approximate inference (variational approximation, loopy belief propagation, expectation propagation, see \citet{wainwrightjordan2008,koller2009probabilistic,barber2012bayesian}).

\end{enumerate}


When the problem is to infer the \emph{sparse} dependency structure of the dataset, the \mfcf{} clique-forest structure yields to a \emph{decomposable} Markov random field (MRF).
MRFs are \citep{lauritzen1996,2017AnRSA...4..365D,ScutariStrimmer2011} multivariate probability distributions associated with an undirected graph $G(V, E)$ (and are therefore also called undirected graphical models, UGM).  The vertices $v \in V$ are in a one to one relationship with a collection of random variables $X_v, v \in V$; the edges in $E$ join the conditionally dependent random variables. In MRFs the absence of an edge between two variables represents conditional independence. Structure learning is the process of inferring from the data the pattern of missing edges. 

The problem of finding the dependency structure that allows to carry out inference in a system of variables is, in general, NP-complete \citep{karger2001learning,bogdanov2008complexity,chickering1996learning,chickering1994learning}, and some simplifying assumptions are required in practice. 

When a clique forest is the underlying graph of a graphical model \citep{lauritzen1996} the cliques represent a completely interdependent set of random variables, and the separator is the conditioning set that mediates the interaction between two cliques. The graphical model is \emph{decomposable} and has many desirable properties: the maximum likelihood estimators of the parameters are known in closed form  for a number of probability distributions \citep{lauritzen1996}; the calculation of the probability density can be expressed in closed form without the need to perform a demanding calculation of the partition function; and, finally, the clique forest readily supports the application of the \emph{junction tree} algorithm (see, for instance, \citet{koller2009probabilistic}).
As it is common in many types of probabilistic graphical models
\citep{koller2009probabilistic} we make the assumption that the global 
dependence structure is \emph{sparse} and therefore the number of edges in the graph $G$ is small with respect to the total possible number of edges, $|E| \sim \mathcal{O}(|V|) \ll \frac{1}{2} |V| (|V| - 1)$. 
The presence of noise and the low number of samples makes it difficult to establish whether a conditional dependency between two variables is significant, and therefore it should be represented with an edge in the MRF, or whether it is due to spurious effects, in which case the associated edge should not be present in the MRF. Structure learning in the context of this paper is the problem of identifying and estimating significant structural links while filtering spurious interactions. There are many upsides to the use of sparse graphical models \citep{hastie2015statistical}:

\begin{enumerate}
    \item from the end user point of view they display meaningful relationships and provide interpretable predictive models; 
    
    \item  from a computational point of view they require less storage and computing resources than the dense counterparts and are tractable for exact inference or sampling; 
    
    \item If the dependency network is sparse, then the cliques in the network are of small sizes with respect to the total number of variables and such a joint probability distribution can be calibrated from data even when the number of samples is much lower than the number of variables overcoming the ``curse of dimensionality'' issue \citep{lauritzen1996,barfuss2016parsimonious,Gross2018}. 
    
    \item finally, from a modelling point of view there is the consideration that goes under the name of ``bet on sparsity'' \citep[p. 2]{hastie2015statistical}: ``Use a procedure that does well in sparse problems, since no procedure does well in dense problems''. In other words, when the number of samples is low compared to the number of parameters and the problem is not sparse, no estimation of a large number of parameters is feasible (see also \citet{barfuss2016parsimonious}).
\end{enumerate}

In this paper we provide examples for decomposable Gaussian graphical models constructed on  \mfcf{} addressing the problem of covariance selection and we compare the results with the $L_1$-norm regularization via Graphical Lasso. 
The same machinery introduced for  Gaussian graphical models can be directly used for the more general covariance selection problem. 
Indeed, \mfcf{} can be used as $L_0$-seminorm regularization tools  which generate sparse inverse covariances with direct application to modeling with the whole multivariate elliptical family distribution  \citep{aste2020topological}. We chose the problem of covariance selection, and specifically a comparison with the Glasso, because of the combination of extensive and successful practical use, exhaustive literature and availability of reliable software implementations; therefore the comparison between Glasso and \mfcf{} can benefit from the many different contexts, papers, and software.


\subsection{Contributions}

Structure learning has been studied extensively, however, to the best of our knowledge, the emphasis has been so far on testing  pairwise relationships represented by a single edge, rather than on the detection of higher order topological structures \footnote{Some exceptions are based on frequent itemsets \citep{huang2002constructing} and t-cherry trees \citep{szantai2013discovering}.}. 
In this paper we propose a methodology we name \emph{Maximally Filtered Clique Forest} (\mfcf{})  that exploits the equivalence between 
decomposable graphs and clique forests (see \citet[Par. 2.2.3]{lauritzen1996} and \citet[Th. 4.12]{koller2009probabilistic}) 
and models the clique forest directly, rather than the underlying collection of edges. 

The \mfcf{} works by recursively adding vertices to the cliques of a clique forest using a local topological move, \emph{clique expansion}, that involves a clique and a vertex (see Section \ref{sec:clique:expansion}); at each step the algorithm adds the vertex whose addition maximises the gain of a given score function. The vertex can be added to the clique, effectively extending it, or can be added to a subset of the clique, creating two cliques separated by the subset; consistently with the terminology of graphical models, we call such a subset the \emph{separator}. 
The algorithm is generic, in the sense that it does not rely on a specific score function but it assumes that there is a function that takes in input a clique and a vertex and returns a gain in score and the resulting separator, but, in principle, it can take into account the existing graph structure (the assumption of chordality for the underlying graph is extremely helpful in controlling the  dependency structure).
In this paper we provide some examples related to graphical models and covariance selection problem and we specify what requirements must be fulfilled by a score function to fit into the \mfcf{} algorithm. 
The \mfcf{} has broad applicability beyond graphical modeling as a general methodology to construct clique forests.

\subsection{Organisation}

The structure of this paper is as follows: 
in Section  \ref{sec:literature:review} we briefly review some essential background literature and introduce notations and definitions. 
In Section \ref{sec:methodology} we describe the clique expansion operator, the ideas underlying the gain function and the \mfcf{} algorithm. 
In Section \ref{sec:experiments} we describe a specific application to covariance selection models and compare the results against 
alternative methods. 
Section \ref{sec:conclusions} provides discussion and conclusions.
We provide a vast Appendix (Section \ref{sec:appendix}) containing some additional theorems that are useful, if not strictly required for the problem at hand, as well as a detailed description of the experiments performed and some extra results.

\section{Background literature, notations and definitions} \label{sec:literature:review} 

\subsection{Information filtering with networks: a brief review}

Let us briefly review four approaches to associate, from data, a meaningful network structure to a multivariate system of random variables.
Specifically here we briefly introduce five important topics:  
(1) structure learning algorithms for graphical models;
(2) sparse graphical models and covariance selection; 
(3) information filtering networks;
(4) the Triangulated Maximally Filtered Graph algorithm, which the present paper generalises;
(5) other miscellaneous approaches.

\subsubsection{Structure Learning in Graphical Models} 
General approaches to structure learning in Graphical Models can be classified into 
three main categories \citep{zhou2011structure,lauritzen2012,ScutariStrimmer2011,koller2009probabilistic}: 
\emph{score based}, \emph{constraint based}, and \emph{Bayesian methods}.
Let us here briefly introduce and comment  them one by one.

\paragraph{Score based} algorithms perform structure learning by detecting 
edges or other structures that optimize some global function such as likelihood, 
Kullback-Leibler divergence \citep{kullback1951information}, Bayesian Information 
Criterion (BIC) \citep{schwarz1978estimating}, Minimum Description Length 
\citep{rissanen1978modeling} or the likelihood ratio test statistics 
\citep{petitjean2013scaling}. 
In general, the identification of the structure that optimises the score function results in a  
difficult combinatorial optimization problem \citep[Ch. 20]{koller2009probabilistic} and some 
sort of greedy approach should be implemented to produce a sequence of steps that optimize a limited space of solutions.

One of the leading methods in the score based sparse representation of joint 
probability distributions is the \emph{Chow-Liu} trees (CLT). In the 
original paper, \citet{ChowLiu} proposed a mechanism where a $p$-order 
discrete distribution is approximated by the product of a number of second-order 
distributions. The second order distributions are specified by using the minimum 
spanning tree algorithm (MST) by maximizing the total mutual information of the $p-1$ 
edges. For the MST this maximization coincides with the maximum-likelihood estimation for a tree inference structure. 
\citet{ku1969approximating} extended the CLT for discrete probability distributions by allowing the use of marginal probabilities of order greater than two. They used the Kullback-Leibler divergence as a scoring function. 
To control the complexity of the resulting model they did not use full marginal distributions, but the marginals of small sets of variables (that is probabilities defined on small cliques of variables). 
\cite{huang2002constructing}  built a Chow-Liu Tree and 
successively refined it by adding frequent-itemset in large vertices (Large 
vertex Chow-Liu Tree – LNCLT), which are set of vertices that occur frequently 
together. 
The statistical learning theory of Chow-Liu trees is presented in 
detail in \citet{koskilectures}.

When the goal is to learn a decomposable model, the score based algorithms need 
to fulfil the additional chordality constraint: 
in this area there are a number of methods that efficiently explore the 
graphical structure (directed, in the case of Bayesian networks, or undirected in 
the case of log-linear or multivariate Gaussian models) with the help of suitable graph 
algorithms based on the manipulation of data structures representing junction trees or clique graphs 
\citep{giudici1999decomposable,deshpande2001efficient,petitjean2013scaling}.
\citet{DBLP:journals/corr/KovacsS13} describe a ``pruning'' approach for 
multivariate discrete distributions which removes links 
iteratively refining a junction-tree, optimising the total correlation 
(``information content''). 
\citet{szantai2013discovering} developed an algorithm specialised for a particular clique tree (the 
``t-cherry'' junction tree) and used it to approximate a multivariate discrete 
distribution. 
 To the best of our knowledge all structure learning methods\footnote{With the exception of \citet{szantai2013discovering}.} for decomposable models 
deal with the chordality constraint on an edge-by-edge basis and, differently from the proposed \mfcf{} approach, do not model the clique forest as an aggregation of cliques.

\paragraph{Constraint based} algorithms often start from a complete 
model and adopt a \textit{backward selection} approach by testing the 
independence of vertices conditioned on subsets of the remaining vertices (e.g. 
in the Spirtes-Glymour-Scheines (SGS) and Peter-Clark (PC) 
\citep{spirtes2000causation,zhou2011structure} algorithms) and removing edges 
associated to vertices that are conditionally independent. Conversely \textit{forward selection} algorithms start from a sparse 
model and add edges associated to vertices that are discovered to be 
conditionally dependent. An hybrid model is the Grow-Shrinkage (GS) \ algorithm 
where a number of candidate edges is added to the model (the ``grow'' step) in a 
forward selection phase and subsequently reduced using a backward selection step 
(the ``shrinkage'' step) \citep{margaritis2000bayesian,zhou2011structure}. The 
complexity of checking a large number of conditional independence statements 
makes these methods unsuitable for graphs with a large number of vertices. 
Furthermore, aside from the complexity of measuring conditional independence, 
these methods do not generally optimize a global function, such as likelihood or 
the Akaike Information Criterion \citep{akaike1974new,akaike1998information}, but 
they rather try to exhaustively test all the conditional independence 
properties of a set of data and therefore are difficult to use in a 
probabilistic framework.

\paragraph{Bayesian methods} consider the presence or absence of an edge in 
the inference network structure  as a random variable. More precisely 
\citep{Madigan1995,eaton2012bayesian,scutari2013prior}
the likelihood of a model result from the product of a discrete probability 
distribution over the space of all graphs ($P(G)$) and a continuous distribution 
of the random variables conditional on the graph structure $P(X,G)=P(X 
\vert G)P(G)$.  
Applying Bayes rule, the posterior probability of a graph can be taken as $P(G\vert X) \propto P(X \vert G) P(G)$. Therefore, if the probability on all graphs is the same optimising the probability of the graph is equivalent to optimising the marginal likelihood $P(X \vert G)$ (see for instance \citet{Madigan1995,eaton2012bayesian} and references therein). 
Usually the probability over the graph structure is indeed taken as uniform, 
meaning that each graph is equally probable. 
Conversely, in the present perspective, the \mfcf{} approach introduces a non-uniform prior which is recursively updated after every clique expansion.

\subsubsection{Sparse Graphical Gaussian Models}

When the variables $X_v$ follow a multivariate Gaussian distribution the conditional independence of two variables $X_a$ and $X_b$ is directly reflected
in the inverse of the covariance matrix (or \emph{precision matrix}) $J$: $X_a \ci X_b \mid (X_r, r \in V \setminus \left\lbrace a,b \right\rbrace)
\Longleftrightarrow J_{ab} = 0$. The pattern of missing edges corresponds to conditionally independent variables and it is exactly the same as the zero elements in the precision matrix. The problem of estimating a precision matrix is called ``Covariance selection'' \citep{dempster1972} and 
the associated graphical models are called Gaussian Graphical Models (GGM) or Gaussian Markov Random Fields (GMRF).
In the field of GMRFs there are several approaches 
\citep{d2008first,banerjee2008model,banerjee2006convex} that exploit the link 
between edges and zero-elements of the precision matrix: the general idea is to 
maximise the likelihood of the multivariate normal distribution (which can be 
expressed in terms of the sparse inverse covariance matrix) penalised by a 
non-decreasing function of the number and weight of the non-zero elements in the 
precision matrix.
Specifically, ridge regression uses a $\ell_2$-norm penalty; instead the 
\textit{lasso} method  \citep{Tibshirani1996}  uses an $\ell_1$-norm penalty and 
the elastic-net approach uses a convex combination of $\ell_2$ and $\ell_1$ 
penalties  \citep{zou2005regularization}. These 
approaches are among the best performing regularization methodologies presently 
available. The $\ell_1$-norm penalty term favours solutions with parameters with 
zero value leading to models with sparse inverse covariances. Sparsity is 
controlled by regularization parameters $\lambda_{ij} > 0$; the larger the value 
of the parameters the more sparse the solution becomes.
This approach is extremely popular and around the original idea a large 
body of literature has been published with several novel algorithmic techniques 
that are continuously advancing this method 
\citep{Tibshirani1996,MeinshausenBuehlmann2006,BanerjeeEtAl2008,RavikumarEtAl2011,HsiehEtAl2011,OztoprakEtAl2012} 
 among these the popular implementation \emph{Glasso} (Graphical-lasso)  
\citep{FriedmanEtAl2008} which uses lasso to compute sparse graphical models.
However, Glasso methods are computationally intensive and the selection of the 
penalisation parameter is often difficult to justify as it does not have an immediate link to the data.
Within this framework, the \mfcf{} approach introduced in this paper is a $L_0$-seminorm regularization. 
Compared to Glasso it is computationally more efficient and it has the advantage of returning the maximum likelihood solution of the sparse problem instead of a shrunken solution \citep{aste2020topological}.

\subsubsection{Information Filtering Networks} \label{sec:ifn}
With Information Filtering Networks we refer to a set of network-based approaches aimed at extracting information about the dependency structure from correlation (or, more broadly, similarity) matrices by retaining a sparse network from the most relevant correlations only.
This methodology originated in the Econophysics community where the interest in modelling dependence stems from studies on the spectral properties of the correlation 
matrix of financial portfolios \citep{laloux1999noise}, focusing on cleaning 
methodologies inspired by Random Matrix Theory (RMT) \citep{bun2017cleaning}. An 
alternative approach has been to use tools from topology to investigate the 
structure of financial markets. One seminal idea \citep{mantegna1999} was to 
use the Minimum Spanning Tree algorithm to build a hierarchical tree structure that retains the largest correlations. 
In further developments other topological constraints have been investigated, 
notably imposing the planarity of the filtered network \citep{tumminelloetal2005} 
and studying hyperbolic embeddings \citep{asteetal2005,tumminelloetal2007}. These 
methodologies have enabled the study of several properties of financial portfolios 
 with applications to portfolio diversification 
\citep{pozzi2013spread,musmeci2015risk}, clustering 
\citep{musmeci2015riskcorr,song2012hierarchical} and dynamics of correlation structure in markets
\citep{aste2010correlation}.

\subsubsection{Triangulated Maximally Filtered Graphs}
In \cite{TMFG} we proposed a greedy algorithm that builds a Triangulated Maximally Filtered Graph by recursively adding vertices to a $k$-width tree while minimising a given score function (which in a particular probabilistic application is the Kullbak-Leibler divergence). In \cite{barfuss2016parsimonious} this general algorithm was applied to the approximation of multivariate normal distributions by using the multivariate normal Kullback-Leibler divergence as a scoring function; in the same paper some basic results on Gaussian Markov random fields are used to provide applications to financial portfolio modelling. The TMFG produces planar and chordal networks by restricting the size of the cliques and clique-intersections and by constraining the topology of the clique tree. 
\citet{christensen2018network} carry out a comparison of Glasso and information filtering networks based on TMFG from the point of view of psychometric networks showing that TMFG have better interpretability.
The work in the present paper is a radical generalisation of the TMFG algorithm where the size of the clique is no longer a constraint but an adjustable parameter that can be tuned to the data, and the size and use of separators is driven by the gain in a score function.

\subsubsection{Other approaches}
Of course, there are many more approaches to the problem of structure learning, and we can only hope to hint at a few alternative methods that present less similarities with ours. One reasearch direction is towards spectral sparsification using the concept of \emph{effective resistance} \citep{batson2013spectral,spielman2011graph}. Other approaches are formulated, and make extensive use of, numeric tensor analysis to produce reduces-rank representation of large data represented as tensors \citep{cichocki2018tensor,cichocki2014era,cichocki2015tensor,kolda2009tensor}. Finally, many methods in topological data analysis aim at producing a simplified view of clouds of data points where the persistent topological features represent a version of the data where the noise has been removed \citep{wang2016object,edelsbrunner2000topological,edelsbrunner2008persistent,wasserman2018topological,guskov2001topological}.

\subsection{Notation and Definitions} \label{sec:definitions}

In  this section we provide the minimum amount of notation and results required by the paper, \citet[Chap. 2]{lauritzen1996} provides the full notation and proofs.

\subsubsection{Graphs and Chordal graphs}
A Markov random field is a collection $X_i, i \in \left\lbrace 1, \dots, p \right\rbrace$ of random variables together with
an undirected graph $G=G(V,E)$ with a vertex set $V$ and a edge set $E$. 
We call \emph{vertices} the elements of $V$ and \emph{edges} the elements of $E$.
The variables $X_i$ are in a one-to-one correspondence with the vertices  $V$ so that $|V| = p$. $E$ is a collection of unordered pairs of elements of $V$, e.g. $e \in E \Rightarrow e 
= \{a, b\}$ with $a \in V, b \in V$. 
A graph is chordal when every cycle of length $\ge 4$ has a ``chord'', that is an edge between two 
non-adjacent vertices in the cycle.  Graphical models whose underlying graph is chordal are 
 \emph{decomposable models} \citep[chap. 2]{lauritzen1996}.
 
We  use the notation $\mathbf{X} = \{X_1, \dots, X_p\}$ to indicate the multivariate random variable. When we examine the realisations of the  variables $X_i$, we will use the hat symbol on top of the variable (e.g. $\hat{\mathbf{X}}$) and will use $n$ to refer to the total number of realisations, so that: $\hat{X}^t_i, i \in \left\lbrace 1, \dots, p \right\rbrace, t \in \left\lbrace 1, \dots, n \right\rbrace$ is the $t$-th realisation of the $i$-th variable, and $\mathbf{\hat{X}}^t = \{\hat{X}_1^t, \dots, \hat{X}_p^t\}$. For a given subset $C \subset \{1, \dots, p\}$ we use the shorthand notation $\mathbf{\hat{X}}_C = \{\hat{X}_i, i \in C\}$.

\subsubsection{Clique Forest} 
A \emph{clique} is a maximal complete subset of vertices, that is, where any pair of vertices is joined by an edge, and it is not included in a larger complete set. We indicate with $\CG_G$ (or $\CG$ when there is no ambiguity) the set of cliques of $G$. 

Similarly we introduce the separator set,  $\SG$, made of intersections of the elements of $\CG$. For two cliques $C_a$ and $C_b$ the separator is $S_{ab} = C_a \cap C_b$.
Given a graph $G(V,E)$ with set of cliques $\CG = \left\lbrace C_1, C_2, \dots, C_m \right\rbrace$ of $G$, we say that there is separator between $C_i$ and $C_j$ if $C_i \cap C_j$ is not empty and we denote the set of separators with $\mathcal{S} = \left\lbrace S_1, S_2, 
\dots S_k \right\rbrace$. 
The graph $\KG(\CG, \SG)$ where the vertices are the cliques of $G$ and the edges are the not-empty intersections of the cliques is called the \emph{clique graph  or intersection graph} of $G(V,E)$. 

\begin{definition}[Clique intersection property and clique forest] \label{def:CIP}
A \emph{clique forest} for $G(V,E)$ is a clique graph with no cycles that includes all the cliques of $G$ and that additionally fulfils the \emph{clique-intersection} property (CIP)\citep{blairpeython1992}:

For any two cliques $C_i, C_j \in \CG$ the set $C_i \cap C_j$ is contained in every clique on the path between $C_i$ and $C_j$ in the tree. We will denote a clique forest with $\mathcal{F}(\CG, \SG)$.

When a graph $G=G(V,E)$ has a clique forest we say that the graph has the \emph{CF-property}.

\end{definition}

\begin{example}[Clique forest]
Figure \ref{fig:clique:forest} shows an example of a chordal graph with its associated representation as a clique forest.
(A single clique tree in this case.)
\end{example}

\begin{example}[The four-cycle does not have the CF-property]
The four cycle graph, shown in Figure \ref{fig:clique:forest:4c} is a negative example. Any tree spanning the nodes of the clique graph does not enjoy the running intersection property. The picture should provide an intuitive appreciation of the definition of a clique forest; there should be no ``holes'' in the graph whose boundary is made of cliques.
\end{example}

\begin{figure}[h]
\centering
\begin{subfigure}{.45\textwidth} 
\centering
  \includegraphics[width=.99\linewidth]{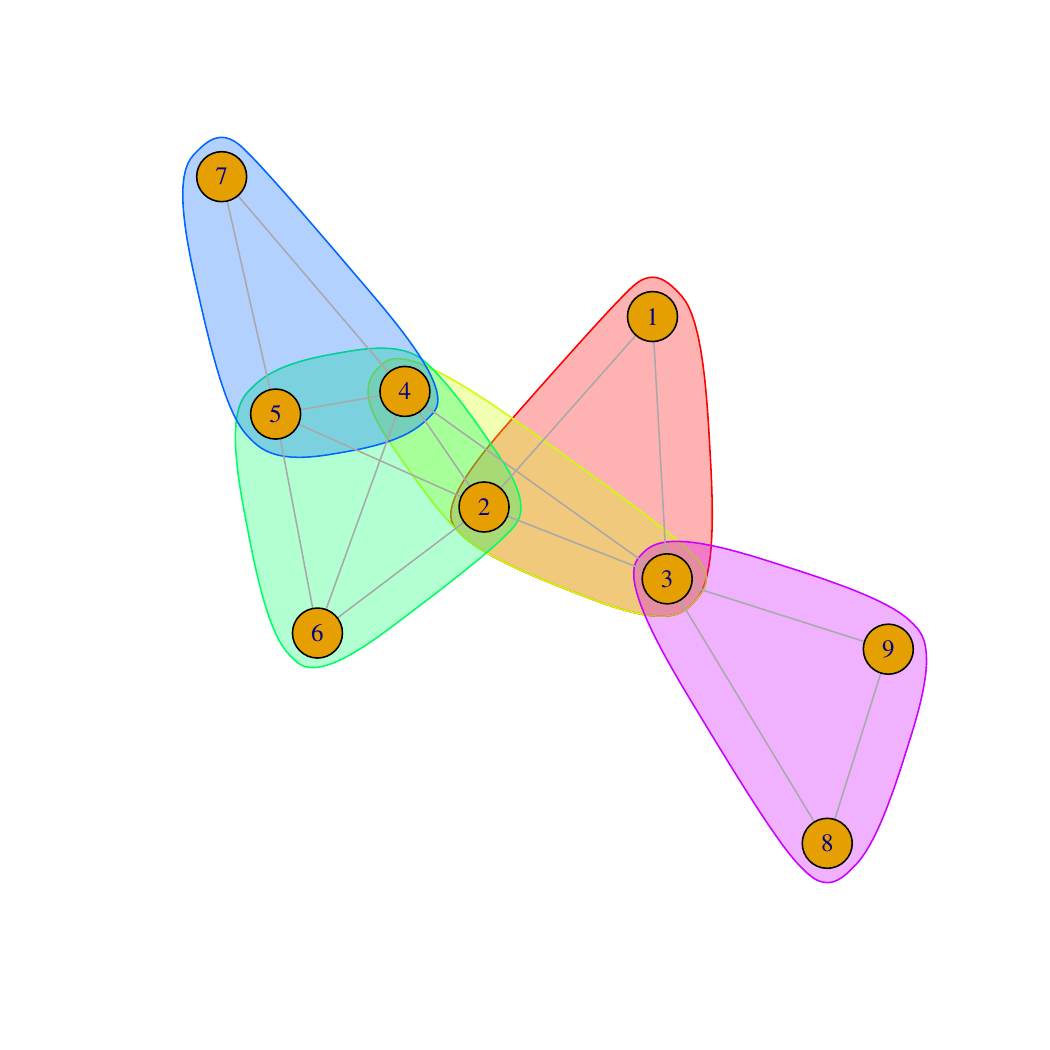}
  \caption[Graph with cliques highlighted]{A chordal graph with the \\ maximal cliques highlighted.}
  \label{fig:cg:1}
\end{subfigure}%
\begin{subfigure}{.47\textwidth}
\centering
  \includegraphics[width=.99\linewidth]{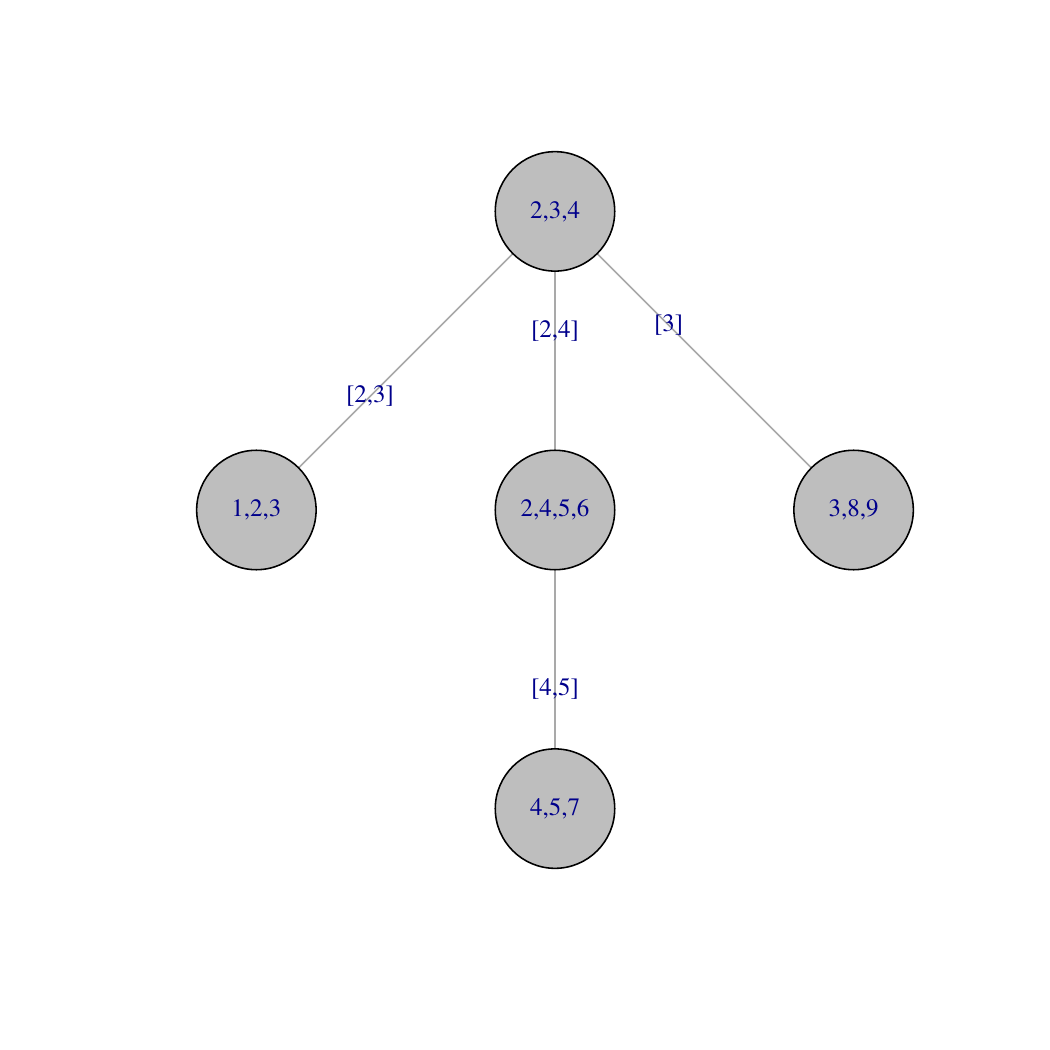}
  \caption[Clique Tree Representation]{The same graph represented as a clique tree. The edges of the tree are labelled with the elements of the intersection.}
  \label{fig:cg:2}
\end{subfigure}

\caption[Chordal graph with clique forest]{Illustration of the relationship between a chordal graph and the associated clique forest.}
\label{fig:clique:forest}
\end{figure}

\begin{figure}[h]
\centering
\begin{subfigure}{.47\textwidth} 
\centering
  \includegraphics[width=.97\linewidth]{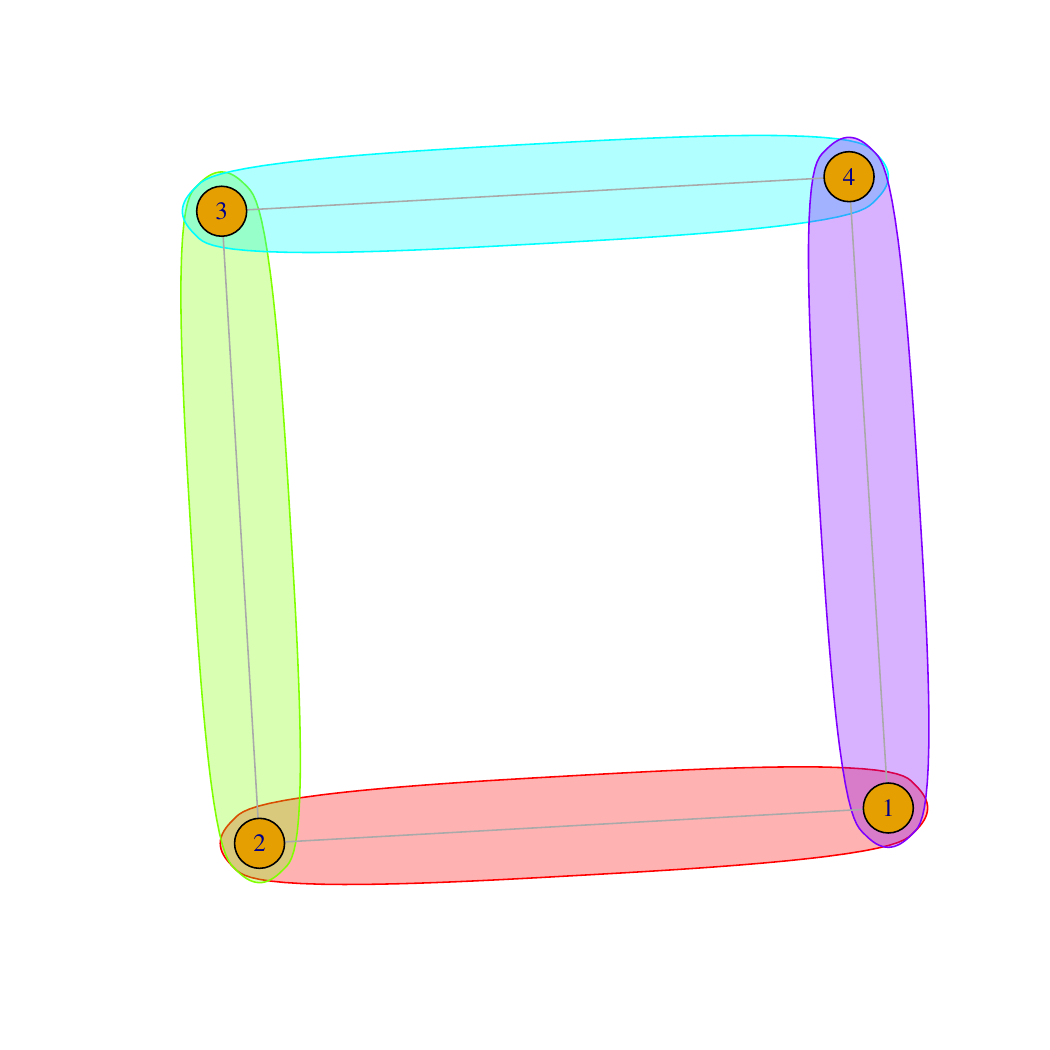}
  \caption[Four cycle graph with cliques highlighted]{The four cycle graph with the \\ cliques highlighted}
  \label{fig:cg:4c1}
\end{subfigure}%
\begin{subfigure}{.47\textwidth}
\centering
  \includegraphics[width=.97\linewidth]{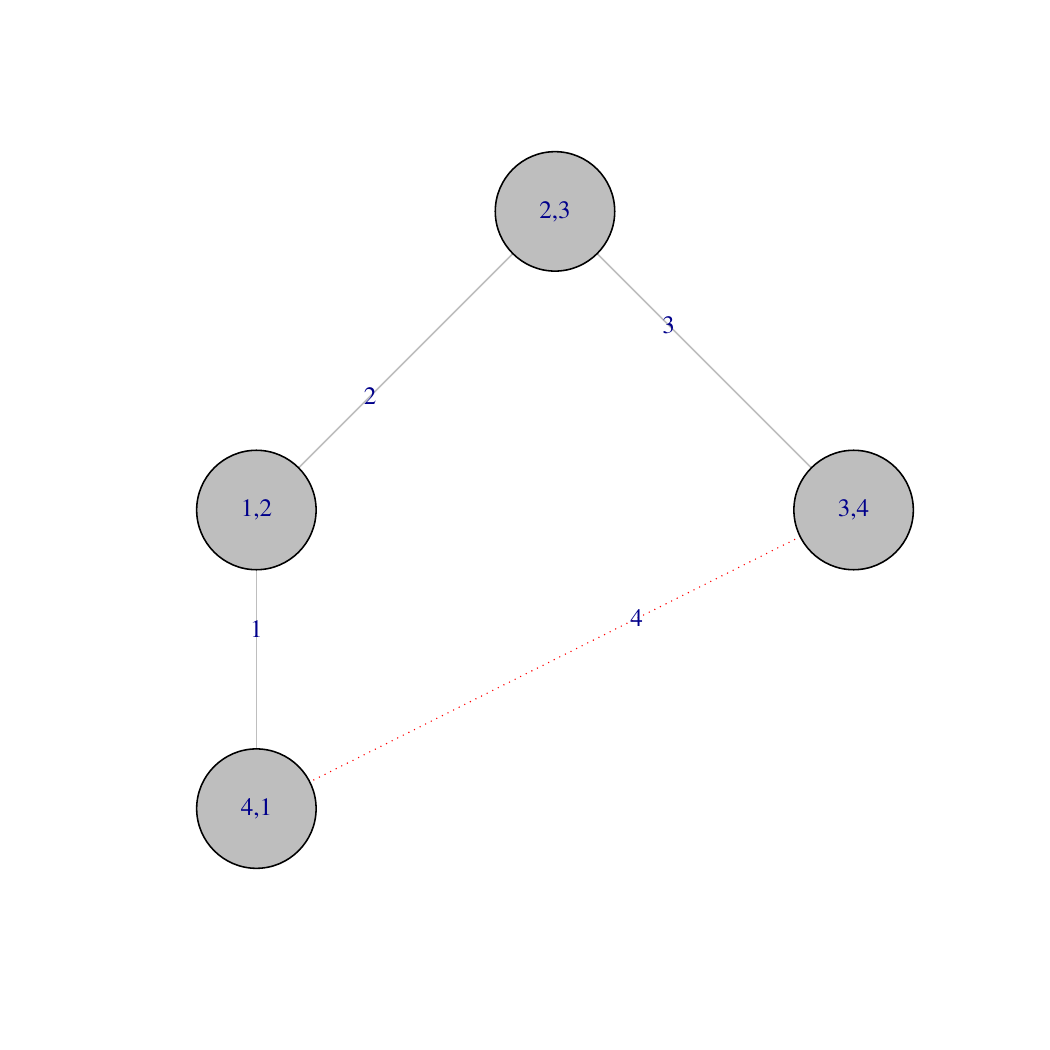}
  \caption[Clique Tree Representation of four cycle]{The four cycle as a clique tree (grey edges only). The edges of the tree are labelled with the elements of the intersection. This tree does not have the clique intersection property: the intersection between $\{4,1\}$ and $\{3,4\}$ ($4$, highlighted with the dotted red edge) is not included in the path between the two nodes along the tree.}
  \label{fig:cg:4c2}
\end{subfigure}

\caption[The four cycle graph does not have a clique forest]{The four cycle graph is the simplest graph that does not have a clique forest.}
\label{fig:clique:forest:4c}
\end{figure}

\begin{definition}[CF-property, CF-invariance]
We will say that a graph that has a clique forest enjoys the \emph{CF-property}. We will also say that any transformation of a graph that conserves the CF-property is \emph{CF-invariant}
\end{definition}

The following theorem \ref{th:chordal:cf} establishes the equivalence between chordality and CF-property for a graph. We will not make use of this theorem directly as we will provide direct demonstrations of the CF-property (see Corollary \ref{th:mfcf:cf:property}) and of the chordality for the networks produced by the \mfcf{} (see Theorem \ref{th:mfcf:chordality}).

\begin{theorem} \label{th:chordal:cf}
A graph $G$ has a clique forest $\mathcal{T}(\CG, \SG)$ if and only if $G$ is chordal. 
\end{theorem}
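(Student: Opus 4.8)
Both implications are classical --- a proof (of a somewhat stronger, more constructive statement) appears in \citet[Th.~3.1]{blairpeython1992} --- and I sketch below the two arguments I would use. Since a graph is chordal iff each of its connected components is, and a clique forest is just the disjoint union of the clique trees of the components (across components every clique intersection is empty and no path joins the cliques, so the clique-intersection property is vacuous there), it is enough to treat the connected case; so assume $G$ is connected and read ``clique forest'' as ``clique tree''.

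\medskip\noindent$(\Leftarrow)$ Suppose $G$ carries a clique tree $\mathcal{T}(\CG,\SG)$; I show $G$ has no chordless cycle of length $\ge 4$. Assume for contradiction that $v_1 v_2\cdots v_k v_1$ with $k\ge 4$ is such a cycle, and for each $i$ put $T_i:=\{\,C\in\CG : v_i\in C\,\}$. The clique-intersection property forces each $T_i$ to induce a connected subtree of $\mathcal{T}$. Every edge of $G$ lies in some maximal clique, so $T_i\cap T_{i+1}\ne\emptyset$; and since a clique containing two vertices makes them adjacent, $T_i\cap T_j=\emptyset$ for non-consecutive $i,j$. In particular $T_1\cap T_3=\emptyset$, so some edge $e$ of $\mathcal{T}$ separates $T_1$ from $T_3$; write $A\ni T_1$ and $B\ni T_3$ for the two components of $\mathcal{T}-e$. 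The connected subtree $T_2$ meets both sides, hence contains both endpoints of $e$. Following the cycle the other way, $T_1,T_k,T_{k-1},\dots,T_4,T_3$ is a chain of connected subtrees with all consecutive intersections non-empty, starting in $A$ and ending in $B$; if no member of the chain contained both endpoints of $e$, some consecutive pair would lie on opposite sides of $e$ and hence be disjoint, which is impossible, so some $T_j$ with $4\le j\le k$ also contains both endpoints of $e$. Then $T_2\cap T_j\ne\emptyset$, contradicting that $v_2$ and $v_j$ are non-adjacent. Hence $G$ is chordal.

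\medskip\noindent$(\Rightarrow)$ Induct on $|V|$; the case $|V|\le 1$ is trivial. If $|V|\ge 2$, by Dirac's lemma a chordal graph has a simplicial vertex $v$, i.e. $N(v)$ is complete, so $K_v:=\{v\}\cup N(v)$ is a clique, maximal since $v$ has no neighbour outside $N(v)$. A simplicial vertex is never a cut vertex (a path through $v$ can be short-circuited across the clique $N(v)$), so $G-v$ is connected; it is also chordal as an induced subgraph, so by induction it has a clique tree $\mathcal{T}'$. Reinsert $v$: if $N(v)$ is already a maximal clique of $G-v$, hence a node of $\mathcal{T}'$, replace that node by the larger clique $K_v$, keeping all tree edges; otherwise $N(v)\subsetneq C$ for some maximal clique $C$ of $G-v$, and attach $K_v$ as a new leaf to $C$ with separator $N(v)$. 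One then checks that the node set of the new tree is exactly $\CG_G$ --- using that a maximal clique of $G-v$ contained in $N(v)$ must equal $N(v)$, which makes the two cases exhaustive and disjoint and keeps every other maximal clique of $G-v$ maximal in $G$ --- and that the clique-intersection property survives, since for any node $A\ne K_v$ one has $K_v\cap A=N(v)\cap A$ (as $v\notin A$) and, in the leaf case, $N(v)\cap A\subseteq C$, so the running-intersection property of $\mathcal{T}'$ transports along the path out of $K_v$ while paths avoiding $K_v$ are untouched. (Alternatively one can get the clique tree non-recursively: any maximum-weight spanning tree of the clique graph $\KG(\CG,\SG)$ with edge weights $|C_i\cap C_j|$ has the clique-intersection property, by a short exchange argument.)

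\medskip
The clean direction is $(\Leftarrow)$: once one observes that a connected subtree meeting both sides of a tree cut must contain the cut edge, the contradiction is immediate. The main nuisance is the bookkeeping in $(\Rightarrow)$ --- enumerating the maximal cliques of $G$ after reinserting $v$ and verifying, case by case, that no newly created separator violates the clique-intersection property; nothing there is deep, but that is the part that needs care.
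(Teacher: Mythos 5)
Your proposal is correct, but note that the paper does not actually prove this theorem at all: its ``proof'' is a one-line deferral to \citet[Th.~3.1]{blairpeython1992} and \citet[Th.~4.3]{koller2009probabilistic}. What you have written is essentially a self-contained reconstruction of that classical argument, and both directions check out. Your $(\Leftarrow)$ direction rests on the standard observation that, under the clique-intersection property, the set of maximal cliques containing a fixed vertex induces a connected subtree of $\mathcal{T}$, and then runs the tree-cut/Helly-type argument to force a chord in any long cycle; your $(\Rightarrow)$ direction is the usual induction via Dirac's simplicial-vertex lemma, with the reinsertion bookkeeping (the only maximal clique of $G-v$ that can lose maximality in $G$ is $N(v)$ itself, which is exactly what separates your two cases) and the verification that the clique-intersection property survives both the node-replacement and the leaf-attachment move; the parenthetical remark about maximum-weight spanning trees of $\KG(\CG,\SG)$ is also a correct, well-known alternative, though you leave its exchange argument unstated. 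So the comparison is simply this: the paper buys brevity by outsourcing the proof to the references, while your sketch supplies the missing content with the same ingredients those references use; nothing in your argument deviates from, or conflicts with, the framework the paper builds on (running intersection property, perfect sequences, simplicial vertices), and the reduction to connected components at the start matches the paper's own footnoted convention on clique forests versus clique trees.
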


\begin{proof}
See \citet[Th. 3.1]{blairpeython1992} or \citet[Th. 4.3]{koller2009probabilistic} for a proof. 
\end{proof}

\FloatBarrier

\subsubsection{Decomposable Graphical Models}


\begin{definition}
Let $A \subset V$, $B \subset V$, $C \subset V$ be three subsets of vertices of a graph $G$. 
We say that the triple $(A, B, C)$ decomposes  $G(V,E)$ if $V = A \cup B \cup C$ and the following conditions hold: $C$ separates $A$ and $B$; $C$ is a complete graph.
\end{definition}

Let us assume that $(A,B,C)$ decomposes $G(V,E)$. 
We associate the random variables $\mathbf{X_A} = \mathbf{X}_{v \in A}$, $\mathbf{X}_B = \mathbf{X}_{v \in B}$, $\mathbf{X}_c = \mathbf{X}_{v \in C}$ if $\mathbf{X}_A$ and $\mathbf{X}_B$ are independent given $\mathbf{X}_C$:
\begin{equation}
\mathbf{X}_A \ci \mathbf{X}_B | \mathbf{X}_C
\end{equation}
we state that the joint distribution of the variables $\mathbf{X}_{v \in V}$ has the \emph{Global Markov Property} 

In this case, where $C = A \cap B$,  (as depicted in Figure \ref{fig:decomposable:graph} in the Appendix) the joint probability distribution can be shown 
\citep[Prop. 3.17]{lauritzen1996} to follow a generalisation of Bayes' rule\footnote{Provided that the probability measure is positive in the product space.}:

\begin{equation} \label{eq:bayes:like}
P(\mathbf{X}_A, \mathbf{X}_B) = \frac{P(\mathbf{X}_A) P(\mathbf{X}_B)}{P(\mathbf{X}_{A \cap B})}
\end{equation}

Equation \ref{eq:bayes:like} and Figure \ref{fig:decomposable:graph} are equivalent  to a tree decomposition of the set $V$ where the tree has two vertices $A$ and $B$ joined by the edge $C = A \cap B$. 

In case $A$ and $B$ are in turn decomposable (or fully connected cliques) it can be shown \cite[Chap. 3]{lauritzen1996} that the joint probability distribution can be further recursively factored into finer decomposable components until we get to the clique set ($\CG$) and  separator set ($\SG$) of $G$:

\begin{equation} \label{eq:lauritzen:formula}
P(\mathbf{X}) = \frac{\prod_{c \in\mathcal{C}} P(\mathbf{X}_c)} {\prod_{s \in\mathcal{S}} P(\mathbf{X}_s)}
\end{equation}

\begin{remark}
Note that in case two sets of variables are disjoint (unconditionally independent) the corresponding separator is the empty set and the tree structure is a forest.
\end{remark}

\begin{remark}
It is possible that a separator appears more than once in a clique forest, for example when it separates more than two cliques. In such a case in our notation the separator set reports the separator more than once, so that the separator multiplicity is automatically taken into account. 
\end{remark}

\section{Methodology} \label{sec:methodology}
\subsection{Generation of Clique Forests: the clique expansion operator} \label{sec:clique:expansion}
In this section we describe the tool for building clique forests that is originally introduced with this paper. 

\paragraph{} The \emph{clique expansion} procedure takes as input a clique $C_a$ and an isolated vertex $v$ and produces a new clique $C_b$ and a separator $S$. 
In general, $S \subseteq C_a$ contains the vertices that produce the largest gain when combined with $v$ and the new clique is $C_b = S \cup v$. 

Figure \ref{fig:edge:expansion} describes the \emph{clique expansion} operation. 
The inputs of the operation are: the clique $C_1 = \left\lbrace 1, 2, 3, 4 \right\rbrace$ and the isolated vertex $\left\lbrace 5 \right\rbrace$. 
Figure \ref{fig:hee:2} shows the output of the clique expansion in the general case: two cliques $C_1$ and $C_2 = \left\lbrace 1, 2, 5 \right\rbrace$ and the separator $S = C_1 \cap C_2 = \left\lbrace 1, 2 \right\rbrace$. 
There are two special cases. 
\begin{enumerate}
\item If none of the elements of $C_a$ have a strong relationship with $v$, then vertex $v$ is not attached and $C_b = \left\lbrace v \right\rbrace$ and $S = \emptyset$ (see Figure \ref{fig:hee:4}).
\item If all the elements in $C_a$ have a strong relationship with the isolated vertex then $C_a$ is extended with $v$ ($C_a \leftarrow C_a \cup v$) and both the separator $S$ and the new clique $C_b$ are empty(see Figure \ref{fig:hee:6}).
\end{enumerate}
We note that $S$, being a subset of a complete graph, is complete, and also that it separates $C_1 \setminus S$ and $C_2 \setminus S$.

\begin{figure}
\centering
\begin{subfigure}{.45\textwidth} 
\centering
  \includegraphics[width=.9\linewidth]{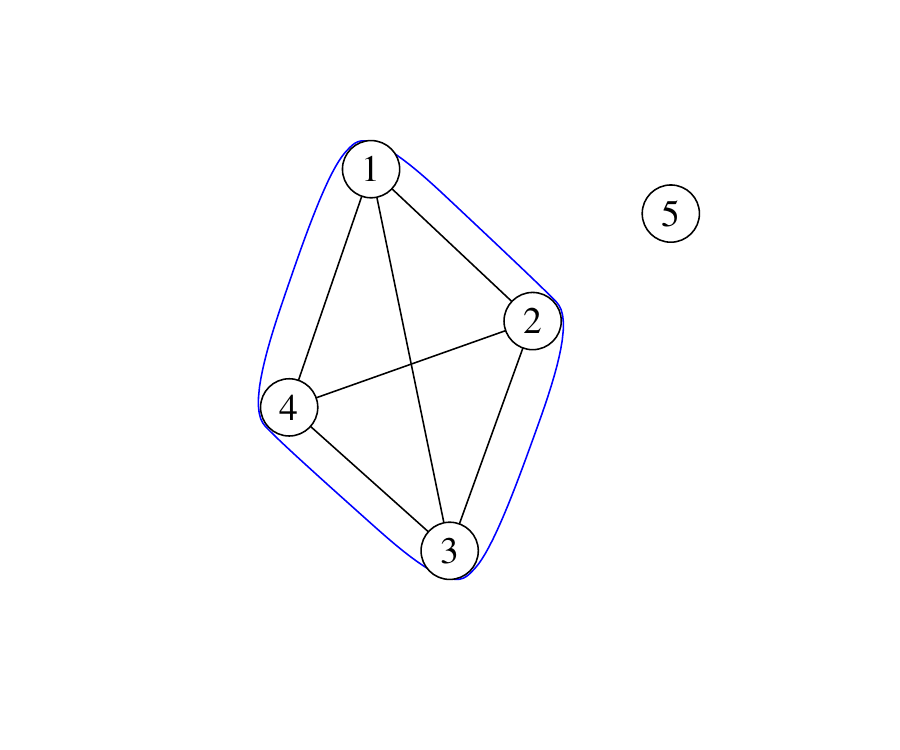}
  \caption{Before clique expansion (general case) \\
  $P(X=x \mid G_a) = \phi_{1234}(X_{1},X_{2},X_{3},X_{4}) \phi_{5}(X_{5})$}
  \label{fig:hee:1}
\end{subfigure}%
\begin{subfigure}{.05\textwidth}
 \centering
	\begin{tikzpicture}
		\draw[->, thick] (0,4) -- (1,4);
	\end{tikzpicture}
\end{subfigure}
\begin{subfigure}{.45\textwidth}
   \centering
  \includegraphics[width=.9\linewidth]{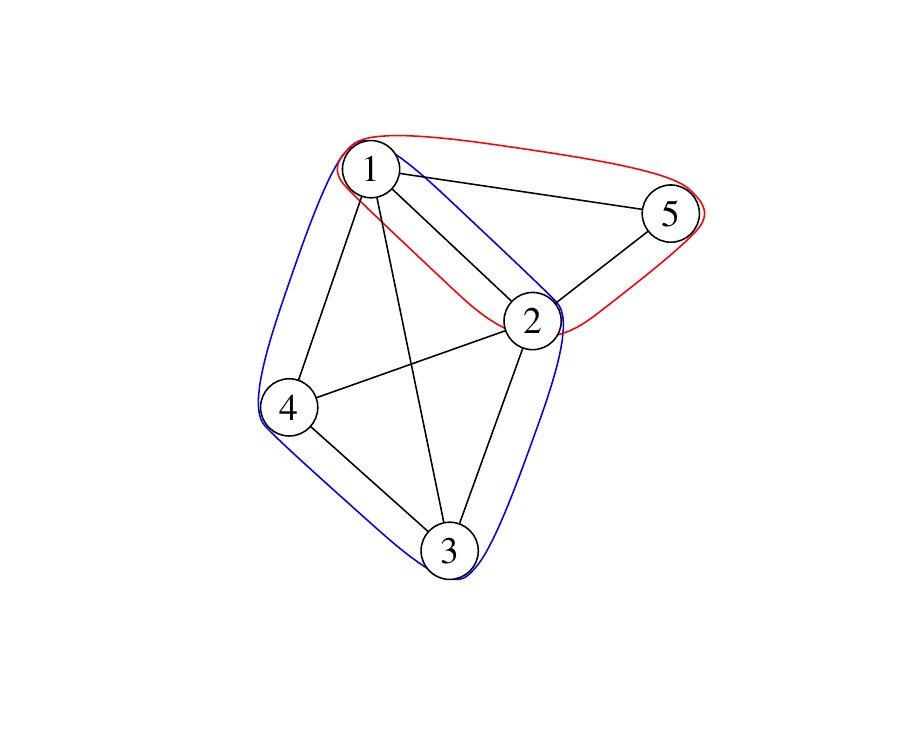}
  \caption{After clique expansion (general case), \\
  $P(X=x \mid G_b) =\frac{\phi_{1234}(X_{1},X_{2},X_{3},X_{4}) 
\phi_{125}(X_{1},X_{2},X_{5})}{\phi_{12}(X_{1},X_{2})}$\\
$S = \left\lbrace 1,2 \right\rbrace$}
  \label{fig:hee:2}
\end{subfigure}

\begin{subfigure}{.45\textwidth} 
\centering
  \includegraphics[width=.9\linewidth]{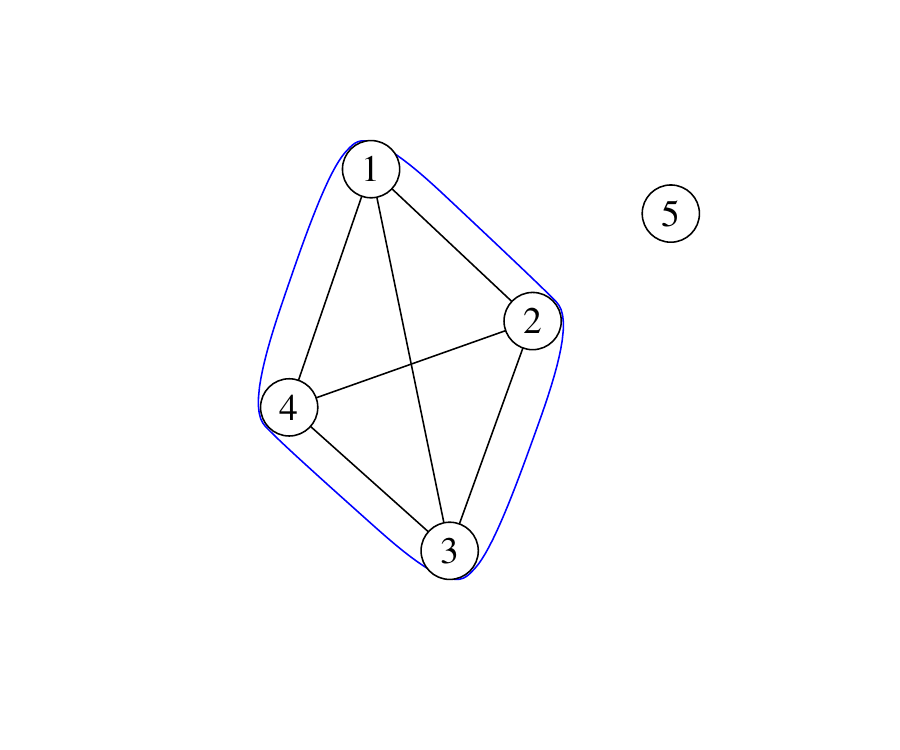}
  \caption{Before clique expansion (isolated vertex case) \\
  $P(X=x \mid G_a) = \phi_{1234}(X_{1},X_{2},X_{3},X_{4}) \phi_{5}(X_{5})$}
  \label{fig:hee:3}
\end{subfigure}%
\begin{subfigure}{.05\textwidth}
 \centering
	\begin{tikzpicture}
		\draw[->, thick] (0,4) -- (1,4);
	\end{tikzpicture}
\end{subfigure}
\begin{subfigure}{.45\textwidth}
   \centering
  \includegraphics[width=.9\linewidth]{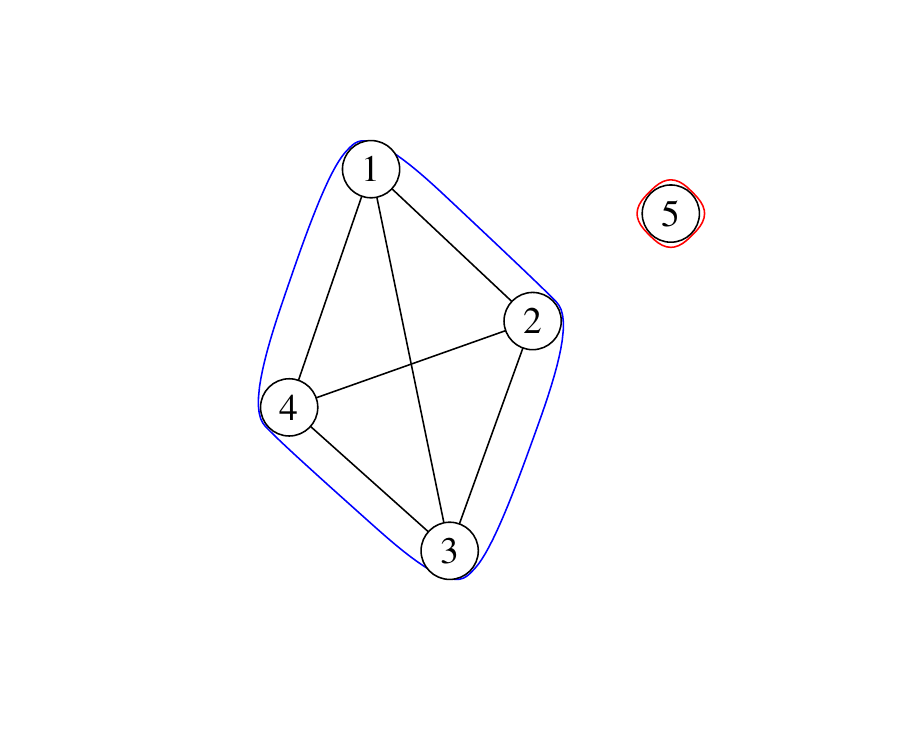}
  \caption{After clique expansion (isolated vertex case), \\
  $P(X=x \mid G_b) =\phi_{1234}(X_{1},X_{2},X_{3},X_{4}) \phi_{5}(X_{5})$\\
$S = \emptyset$}
  \label{fig:hee:4}
\end{subfigure}

\begin{subfigure}{.45\textwidth} 
\centering
  \includegraphics[width=.9\linewidth]{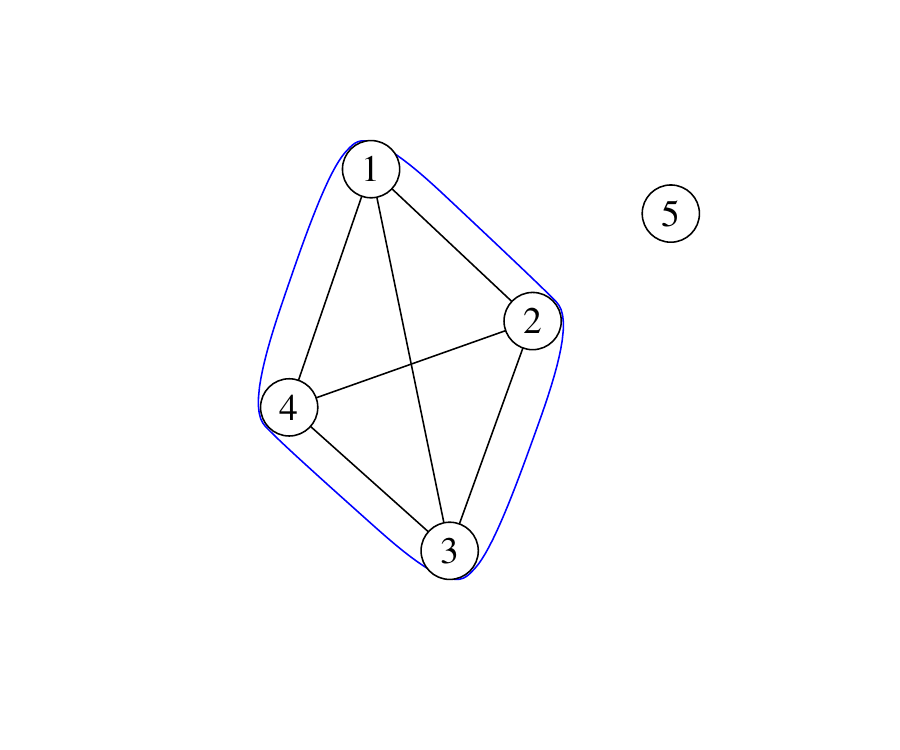}
  \caption{Before clique expansion (full expansion) \\
  $P(X=x \mid G_a) = \phi_{1234}(X_{1},X_{2},X_{3},X_{4}) \phi_{5}(X_{5})$}
  \label{fig:hee:5}
\end{subfigure}%
\begin{subfigure}{.05\textwidth}
 \centering
	\begin{tikzpicture}
		\draw[->, thick] (0,4) -- (1,4);
	\end{tikzpicture}
\end{subfigure}
\begin{subfigure}{.45\textwidth}
   \centering
  \includegraphics[width=.9\linewidth]{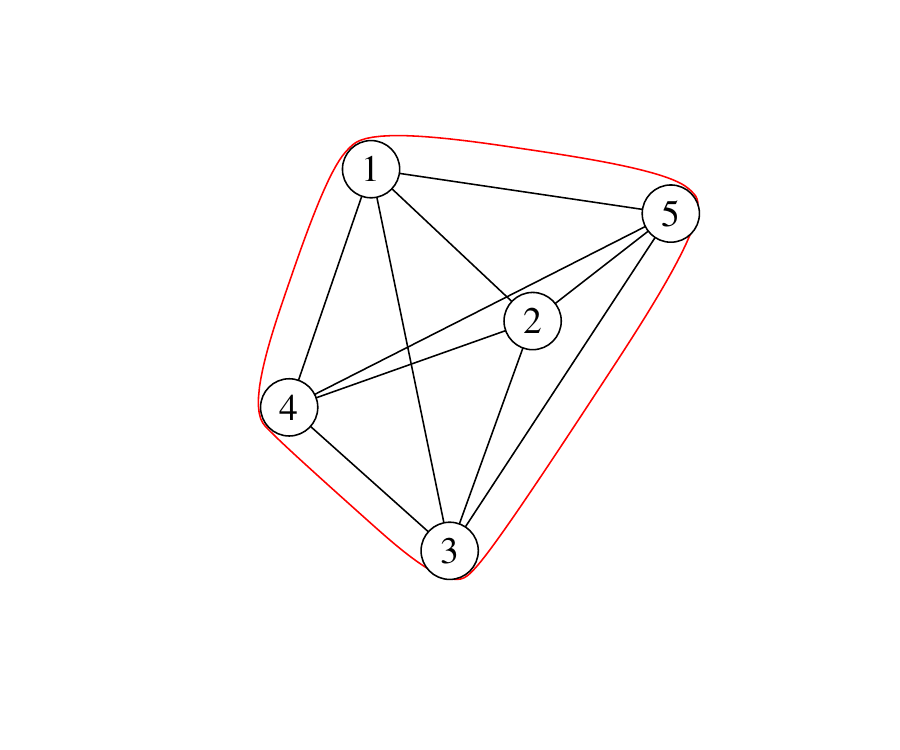}
  \caption{After clique expansion (full expansion), \\
  $P(X=x \mid G_b) =\phi_{12345}(X_{1},X_{2},X_{3},X_{4}, X_{5})$\\
$S = \emptyset$}
  \label{fig:hee:6}
\end{subfigure}

\caption{Illustration of the clique expansion operator. 
If $S$ is a proper subset of  $C_1$ the operation produces two cliques and a separator; if $S = \emptyset$ the result produces two 
disconnected cliques $C_1$ and $C_2 = \left\lbrace 5 \right\rbrace$ and the separator is the empty set if $S = C_1$ the operation purely expands the original clique with the new vertex and does not introduce a new clique or separator.}
\label{fig:edge:expansion}
\end{figure}

\begin{theorem}[CF-invariance of the clique expansion operator]\label{th:cf:inv:ceo}
 The clique expansion operator is CF-invariant.
\end{theorem}
\begin{proof}
Let $\mathcal{F}_{n-1}$ be a clique forest: in particular, this means that $\mathcal{F}_{n-1}$ has the CIP (as per Definition \ref{def:CIP}). 
We build a new clique forest $\mathcal{F}_{n}$ applying the clique expansion operator and adding a new node $n$ to a clique $C_m \in \mathcal{F}_{n-1}$. We want to show that $\mathcal{F}_{n}$ has the CIP. \\
Let us denote with $C_n$ the new clique created by the clique expansion operator.

In the general case (figure \ref{fig:hee:2}) a new clique  $C_n \subset C_m \cup \left\lbrace n \right\rbrace$ is introduced and the new node $n$ does not belong to any clique intersection, therefore any node belonging to the intersection of $C_n$ with any other clique $C_{m'}$ must belong to $C_m$ as well. Due to the inductive hypothesis, given any other clique $C_{m'} \in \mathcal{F}_{n-1}$, there is a path $C_m - C_{m_1} - \cdots - C_{m_k} - C_{m'}$ that contains all the nodes in $C_m \cap C_{m'}$. The path $C_n - C_m - C_{m_1} - \cdots - C_{m_k} - C_{m'}$ therefore connects $C_n$ to $C_{m'}$ in $\mathcal{F}_{n}$ and fulfills the CIP.

In the case (figure \ref{fig:hee:4}) when $n$ is added as new disconnected clique there is nothing to prove as the CIP follows from the inductive hypothesis.

In the case of full expansion (figure \ref{fig:hee:6}) the clique $C_n = C_m \cup \left\lbrace n \right\rbrace$ replaces $C_m$ and the new node $n$ does not belong to any clique intersection, therefore any node belonging to the intersection of $C_n$ with any other clique $C_{m'}$ must belong to $C_m$ as well. Due to the inductive hypothesis, given any other clique $C_{m'} \in \mathcal{F}_{n-1}$, there is a path $C_m - C_{m_1} - \cdots - C_{m_k} - C_{m'}$ that contains all the nodes in $C_m \cap C_{m'}$. The path $C_n - C_{m_1} - \cdots - C_{m_k} - C_{m'}$, obtained by replacing $C_m$ with $C_n$, \emph{a fortiori} connects $C_n$ to $C_{m'}$ in $\mathcal{F}_{n}$ and fulfills the CIP.

\end{proof}

\begin{theorem}[Decomposibility invariance of the clique expansion operator]
Let $\mathcal{F}_{n-1}$ be a clique forest underlying a graphical model with associated random variables $X_1, ..., X_{n-1}$ with underlying density $\phi_{X_{1},...X_{n-1}}(X_{1},...X_{n-1})$. If we add one random variable $X_n$ to the graphical model and use the clique expansion operator to attach the associated node to the graph, the density of the updated graphical model is $$\phi_{X_{1},...X_{n}}(X_{1},...X_{n}) = \phi_{X_{1},...X_{n-1}}(X_{1},...X_{n-1}) \frac{\phi_{X_{C_n}}(X_{C_n})} {\phi_{X_S}(X_S) }$$ where $C_n$ and $S$ are respectively the clique and separator introduced by the clique expansion operator.

\end{theorem}
\begin{proof}
The proof is immediate since the separator $S$ is complete by construction and separates the graph. In cases where the network is fully built using the clique expansion operator decomposition shown in Equation \ref{eq:lauritzen:formula}
follows by induction.\end{proof}

\subsection{The MFCF algorithm} \label{sec:mfcf:algorithm}
The \mfcf{} algorithm that we introduce in this paper is a generalisation to clique forests of Prim's
minimum spanning tree algorithm (\cite{prim1957}). Prim's algorithm 
constructs the Minimum Spanning Tree tree starting with an arbitrary vertex and 
adds the closest (i.e. with minimal edge weight) unconnected vertex. In case the
graph is not connected, for instance when some subsets are at infinite distance, 
the algorithm can be applied to the distinct connected components to produce a 
minimum spanning forest. 
\citep[Ch. 4]{blairpeython1992} illustrates the connection between Prim's algorithm and the maximum cardinality search algorithm (MCS, \citet{tarjan1976maximum}), used to test graph chordality.

In our generalisation the vertices are replaced by cliques and we optimise a 
gain function, rather than an edge weight; depending on the function we might 
look for the minimum (e.g. minimum cost) or the maximum (e.g. maximum 
gain). The algorithm starts by selecting one or more cliques and, at each 
stage, one of the unconnected vertices is added to the clique forest by 
performing an edge expansion. The vertex is chosen so as to optimise the scoring 
function. The initial clique(s) can be chosen with an heuristic (as in the 
variant of the algorithm presented here) or they could be assumed as given from 
previous knowledge or expert judgement. For instance  in 
genetic regulatory networks there is interest in incorporating certain 
topological \emph{motifs} that are known to appear frequently in this kind of 
networks \citep{fiori2012topology}. In this case the cliques provided must be a clique forest.

\begin{algo}
MFCF: Builds a clique forest with given  clique size range.
\end{algo}

\noindent\textbf{Description:} 
Builds a clique forest by applying the clique expansion operator repeatedly until there are no more outstanding vertices. For performance reasons, the algorithm maintains a gain table that holds the possible scores for any combination of cliques already added to the forest and the outstanding vertices.

\noindent\textbf{Input:}
\begin{itemize}
	\itemsep0em
	\item[] \algvar{W} [mandatory]: Either a data matrix with $n$ rows of $p$-variate observations (e.g. time series of stock market returns) or a $p$-by-$p$ similarity matrix (e.g. correlation matrix of returns).
	\item[] \algvar{gain\_function} [mandatory]: a function that calculates the gain of a clique expansion based on \algvar{W}.
	\item[] \algvar{max\_cl\_size} [optional]: size of maximal clique (default value: 4, range $= [2, p]$).
	\item[] \algvar{min\_cl\_size} [optional]: size of minimal clique (default value: 4, range $= [1, max\_clique\_size]$).
	\item[] \algvar{reuse\_separators} [optional]: whether to use separators more than once (default value: TRUE).
	\item[] \algvar{C\textsubscript{I}}, \algvar{S\textsubscript{I}}  [optional]: Initial list of cliques and separators.
\end{itemize}

\noindent\textbf{Output:}
\begin{itemize}
	\itemsep0em
	\item[] \algvar{cliques}: list of cliques of the clique forest, ordered as a perfect sequence of sets.
	\item[] \algvar{separators}: list of separators of the clique forest.
	\item[] \algvar{tree}: topological description of the clique forest.
\end{itemize}

\noindent\textbf{Algorithm:}
\begin{itemize}

	\item[S1.] [Initialize] 
	$p \leftarrow $ number of variables. 
	$cliques \leftarrow \emptyset$.
	$separators \leftarrow \emptyset$.
	$outstanding\_vertices \leftarrow \left\lbrace 1, \dots, p \right\rbrace$
	
	\item[S2.] [Initialize list of cliques]\\
	- If $C_I$ is  empty \\
	\phantom{- } - $C_1 \leftarrow FirstClique()$ \\ 
	\phantom{- } - $cliques \leftarrow C_1$ \\
	\phantom{- } - $outstanding\_vertices \leftarrow outstanding\_vertices \setminus \left\lbrace v, v \in C_1 \right\rbrace $.\\
	- Else \\
	\phantom{- } - $cliques \leftarrow C_I$, \\
	\phantom{- } - $separators \leftarrow S_I$, \\
	\phantom{- } - $outstanding\_vertices \leftarrow outstanding\_vertices \setminus  \left\lbrace v, v \in C_I \right\rbrace $.

	\item[S3.] [Init Gain Table].
	For every $v \in oustanding\_vertices$ and every $C \in cliques$, calculate score and optimal separator for $C$ and $v$ and add to gain table.
	
	\item[S4.] [Check for termination]. 
	If $ outstanding\_vertices = \emptyset$ then return $cliques$, $separators$, $tree$.
	
	\item[S5.] [Get best possible expansion]. 
	Select from gain table the clique $C_a \in cliques$, separator $S \subset C_a$ and vertex $v \in outstanding\_vertices$ corresponding to the entry with the highest score.
	
	\item[S6.] [Create new clique / separator]. \\
	 - If $S$ is a proper subset of $C_a$ then \\
	 \phantom{- } $C_b \leftarrow S \cup v$\\
	 \phantom{- } $cliques \leftarrow cliques \cup C_b$\\
	 \phantom{- } $separators \leftarrow separators \cup S$.\\
	 - If $S = C_a$ (extension without new separators) then \\
	 \phantom{- } $C_a \leftarrow C_a \cup v$. \\
	 - If $S = \emptyset$ (disconnected cliques) then \\
	 \phantom{- } $C_b \leftarrow v$\\
	 \phantom{- }  $cliques \leftarrow cliques \cup C_b$.	
	
	\item[S7.] [Update outstanding vertices, tree]. 
	$outstanding\_vertices = outstanding\_vertices \setminus v$. \\
	Set the edge between $C_a$ and $C_b$ to be the separator: $tree(C_a, C_b) \leftarrow S$
	
	 \item[S8.] [Update gain table]. 
	 Delete from gain table all entries where the vertex is $v$. \\
	 Add to gain table entries with gains for $C_b$.\\
	 If $reuse\_separators$ is false, delete from the gain table where the separator is $S$.\\
	 Update gains for $C_a$.
	
     \item[S9.] [Close loop]. 	
	 Return to [S4.].\\
	  
\end{itemize}

\begin{remark}
The function $FirstClique()$ provides an estimate of the best first clique. It can be obtained by starting with a clique made of the two vertices (or a larger clique) with the largest gain and growing it using the clique expansion operator until it reaches the minimum size required. 
\end{remark}

The  \mfcf{} algorithm is a radical extension of the TMFG algorithm proposed in  \cite{TMFG} which build a planar graph  applying a special case of clique expansion operator ($T_2$ in that paper) with the following constraints: (a) the maximum clique size is 4, (b) the minimum clique size is also 4, and (c) a separator are triangles and they can be used only once. 
Note that the MST is also retrievable with the \mfcf{} algorithm by setting  the maximum clique size to 2, the minimum clique size also at 2, and separators can be used more than once.
Interestingly, there is another class of networks, which lies between the MST and the TMFG, which is constituted of  triangular cliques and edge separators and has not been explored yet in the literature. 


A straightforward application of Theorem \ref{th:cf:inv:ceo} shows that the cliques produced by the \mfcf{} are clique forests.

\begin{corollary}[The \mfcf{} produces clique forests] \label{th:mfcf:cf:property}
The networks produced by the \mfcf{} have the CF-property.
\end{corollary}
\begin{proof}
This results from the fact that the \mfcf{} applies the clique expansion operator, which is CF-invariant by virtue of Theorem \ref{th:cf:inv:ceo}.
\end{proof}

\subsection{Gain Functions} \label{sec:gain:function}

An advantage of the formulation in terms of clique trees is that we can use any 
multivariate function as a scoring function. Specifically, we aim for a 
function that, given a set of random variables $\mathbf{X}_c$ produces a score which is a 
measure of the strength of association between the variables, taking into 
account local interactions through the separator.

\begin{definition}
The \emph{gain} is the increase in score that results from a clique expansion. 
\end{definition}

We  also want to require the score to have a  validation procedure, so that we can test significance. 
Significance can be validated both through a parametric statistical test under some assumptions or non-parametrically via permutation test. Further, cross-validation can be implemented by computing the gain on the validation sample using the function estimated on the train sample.
In many cases the contribution to the score is made up of a positive contribution due to the introduction of a new clique and a negative correction 
from the separator. 
The meaning of the negative correction is sometimes related to double counting and sometimes related to conditioning. 

Let us here exemplify the above by introducing two gain functions: one based on scores from a similarity matrix, and a second one based on log-likelihood with an explicit formulation for the gaussian case.

\subsubsection{Gain Function from Similarity Matrix} 
As discussed in \citet{TMFG} there are applications where it is required to build a network that maximises the sum of the weights of a 
similarity matrix subject to some constraint. 
Examples are the correlation networks mentioned in Section \ref{sec:ifn} \citep{mantegna1999,Eurodollar02,InterestRates05,tumminelloetal2005,asteetal2005,tumminelloetal2007,aste2010correlation,song2012hierarchical,pozzi2013spread,musmeci2015risk,musmeci2015riskcorr}.

Let us define a symmetric matrix of weights $W$, where $w_{ij}$ quantifies the ``similarity'' of elements $i$ and $j$ and $w_{i,i}=0$. If $C$ is a subset of the row indices of $W$ we define $Score(C) = \sum_{i \in C, j \in C} W_{ij}$.


The gain function returns the best available separator that, joined 
with a vertex, gives the highest possible sum of the weights.
In this case the total score is the sum of the weights of the cliques minus the 
sum of the weights of the separators. The total score is given by 
\begin{equation}
Score = \sum_{c \in \mathcal{C}} \sum_{i \in c, j \in c} W_{ij} - \sum_{s \in 
\mathcal{S}} \sum_{i \in s, j \in s} W_{ij}
\end{equation}
When we perform a clique expansion and introduce a new clique $\tilde{c}$ and 
a new separator $\tilde{s}$ the corresponding gain in score is:
\begin{equation}
G(\tilde{c}, \tilde{s})  = \sum_{i \in \tilde{c}, j \in \tilde{c}}W_{ij} - \sum_{i \in 
\tilde{s}, j \in \tilde{s}}W_{ij}
\end{equation}
In the special case when the clique expansion results in the extension of a previous clique, such that $\tilde{C} = C \cup v$, the gain is the difference in score between the new clique and the old one (the separator is obviously zero):
\begin{equation}
G(\tilde{c}, c)  = \sum_{i \in \tilde{c}, j \in \tilde{c}}W_{ij} - \sum_{i \in 
c, j \in c}W_{ij} = \sum_{i \in \tilde{c}}W_{iv} 
\end{equation}
One might also add a form of validation to this gain function and add only the edges with weights that are significantly larger than zero and exceed a 
given threshold either in-sample or off-sample (cross validation).

\subsubsection{Gain function from log-likelihood} Equation 
\ref{eq:lauritzen:formula} is a likelihood for a given realisation $\mathbf{X}=\realz{x}$:

\begin{equation} \label{eq:lauritzen:formula:L}
\mathcal{L}(\mathbf{X} = \realz{x} \vert \left\lbrace c \in \mathcal{C}\right\rbrace, 
\left\lbrace s \in \mathcal{S}\right\rbrace) = \frac{\prod_{c \in\mathcal{C}} 
P(\mathbf{X}_c = \realz{x}_c)} {\prod_{s \in\mathcal{S}} P(\mathbf{X}_s = \realz{x}_s)}
\end{equation}

and accordingly the log-likelihood is:

\begin{equation} \label{eq:lauritzen:formula:l}
\ell(\mathbf{X}=\realz{x} \vert \left\lbrace c \in \mathcal{C}\right\rbrace, \left\lbrace s \in 
\mathcal{S}\right\rbrace) = 
\sum_{c \in\mathcal{C}} \log P(\mathbf{X}_c=\realz{x}_c) - \sum_{s \in\mathcal{S}} \log 
P(\mathbf{X}_s=\realz{x}_s)
\end{equation}

When we add a new clique $\tilde{c}$ and a new separator $\tilde{s}$ the gain in log-likelihood is:

\begin{equation}
G(\tilde{c}, \tilde{s}) = \log P(\mathbf{X}_{\tilde{c}} = \realz{x}_{\tilde{c}}) -  \log 
P(\mathbf{X}_{\tilde{s}}=\realz{x}_{\tilde{s}})\;.
\end{equation}

Instead, when we add a new clique $\tilde{c}$ by expanding an existing one $c$ the 
gain in log-likelihood is:

\begin{equation}
G(\tilde{c}, c) = \log P(\mathbf{X}_{\tilde{c}} = \realz{x}_{\tilde{c}}) -  \log 
P(\mathbf{X}_{c}=\realz{x}_{c})\;.
\end{equation}

It is possible to add a significance test to this gain function  since the model with the additional clique and separator are nested and the difference in log-likelihood is one-half of the \emph{deviance} \citep{Wasserman:2010:SCC:1965575}. 
Under some relatively mild assumptions the deviance is asymptotically distributed as a chi-squared variable with $k$ degrees of freedom, where $k$ is the number of edges added to the model with the clique expansion \citep[Ch. 5.2.2]{lauritzen1996}.
Other possible siginificance tests could be a cross-validation on a different set or an information criteria such as AIC or BIC (\cite{akaike1974new,schwarz1978estimating}).

When a test statistic is available it is conceivable to use the $p$-value as a gain function. The intuitive meaning is to build a network where the links of greatest statistical significance are added first.

From a Bayesian perspective, here we are  maximizing posterior probability $P(G|X)$ updating recursively the graph $G$ with the clique expansion move. 
In this framework, one starts with a model which assumes independent variables except for the initial clique $C_I$. Then, the gain table allows to chose the clique expansion operation which maximizes posterior probability by including one extra variable into the dependency structure. 
Given the greedy, recursive nature of the algorithm there is no guarantee to end at the global maxima. Nonetheless, this is an effective way to  estimate inference structures with high posterior probabilities solving efficiently a problem which is otherwise NP-complete.

\subsubsection{Gain Function from log-likelihood for the Multivariate Normal Distribution} \label{sec:gaussian:models}
In the important specific case of a $p$-variate normal distribution the log-likelihood function for a given clique forest structure $\HT$ can be written, using Equation \ref{eq:lauritzen:formula:l}:

\begin{multline} \label{eq:lauritzen:formula:l:gaussian}
\ell(\mathbf{X}
=\realz{x} \vert \left\lbrace c \in \mathcal{C}\right\rbrace, \left\lbrace s \in \mathcal{S}\right\rbrace) 
= - \frac p2 \ln(2\pi) \\ 
 +  \frac{1}{2}  \sum_{c \in\mathcal{C}} \left(\ln \vert J_c \vert - (\realz{x}_c - \mu_c)^t J_c (\realz{x}_c -\mu_c) \right) \\
  -  \frac{1}{2}  \sum_{s \in\mathcal{S}} \left( \ln \vert J_s \vert - (\realz{x}_s - \mu_s)^t J_s
(\realz{x}_s -\mu_s) \right) \\ 
 =  - \frac p2 \ln(2\pi) 
 + \frac{1}{2}  \sum_{c \in\mathcal{C}} \left( \ln \vert J_c \vert - \Tr(\hat{\Sigma}_c J_c) \right) 
  - \frac{1}{2}  \sum_{s \in\mathcal{S}} \left( \ln \vert J_s \vert - \Tr(\hat{\Sigma}_s J_s) \right)
\end{multline}

where 

\begin{enumerate}
	\item $\hat{\Sigma}_c$ (resp. $\hat{\Sigma}_s$) is the sample covariance matrix of the variables $\mathbf{X}_c$ (resp. $\mathbf{X}_s$) , and 
	\item $J_s$ (resp. $J_c$) is the inverse covariance matrix (precision matrix). 
\end{enumerate}

For a given clique forest structure the  likelihood is maximised by $J_c = \hat{J}_c = \hat{\Sigma}^{-1}_c$ and $J_s = \hat{J}_s = \hat{\Sigma}^{-1}_s$. In this case we have  $\sum_{c \in\mathcal{C}} \Tr(\hat{\Sigma}_c \hat{J}_c)- \sum_{s \in\mathcal{S}} \Tr(\hat{\Sigma}_s \hat{J}_s) = p$ and do not change with the application of the clique expansion operator and therefore Equation \ref{eq:lauritzen:formula:l:gaussian} can be simplified to:

\begin{equation}
\ell(\mathbf{X} = \realz{x} \vert \left\lbrace c \in \mathcal{C}\right\rbrace, \left\lbrace s \in 
\mathcal{S}\right\rbrace) =
-\frac p2 \ln(2\pi) + \sum_{c \in\mathcal{C}}  \frac{1}{2} \ln \vert \hat{J}_c \vert  
  -\sum_{s \in\mathcal{S}}  \frac{1}{2} \ln \vert \hat{J}_s \vert + \frac p2
\end{equation}

where the maximum likelihood estimations of the matrices $\hat{J}_c$ and $\hat{J_s}$ depend on the observations $\realz{x}$ for both the structure and their values. 

When we perform a clique expansion and introduce a new clique $\tilde{c}$ and 
a new separator $\tilde{s}$ the corresponding gain in score is:

\begin{equation} \label{equation:delta:l1:gaussian}
G(\tilde{c}, \tilde{s}) = 
  \frac{1}{2} \left(  \ln \vert {\hat J}_{\tilde{c}} \vert  -
  \ln\vert \hat J_{\tilde{s}} \vert \right)    =   \frac{1}{2} \left(  - \ln \vert \hat \Sigma_{\tilde{c}} \vert  +
  \ln \vert \hat \Sigma_{\tilde{s}} \vert \right)
\end{equation}

Instead, when a clique is expanded ($\tilde{c} = c \cup v$) the corresponding gain in score can be expressed as:
 
\begin{equation} \label{equation:delta:l2:gaussian}
G(\tilde{c}, c) =
  \frac{1}{2} \left(  \ln \vert \hat J_{\tilde{c}} \vert  -
  \ln\vert \hat J_{c} \vert \right) 
   =   \frac{1}{2} \left(  - \ln \vert \hat \Sigma_{\tilde{c}} \vert  +
  \ln \vert \hat \Sigma_{c} \vert \right)
\end{equation}

Note that this can be interpreted as an increase in likelihood or as a decrease 
in \emph{entropy}. In this case beside the asymptotic tests of the  
log-likelihood ratio, there are also several small sample tests that work in the 
``big data'' cases where $p  \gg n$ (with $n$ the number of data observations for $\mathbf{X}$).

It is possible to apply a significance test to the gain expressed in Equations  \ref{equation:delta:l1:gaussian} and \ref{equation:delta:l2:gaussian} by using a variant of the likelihood ratio test \citep[Par. 7.1]{rencher2003methods}. Indeed, if we have two alternative covariance matrices called for instance $\hat \Sigma_1$ and $\hat \Sigma_0$ it is possible to test whether they are significantly different by using the following statistics:
\begin{equation} \label{eq:lr:gaussian}
u = \nu \left( \log \vert \hat \Sigma_0 \vert - \log \vert \hat \Sigma_1 \vert + \Tr \left( \hat \Sigma_1^{-1} \hat \Sigma_0  \right) - p  \right)\;.
\end{equation}
Where $\nu$ is, in the case of un-pooled data, the number of observation in the time series minus one\footnote{It is also possible to apply a small sample correction to the statistics $u$, see \citet[Eq. 7.2]{rencher2003methods} for the details.}, $(n-1)$.

In general $u$ is $\chi^2$ distributed with the number of degrees of freedom equal to $\frac{1}{2} p(p+1)$, with $p$ the dimension of the matrix. However, if the two matrices are nested, the number of degrees of freedom is the number of additional parameters in the nested model, with respect to the external model. In the case of the clique expansion operator the degrees of freedom are therefore $p-1$. 

%
%

\section{Experiments} \label{sec:experiments}

We performed a set of experiments to analyse the performances of the \mfcf{} methodology by comparing the results against two  methodologies for covariance or correlation matrices estimation that are widely accepted in the literature \citep{fan2011sparse}: the Graphical Lasso \citep{FriedmanEtAl2008} and a shrinkage estimator.

Since the Graphical Lasso optimises a penalised version of Gaussian log-likelihood, we have used two score functions that are closely related to it, so that the results are effectively comparable:

\begin{itemize}
	\item Gaussian log-likelihood, described in Section \ref{sec:test:gain:1}, where we fix the size of the cliques to the same  value and evaluate the results for a range of clique sizes.
	\item Gaussian log-likelihood statistically validated, described in Section \ref{sec:test:gain:2}, where we allow cliques of any size up to a maximum value. 
\end{itemize}

\subsection{Construction of the precision matrix in the multivariate Gaussian case} \label{sec:PrecisionMatrix}
In the Gaussian case, once the clique forest structure is known, the maximum likelihood estimate of the precision matrix is given in explicit form (see \citet[Prop. 5.9]{lauritzen1996} or \citet{barfuss2016parsimonious}):
\begin{equation} \label{ml:inverse}
\hat J = \sum_{c \in \CG} \left[  \left( \hat \Sigma_c \right)^{-1} \right]^V  - \sum_{s \in \SG} \left[  \left( \hat \Sigma_s \right)^{-1}\right]^V
\end{equation}

\noindent where the notation $\left[ M_c \right]^V$ in Equation \ref{ml:inverse} means a matrix of dimension $p = |V|$ where all the elements are zero, excepting for the ones with the indices in the clique $c$; that is $\left[ M_c \right]^V_{ij} = M_{ij}$ if $i \in c$ and $j \in c$, $\left[ M_c \right]^V_{ij} = 0$ otherwise.

\subsection{MFCF shrinkage procedures} \label{sec:shrinkage:target}
In all the experiments we have applied some shrinkage to the maximum likelihood estimate. The shrinkage parameter has been calibrated to the validation data set by looking for the best likelihood and performing a grid search. 
We have used two shrinkage targets: the commonly used identity matrix \citep[Sec. 3.3]{ledoit2003improved}, and a new shrinkage target we call ``the clique tree target'', described in \ref{sec:clique:tree:target}. 

\subsection{Gain Functions Used}

\subsubsection{Gaussian log-likelihood} \label{sec:test:gain:1}

For the gain function we use Equations \ref{equation:delta:l1:gaussian} and \ref{equation:delta:l2:gaussian} to compute the increase in 
log-likelihood for a clique expansion in the multivariate gaussian case. 
Let $C_a$ denote a clique and $v_0$ an isolated vertex. The number of elements in $C_a$ is equal to the clique size ($k$).
Given a vertex outside $C_a$, the gain function must return both the gain and a subset of the vertices in $C_a$ yielding the highest statistically significant gain relative to $v_0$.
The construction of the separator is performed in the gain function in a greedy way, first by 
selecting from $C_a$ the vertex $v_1$ with the highest gain relative to with $v_0$, so 
that $S \leftarrow v_1$. The next best vertex $v_2$ is the one that added to $S$ 
increases the most the log-likelihood (Equation \ref{equation:delta:l2:gaussian}), and we put $S \leftarrow S \cup v_2$, and so on until the clique size is reached.

\subsubsection{Gaussian log-likelihood Statistically Validated} \label{sec:test:gain:2}
This  scoring function is also based on the increase of the log-likelihood 
in the multivariate Gaussian case. However, in this case we perform the clique expansion only if the gain is significantly greater than zero using Equation \ref{eq:lr:gaussian} to test the hypothesis within a given confidence 
level. If it is not statistically different from zero the corresponding score is 
set to zero.  This results in sparser networks than the ones 
obtained using the score function in \ref{sec:test:gain:1}, especially when the number of data points is low.

The selection of the separator is also performed in a greedy way similar to process 
described in \ref{sec:test:gain:1}, with the difference that the process might 
stop before the maximum clique size is reached if the increase in log-likelihood 
is not statistically significant.

\subsection{Data} \label{sec:Data}

We test the performance of the algorithm on three types of synthetic data and on a real dataset of stocks returns.  The synthetic data are multivariate Gaussian generated using respectively: 
(1) a sparse chordal inverse matrix with known sparsity pattern; 
(2) a factor model; 
(3) a random positive definite matrix generated from random eigenvalues and a random rotation. 
The real example is taken from a long-return series of stock prices. 
All the datasets used in the experiments have been produced for 100 variables ($p = 100$) and varying time series lengths ($n \in \{25, 50, 75, 100, 200, 300, 400, 500, 750, 1000, 1500\}$). The details about the data generation process are described in Appendix, sub-Sections \ref{sec:random:clique:forest}, \ref{sec:random:cluster}, \ref{sec:random:factor} and \ref{sec:exp:real:data}.

For every type of data we generate the following datasets: 

\begin{enumerate}
	\item The \emph{train data set} which is used to learn the model parameters, such as the \mfcf{} network and the elements of the precision matrix. For every type of data we generate 5 distinct training data sets to test reproducibility. 
	\item The \emph{validation data set} is used to select the model hyper-parameters: these are the $L_1$ penalty for the graphical lasso, the shrinkage parameter for the shrinkage method, and the maximum clique size and shrinkage parameter for the \mfcf{}. For all methods we perform a grid search over the hyper-parameters and select the model that achieves the best likelihood on the validation dataset. In analogy with the train data we generate 5 distinct validation data sets.
	\item The \emph{test data set} is used to assess the performance of the models. We use 10 distinct test datasets for every training/validation data set and therefore for every data type we have 50 test datasets.
\end{enumerate}

The full description of the methods used to generate the dataset is in the Appendix (\ref{sec:appendix}).

\subsection{Algorithms and methodologies }

We have generated sparse inverse correlation\footnote{Although the problem is \emph{covariance} estimation, we have used correlation matrices in all of the test cases, since the problem is equivalent.} estimates with different constructions of the  \mfcf{} algorithm and compared their performances with the Glasso and shrinkage estimators, the real benchmark and the null hypothesis.

\begin{enumerate}

	\item \glasso{}\_XVAL: the Graphical Lasso \citep{FriedmanEtAl2008}; we use the implementation provided by the R package \texttt{huge} \citep{huge}. 
	The penalty parameter is estimated through cross-validation using an adaptive grid search in the interval $[0.01, 1]$ . 
	The precision matrix is estimated, for a given penalty parameter, on the training data set; the penalty parameter selected is the one that produces the estimate with the highest log-likelihood on the validation data set\footnote{The minimum penalty of $0.01$ has been used because for smaller values we have encountered convergence problems with some of the test cases.}. Performances are assessed on the test data sets.

	\item \shrinkage{}: a shrinkage estimator with target the identity matrix. We produce shrunk correlation matrices estimators from the training dataset using a grid search for the shrinkage parameter associated with the highest  likelihood on the validation data set.  
Performances are assessed on the test data sets. Recall that this method does not produce sparse precision matrices.

	\item \mfcffix{}: the \mfcf{} algorithm with fixed clique size, the shrinkage target is the clique tree target described in \ref{sec:shrinkage:target}, and the gain function described in \ref{sec:test:gain:1}. We proceed in two steps: initially the correlation matrix built from the training set is shrunk by a small parameter $\epsilon = 0.05$ using the identity matrix as a target\footnote{This step is performed to stabilise numerically the algorithm; otherwise the matrices for some  cliques might be near singular numerically and lead to problems in the calculation of the gains. The parameter $0.05$ has not been tuned but just used as a reasonably small number.}. Then we produce a set of models with clique sizes between 2 and 20.\footnote{Larger values would lead to essentially dense models.} 
The precision matrix estimates are produced using the training datasets and the shrinkage procedure described in Section \ref{sec:shrinkage:target}. 
The shrinkage parameter  is the one that achieves the best likelihood on the validation data set, estimated with a grid search as we do for the graphical lasso and the shrinkage estimators.

	\item \mfcffixi{}:  same as \mfcffix{}, excepting for the shrinkage target where we use the identity matrix.

	\item \mfcfvar{}: same as \mfcffix{} (in particular using the clique tree target as a shrinkage target) but  with variable clique sizes between 2 and 20 and the gain function described in \ref{sec:test:gain:2}. 
The p-value used (in \ref{sec:test:gain:2})  for the likelihood ratio test was 0.05. 

\item \mfcfvari{}: same as  \mfcfvar{} excepting the shrinkage target where we use the identity matrix.

	\item REAL\_OR\_ML:  the benchmark `real' precision matrix. For synthetic data, when the structure of the correlation matrix is known exactly, we use the exact inverse; in the case of real data, for which we do not know the real correlation matrix, we use the inverse of the sample correlation matrix computed on the entire time series. 

	\item NULL hypothesis: the identity matrix as the inverse precision matrix.

\end{enumerate}

\subsection{Performance indicators}

For every test set we collect the following performance indicators:

\begin{enumerate}

	\item Log likelihood;
	
	\item Accuracy;
	
	\item Sensitivity;
	
	\item Specificity;
	
	\item The \emph{correlation} of the estimated precision matrix with the true precision matrix;
	
	\item \emph{Eigenvalue distance}; and
	
	\item \emph{Eigenvalue inverse distance}.

\end{enumerate}

The definition for these performance indicators can be found in the Appendix (\ref{sec:perf:ind}).

%

\subsection{Results} \label{sec:results:full}

The full set of results for all the dataset is reported in the Appendix (\ref{sec:results:full}). In this section we wish to draw the attention to the main highlights of the method and some interesting features of the solution.

\subsubsection{Sparse Decomposable Systems}

Figure \ref{fig:ex:Chordal_red} (and Figure \ref{fig:ex:Chordal} in the Appendix) provide a box plot representing the mean, the confidence interval and the extreme values of the log-likelihood achieved by the algorithms over the test data sets, broken down by the length of the series\footnote{The boxplots in this paper have been produced with the R \citep{rlanguage} package GGPLOT2 \citep{ggplot}. According to the package documentation the first lower and upper hinges correspond to the first and third quantile, the upper whisker covers the values form the third quartile hinge to $1.5$ times the inter-quartile range away from the hinge, and similarly the lower whisker covers the values between the first quartile hinge and $1.5$ times the interquartile range below the hinge. The remaining points are considered outliers and plotted individually.}. 
We observe that, in all cases, the \mfcf{} algorithms outperform both the graphical lasso and the shrinkage estimator. 
The graphical lasso improves performances as the length increases but does not exceed \mfcf{}s. The same results are presented in tabular form in Table \ref{tab:Chordal_summary:1}.
The dispersion around the mean is similar for all methods and it has been computed by repeating the experiments on 50 independent datasets (10 testing sets for each 5 training and validating sets, as explained in section \ref{sec:Data} in the appendix).

\begin{figure} 
\centering
\includegraphics[scale=0.8]{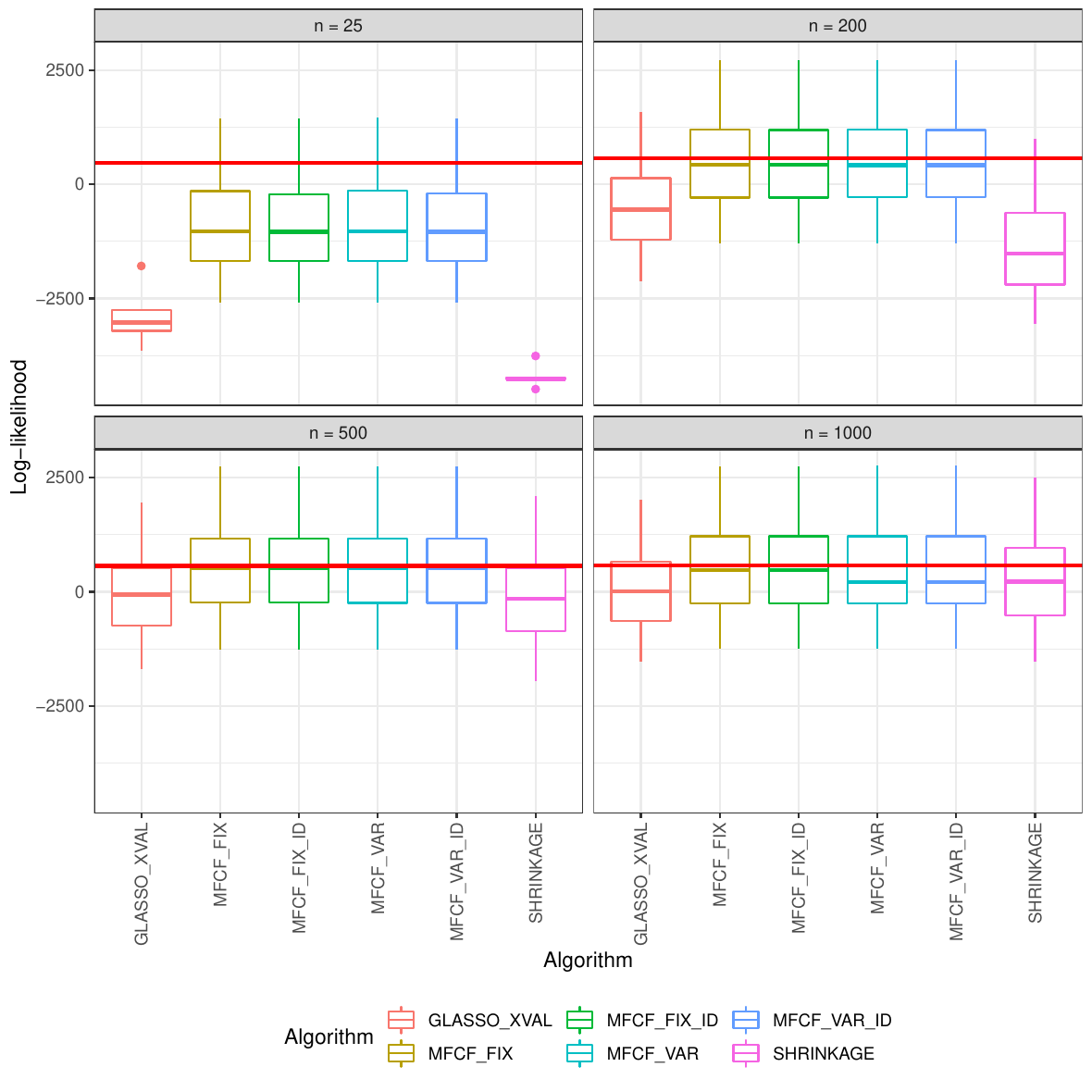}
  \caption{Box plot for the log-likelihood of the algorithms on synthetic data (sparse decomposable precision matrix) for different lengths of the series. The statistics is based on a total of 50 test sets (10 test sets for each of 5 different training / validation sets). The red horizontal line shows the average log-likelihood achieved by the real model, which is seen as an upper limit.}
  \label{fig:ex:Chordal_red}
\end{figure}

\begin{table}[ht]
\centering
\begingroup\footnotesize
\begin{tabular}{rrrrrrrr}
  \hline
\thead{Series \\ length} & \thead{GLASSO \\ XVAL} & \thead{MFCF \\FIX} & \thead{MFCF \\FIX ID} & \thead{MFCF \\VAR} & \thead{MFCF \\ VAR ID} & \thead{REAL \\ OR ML}& SHRINKAGE \\ 
  \hline
 25 & -2882.80 & -799.43 & -817.47 & -793.26 & -811.31 & 701.03 & -4200.59 \\ 
   50 & -2232.17 & 241.89 & 238.47 & 229.19 & 229.28 & 768.55 & -3590.65 \\ 
   75 & -1804.68 & 400.90 & 400.83 & 410.29 & 410.23 & 696.69 & -3117.99 \\ 
  100 & -1231.09 & 468.27 & 468.24 & 472.50 & 472.18 & 690.84 & -2643.30 \\ 
  200 & -436.25 & 548.88 & 548.13 & 547.93 & 546.42 & 659.23 & -1277.56 \\ 
  300 & -145.02 & 594.35 & 595.46 & 596.03 & 597.04 & 662.61 & -566.08 \\ 
  400 & -34.19 & 598.96 & 599.91 & 601.84 & 602.07 & 653.18 & -295.88 \\ 
  500 & -3.83 & 580.88 & 580.91 & 582.46 & 582.45 & 620.99 & -69.26 \\ 
  750 & 66.79 & 587.58 & 587.52 & 586.60 & 586.60 & 615.98 & 204.10 \\ 
  1000 & 100.79 & 590.33 & 590.33 & 537.03 & 537.03 & 612.48 & 327.54 \\ 
  1500 & 130.50 & 596.72 & 596.72 & 598.33 & 598.33 & 612.41 & 431.36 \\ 
   \hline
\end{tabular}
\endgroup
\caption{Log-likelihood of the models. Mean values over 5 distinct training / validation sets and for 10 test set each. (Underscore removed from the name of the algorithms to allow line breaks.)} 
\label{tab:Chordal_summary:1}
\end{table}

Figure \ref{fig:ex:Chordal:1:red} shows a summary of the performance measures. We observe that that the \mfcf{} family is better, overall,  than the graphical lasso especially for what concerns accuracy and specificity (the two measures are very close in this example, due to the high sparsity of the model). While the graphical lasso is more sensitive picking up more true positives. However, it is also less selective and produces denser precision matrices with a much higher number of false negatives. We observe that the performance of the graphical lasso improves in all measures for time series of length greater than 200, when the penalty parameter is essentially fixed at 0.01.
The \mfcf{} exhibit better log-likelihood, as already observed, and also larger correlations with the true precision matrix. 

\begin{figure}
\centering
\includegraphics[scale=0.8]{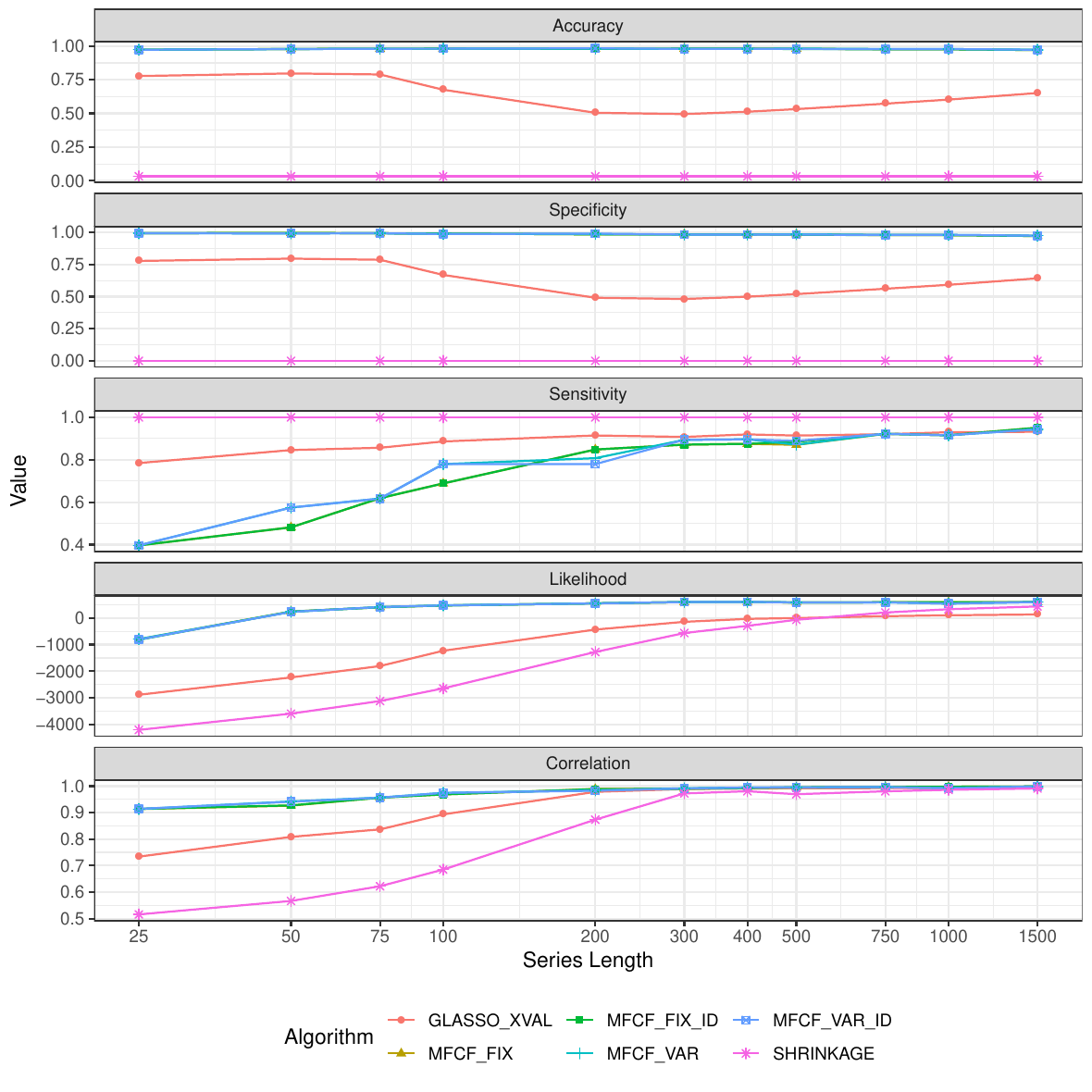}
 \caption{Performance measures of the algorithms on synthetic data (sparse decomposable precision matrix) for different lengths of the series. The statistics is based on a total of 50 test sets (10 test sets for each of 5 different training / validation sets).}
  \label{fig:ex:Chordal:1:red}
\end{figure}

Figure \ref{fig:ex:Chordal:4:red} (and Figure \ref{fig:ex:Chordal:4} in the Appendix) shows the number of cliques of different size produced by the \mfcfvar{} algorithm as a function of the maximum allowed clique size and of the time series length. We note that as the time series length increases the test becomes  less stringent with a higher number of large cliques in the model; conversely, when the time series is shorter ($n < p$), the models produced are more parsimonious. The number of cliques of size smaller than the maximum is linked to the degree of sparsity of the model. We will see in Section \ref{sec:factor:model:noise} that in the case of systems that are inherently dense the vast majority of the cliques will have the maximum allowed clique size.

\begin{figure}
\centering
\includegraphics[scale=0.8]{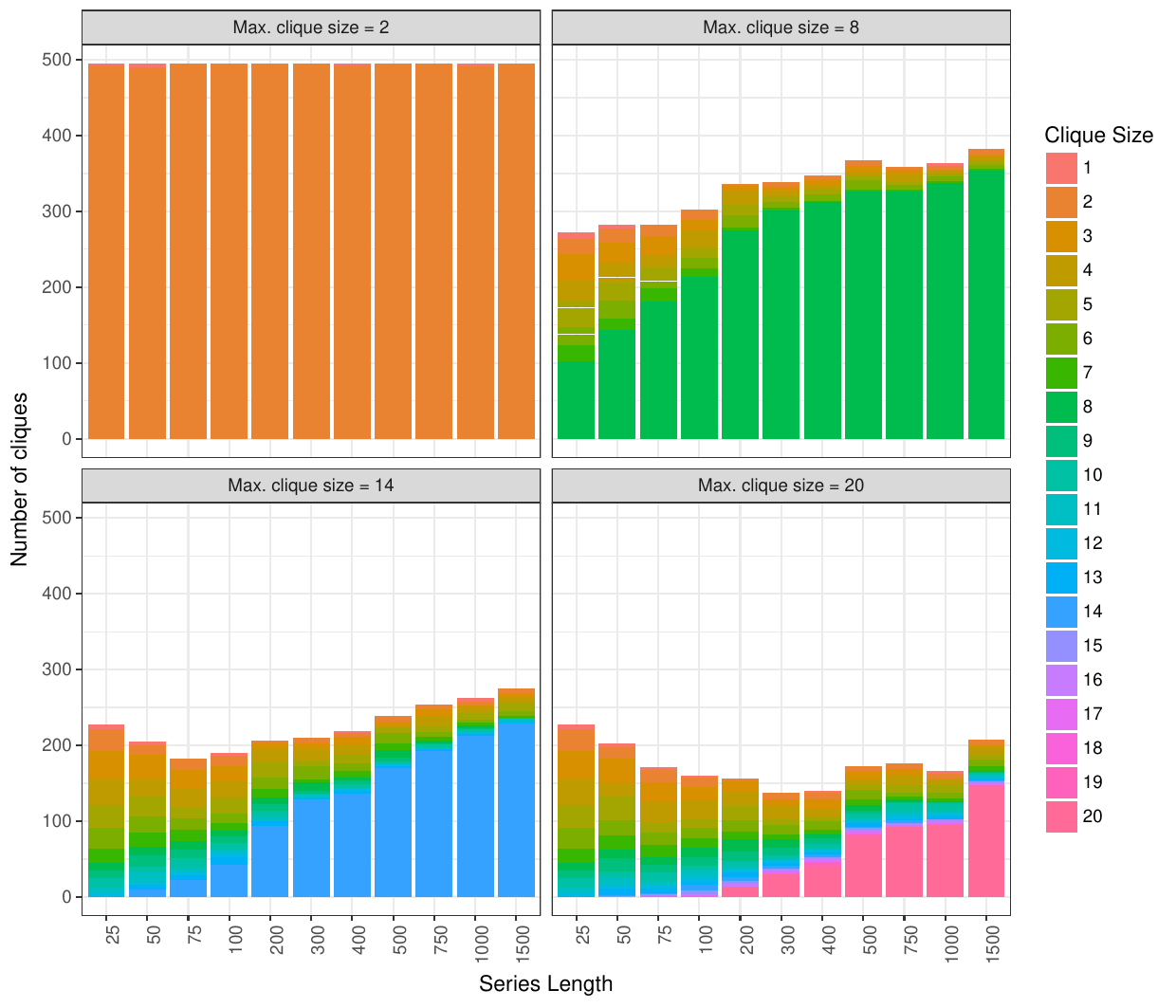}
 \caption{Composition of cliques from \mfcfvar{} for synthetic data (sparse decomposable precision matrix).   The statistics is based on a total of 5 different training / validation sets.}
  \label{fig:ex:Chordal:4:red}
\end{figure} 

\FloatBarrier

\subsubsection{Real Data}

Performance measures are reported in Figures \ref{fig:ex:Real:red}-\ref{fig:ex:Real:4:red}. Please note that this model is essentially non-sparse and therefore there are no true negatives; for this reason we do not report specificity and accuracy and sensitivity are essentially the same measure (see \ref{sec:perf:ind} for the definition of the performance indicators used. For the same reason the SHRINKAGE model performs better in terms of accuracy than the other models, but not significantly so in terms of log-likelihood. We see from the inspection of Figures \ref{fig:ex:Real:red} and \ref{fig:ex:Real:1:red} that with real data the log-likelihood is comparable across all models, with slight better values for \mfcffix{} and \mfcfvar{} for shorter time series. It is worth noting that, in the family of the \mfcf{} algorithms, the two that use the clique tree shrinkage target described in Equation \ref{ml:inverse:cost:corr} (\mfcffix{} and \mfcfvar{}) perform significantly better, for short time series, than the models with the same structure but the simpler identity matrix (\mfcffixi{} and \mfcfvari{}) as a shrinkage target. However, as Table \ref{tab:Real:param_no} shows, in the case of long time series Glasso uses a significantly higher number of parameters than the MFCF. The same results are shown in tabular form in Table \ref{tab:Real_summary:1}.

\begin{figure} 
\centering
\includegraphics[scale=0.8]{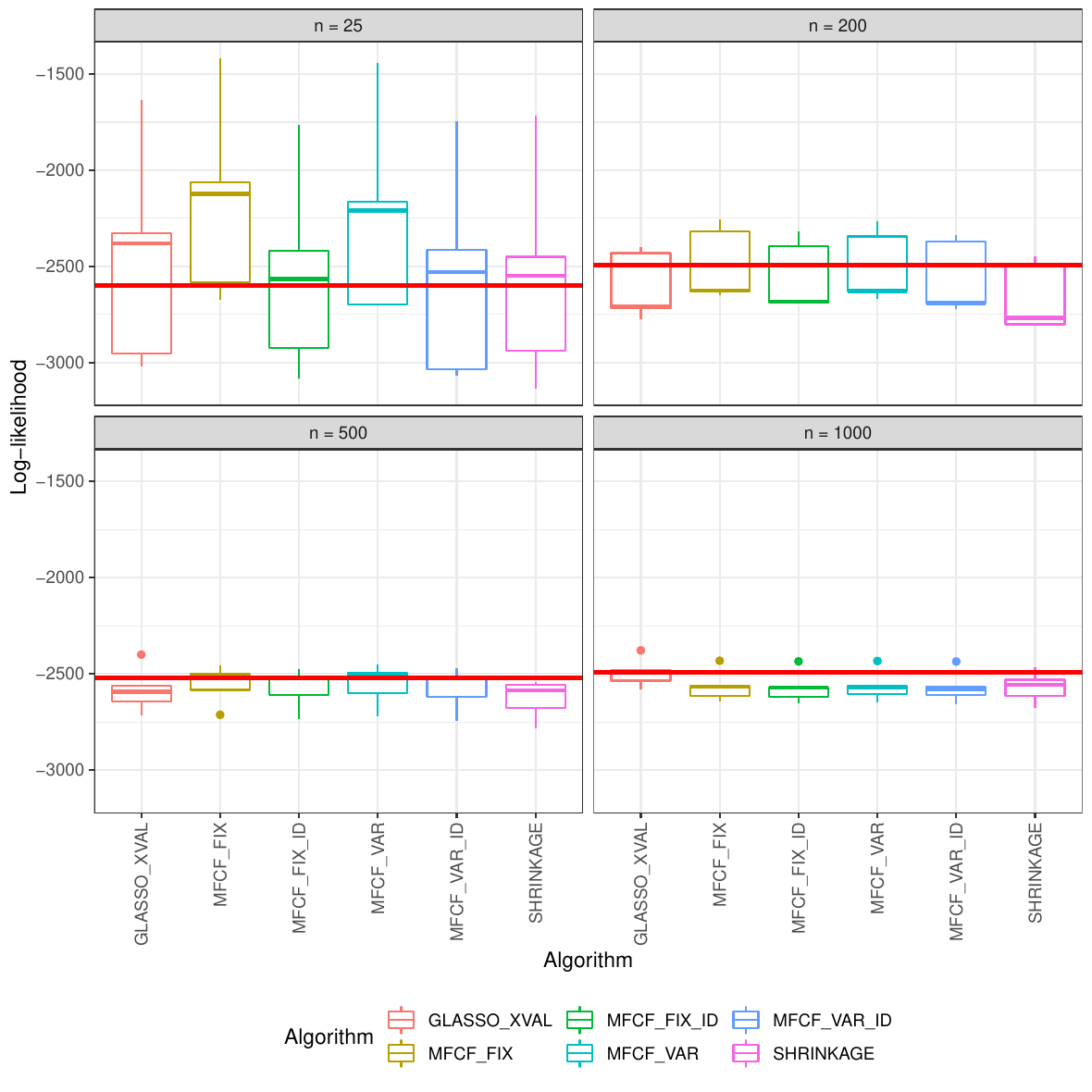}
  \caption{Log-likelihood of the algorithms on real data (stock returns) for different length of the series. The statistics is based on a total of 50 test sets (10 test sets for each of 5 different training / validation sets) per length of the time series. The red horizontal line shows the average log-likelihood achieved by the ``maximum likelihood'' model, the one where the correlation matrix is estimated from the full data set, i.e. using the longest possible time series.}
  \label{fig:ex:Real:red}
\end{figure} 

\begin{table}[ht]
\centering
\begingroup\footnotesize
\begin{tabular}{rrrrrrrr}
  \hline
\thead{Series \\ length} & \thead{GLASSO \\ XVAL} & \thead{MFCF \\ FIX} & \thead{MFCF \\ FIX ID} & \thead{MFCF \\ VAR} & \thead{MFCF \\ VAR ID} & \thead{REAL \\ OR ML} & SHRINKAGE \\ 
  \hline
 25 & -2463.20 & -2172.21 & -2550.56 & -2243.48 & -2559.13 & -1280.20 & -2558.49 \\ 
   50 & -2528.49 & -2427.37 & -2670.42 & -2438.69 & -2674.56 & -1461.33 & -2725.01 \\ 
   75 & -2618.80 & -2621.05 & -2761.78 & -2619.53 & -2759.37 & -1836.75 & -2821.22 \\ 
  100 & -2610.36 & -2578.57 & -2684.32 & -2572.94 & -2699.43 & -1822.45 & -2799.69 \\ 
  200 & -2605.54 & -2495.72 & -2555.83 & -2506.77 & -2561.94 & -1958.70 & -2664.73 \\ 
  300 & -2613.91 & -2527.65 & -2556.48 & -2524.18 & -2553.71 & -2089.36 & -2662.54 \\ 
  400 & -2618.82 & -2606.22 & -2628.84 & -2607.56 & -2630.98 & -2185.06 & -2705.29 \\ 
  500 & -2583.27 & -2551.57 & -2571.16 & -2554.57 & -2572.88 & -2171.48 & -2628.21 \\ 
  750 & -2450.17 & -2521.73 & -2531.22 & -2516.58 & -2527.05 & -2205.35 & -2563.06 \\ 
  1000 & -2493.24 & -2563.27 & -2569.82 & -2563.31 & -2569.77 & -2311.93 & -2568.76 \\ 
  1500 & -2467.06 & -2549.02 & -2553.15 & -2552.45 & -2556.22 & -2340.61 & -2519.63 \\ 
   \hline
\end{tabular}
\endgroup
\caption{Log-likelihood of the models. Mean values over 5 distinct training / validation sets and for 10 test set each. (Underscore character removed from the names of the alogorithms to allow for line breaks.} 
\label{tab:Real_summary:1}
\end{table}

\begin{figure}
\centering
\includegraphics[scale=0.8]{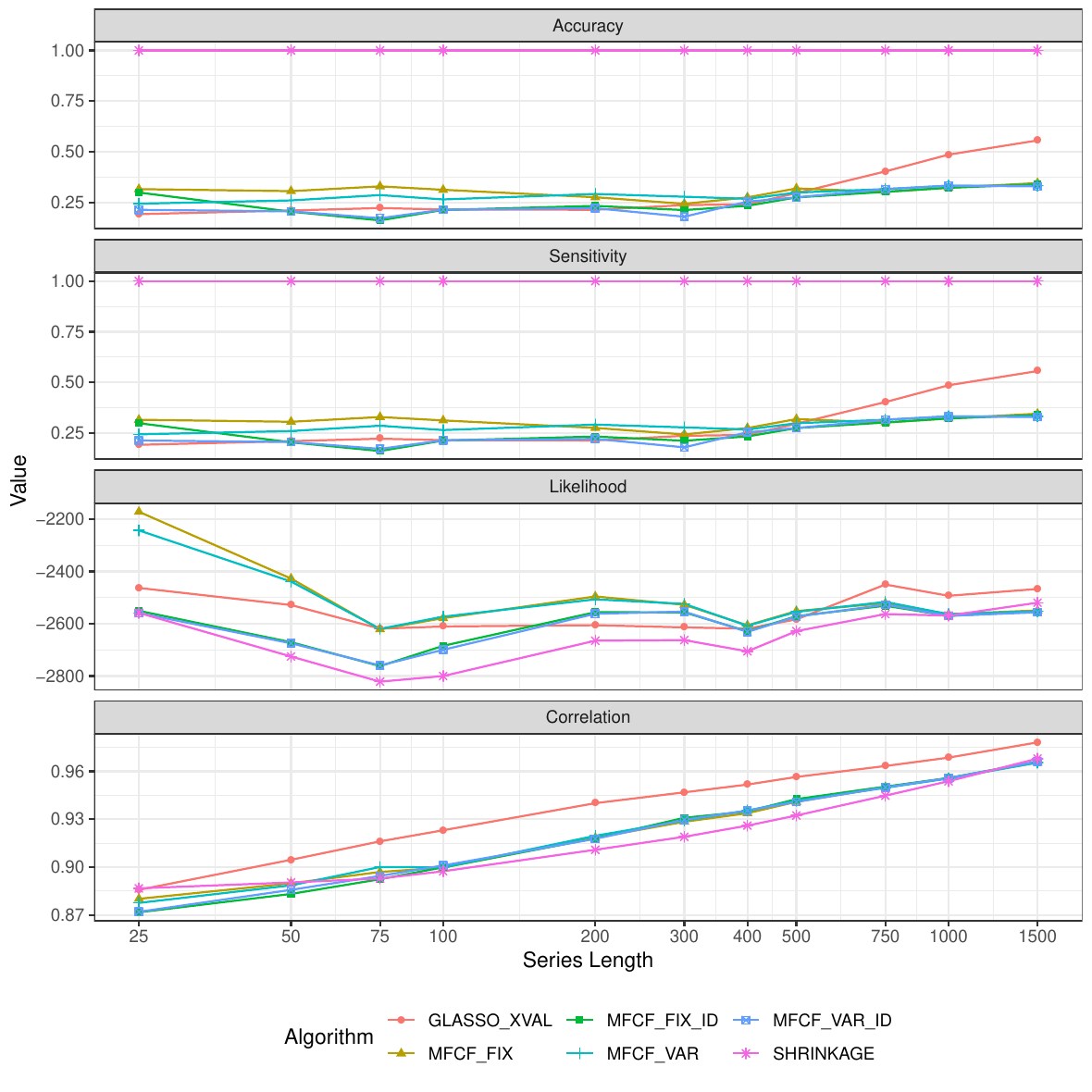}
 \caption{Performance of the algorithms on real data (stock returns) for different lengths of the series. The statistics is based on a total of 50 test sets (10 test sets for each of 5 different training / validation sets) per length of the time series.}
  \label{fig:ex:Real:1:red}
\end{figure} 

Figure \ref{fig:ex:Real:4:red} shows that, excepting for the shortest time series, the \mfcfvar{} algorithms almost always use the largest allowed clique size.

\begin{figure}
\centering
\includegraphics[scale=0.8]{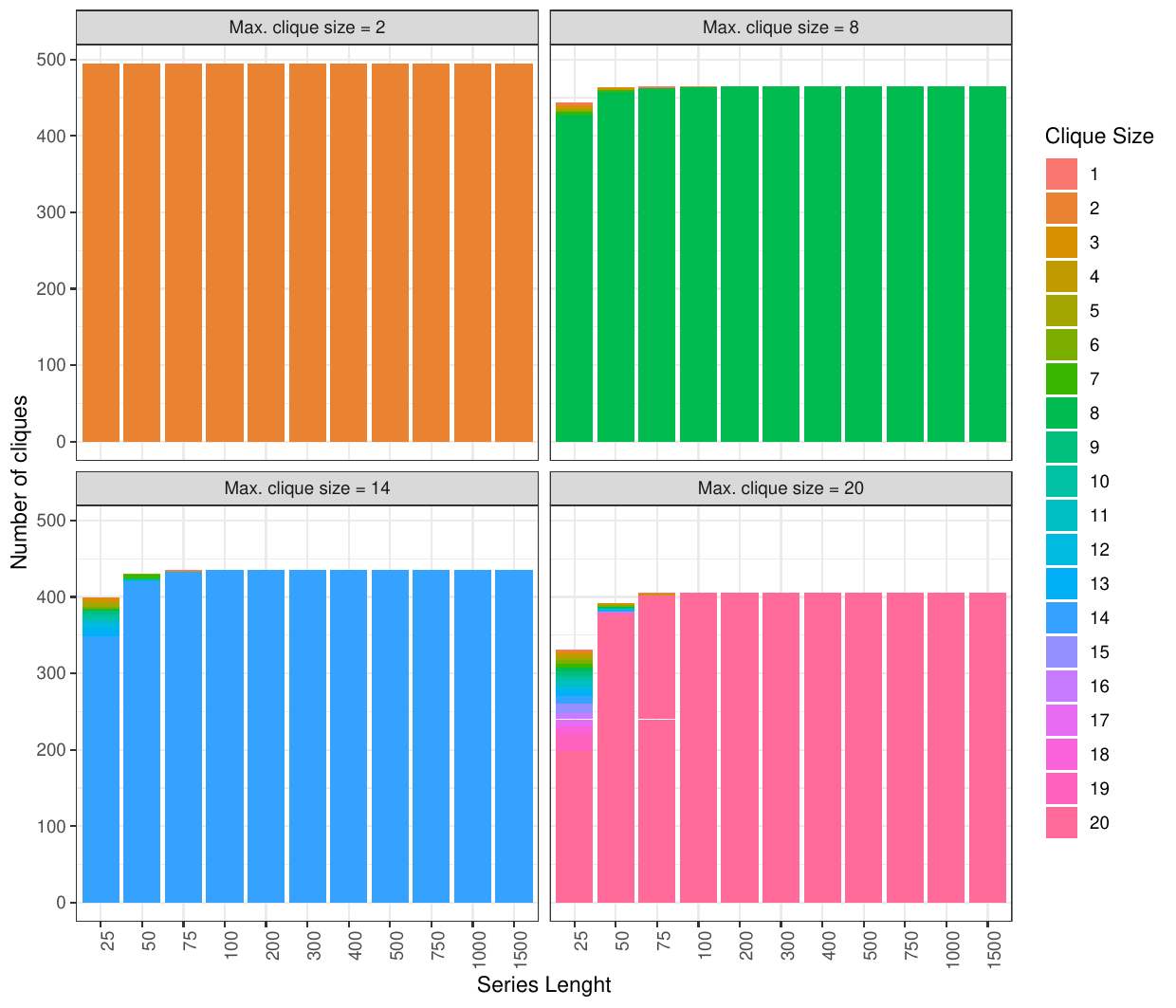}
 \caption{Composition of cliques from \mfcfvar{} for real data (stock returns). The statistics is based on a total of 5 different training / validation sets per length of the time series.}
  \label{fig:ex:Real:4:red}
\end{figure} 

\FloatBarrier
\section{Conclusions} \label{sec:conclusions}
In this paper we have introduced the \mfcf{} algorithm which generates chordal graphs (clique forests) through a local elementary move named clique expansion. 
The clique expansion move preserves chordality so it can be applied freely to local configurations without the need to check if the global graph structure is chordal.  
The  \mfcf{}  is a topological learning algorithm. 
In this paper it is used to estimate the inference structure of a system of random variables from data. 
The clique expansion move can be conditioned to various kinds of gain functions and statistical validations.
This methodology can be directly compared with well-established regularization techniques such as Shrinkage \citep{margaritis2000bayesian,zhou2011structure} and Lasso \citep{Tibshirani1996,FriedmanEtAl2008}.
For this purpose, in this paper, we discuss a set of experiments  for covariance selection using log-likelihood gain functions with both  synthetic data and real financial time series. 
Results reveal that  \mfcf{} can outperform both Shrinkage and Lasso in several cases with better relative performances particularly when the number of observations is small.  
 
We discovered that the structure of validated clique forests provides important insights on the sparsity and locality of the data. The geometric approach advocated by the \mfcf{} methodology provides, in our view, several benefits over other  approaches.

\begin{itemize}

\item The \mfcf{} calculates the clique-separator structure, as well as a perfect elimination ordering, and this can help with variable elimination and execution of the junction tree algorithm. 

\item The proposed methodology allows to seed the algorithm with a given clique tree, and this will help when incorporating previous established knowledge or expert judgement.

\item The clique forest structure allows also to identify which edges could be ``pruned'' without compromising the clique forest geometry \citep[Lemma 2.19]{lauritzen1996}. We have not made use of this possibility here but it is a possible direction of investigation for future works where the clique forest could be updated dynamically as the data changes. 

\item Finally we have also described a possible shrinkage target (the ``clique tree target'') that exploits the geometry of the clique forest and seems to provide encouraging results especially with small samples.

\end{itemize}

This work is a radical generalization of the TMFG methodology previously introduced by the authors in \cite{TMFG}.  
As for the TMFG case, the \mfcf{} generates a chordal graph associated to a multivariate system of variables where the network structure reflects the degree of conditional independence between the variables. 
TMFG structure was demonstrated to be useful across domains from psychology \cite{christensen2018network,christensen2018reopening} to finance \cite{barfuss2016parsimonious}.  
The main limitation in the TMFG approach was the rigidity of its structural construction that allowed 4-cliques  with triangular separators between couples of cliques only.  
\mfcf{} allows instead cliques of arbitrary sizes and the separators can be between multiple cliques. 
This, combined with the possibility to use different gain functions, greatly generalizes the construction of chordal graphs from data. 
This also lifts the overall topology of the generated chordal graphs beyond planarity opening the way for the use of  \mfcf{} in the field of topological data analytics and, in particular, in the domain of persistent homology \citep{edelsbrunner2008persistent}. 
 Indeed, the local and the step-by-step nature of the  \mfcf{}  construction makes it easy to keep track of all topological and hierarchical properties of  the network.

Since the \mfcf{} is a generalisation of Prim's MST algorithm, a legitimate question would be what is, if possible, a generalisation of Kruskal's algorithm \citep{kruskal1956}. It turns out that it is possible to produce an analogue of the \mfcf{} based upon Kruskal's algorithm. This requires the introduction of a ``bridge'' operator based upon the direct join of two clique trees \citep[pagg. 22-23]{lauritzen1996}.
However, preliminary investigations reveal that the Kruskal version of the algorithm tends to privilege the creation of very tightly linked small cliques in the first steps which are difficult to join in later steps while keeping the decomposability of the system and a good performance.

The main purpose of this paper was to  introduce the \mfcf{} algorithm and the clique expansion elementary move demonstrating their effectiveness with a set of experiments. 
The experiments were limited to the covariance selection problem because this is a domain of particular interest and there are well-established  alternative methods which can be used for comparison.
Results shows that the \mfcf{} approach outperforms state-of-the-art Glasso and shrinkage methods.

The methodologies  introduced in this paper open many possibilities and have great potentials that will be explored in following works.  
For instance, we would like to explore a larger range of gain functions and validations beyond likelihood. 
Preliminary results  (see, \cite{DanielThesis19}) show that gain functions resulting form the application of local, non-linear regressions can be very powerful especially for the analytics of datasets with data at different frequencies (i.e. the combination of daily and quarterly time series) and of different kinds (i.e. the combination of continuous, discrete and categorical variables).

\section{Code} \label{sec:code}
The original R code used to carry out the experiments, as well as the code used to generate all the graphs and tables in this paper, is stored in project "Learnig Clique Forests (Code)" hosted in the Open Science Framework repository \citep{Massara_2021}.

An implementation of the \mfcf{} algorithm is also presnt in Alex Christensen's NetworkToolbox \citep{nwt}.

Finally, there is a GitHub repository \citep{Massara_mfcf_matlab} that contains ongoing developments and algorithms related to the \mfcf{} in Matlab \citep{Matlab} and Octave \citep{Octave}.


\acks{
We thank Daniel Savu for exploring complementary aspects of this algorithm that helped us to contextualise the value and potentials of the present apperoach. Many thanks to the UCL-FCA group for help with discussions and proof reading. Finally, we are grateful to the ESRC for funding the Systemic Risk Centre (ES/K002309/1), the EPSRC for funding the BARAC project (EP/P031730/1) and to EC for funding the FinTech project (H2020-ICT-2018-2 825215), that partially support this work.
}

\bibliography{Clique_Tree_Learning}

\section{Appendix} \label{sec:appendix}

In this section we include some additional definitions and theorems related to the \mfcf{} that highlight important characteristics of the algorithm, but that are not used in the specific application to the Covariance Selection problem.

\subsubsection{The Running Intersection Property and Perfect Sequences of Sets} 

\begin{definition}
An \emph{ordering} of the cliques of a graph is a bijective application $\sigma$ from the first $m$ natural numbers (where $m$ is the cardinality of $\CG$, the number of maximal cliques) into $\CG$, $\sigma: \{1, \dots, m\} \rightarrow \CG$. The cliques in $\CG$ are ordered as $C_{\sigma(1)} < \dots < C_{\sigma(m)}$. As a shorthand notation we will also write $\sigma = [C_1, \dots, C_m]$ meaning that the cliques are ordered according to $\sigma$.
\end{definition}

\begin{definition}
Let $G(V, E)$ be a graph, $\CG$ the set of cliques of $G$ and $\sigma = [ C_1, C_2, \dots, C_m ]$ an ordering of $\CG$.  
We say that $\sigma$ has the \emph{running intersection property} if for every clique $C_i$ with $2 \leq i \leq m$ there is a clique $C_j$, with $2 \leq j < i$ such that:
\begin{equation} \label{eq:rip}
C_i \cap (C_1 \cup C_2 \cup \dots C_{i-1}) \subset C_j
\end{equation}
\end{definition}

In graphical models the terms $H_i = (C_1 \cup C_2 \cup \dots C_{i})$ are called the ``histories'', whereas $S_i = C_i \cap (C_1 \cup C_2 \cup \dots C_{i-1})$ are called the separators and $R_i = C_i \setminus H_{i-1}$ the residuals. 
$C_j$ is called the \emph{parent} of $C_i$. 
\begin{definition}
If all the separators are complete, the sequence of cliques in that order is called a \emph{perfect sequence of subsets}. 
As the use of the term ``parent'' hints, a graph that has a perfect sequence of sets has also a clique forest (see \citet[Th. 3.4]{blairpeython1992}). 
\end{definition}

This means in practice  that, as we add cliques such as $C_i$ following 
the ordering of a perfect sequence of subsets, the intersection with the previous cliques is always contained in a 
single predecessor clique at most, which means that every clique has at most 
one ``parent'' clique in the ordering. This gives an heuristic illustration to the following 
result, formally proved in \cite[Th. 3.4]{blairpeython1992}: any connected graph 
has a clique tree if and only if the cliques have the running intersection 
property\footnote{For non connected graphs the same result applies to the 
connected components}.
This also introduces the concept of nested hierarchies with separators forming a poset as described in \cite{SongNestedHier}.
A property that can be used for clustering \citep{song2012hierarchical}. 

In summary the three following conditions are equivalent:
\begin{enumerate}
	\item $G$ is chordal,
	\item $G$ has a clique forest $\HT$, or
	\item there is and ordering of $\CG$ $\sigma = [C_1, \dots, C_m]$ that has the running intersection property.
\end{enumerate}

\subsubsection{Perfect Elimination Order}
Let $G(V,E)$ be a graph. Let $Adj(v_i)$ be the vertices that are adjacent to 
$v_i$ in $G$. 
Let $\sigma = [ v_1, v_2, \dots, v_n ]$ be an ordering of the vertices of  $G(V,E)$. We define $V_{[i,n]}$ as the set of vertices $\left\lbrace v_i, 
v_{i+1}, \dots, v_n \right\rbrace$ and $G_i$ as the graph induced by 
$V_{[i,n]}$.
\begin{definition}
We say that the ordering $\sigma$ is a \emph{perfect elimination order} if 
$Adj(v_i) \cap G_{i+1}$ is a clique in $G_{i+1}$. 
\end{definition}
This is the same as saying that $v_i$ is a \emph{simplicial vertex} in $G_i$. 
The underlying idea is that $v_i$ can be eliminated from the $G_i$ using the process of variable elimination without introducing any additional link; this is a fundamental property in recursive algorithms where variables are eliminated one at a time and allows to maintain the sparsity of the graph, see \citet[Ch. 12]{golumbic2004algorithmic} for applications to Gaussian Elimination.

\def\circleA{(0,0) circle (1.5cm)}
\def\circleB{(0:2cm) circle (1.5cm)}

\begin{figure} 
\centering
\begin{tikzpicture}
    \draw \circleA node {$A$};
    \draw \circleB node {$B$};
    \node at (1,0) {$C$}; 
\end{tikzpicture} 
 \caption{Set-theoretic representation of a decomposable system, with $C = A \cap B$.}
 \label{fig:decomposable:graph}  
\end{figure}

\subsection{Additional theorems}

\begin{theorem} \label{th:mfcf:chordality}
Let $G(V,E)$ a chordal graph with $|V| \ge 2$ and at least an isolated vertex $v_i$. 
Then expanding one clique of $G$  with $v_i$  does not introduce a  chordless cycle of length $\ge 4$.
\end{theorem}

\begin{proof}
Let us call $C_a$ any clique of $G$. We choose any subset $S \subset C_a$ as a 
separator of the clique expansion. If $|S|> 0$ we have that $S$ does not have 
any chordless cycle of length $\ge 4$ because $S$ is a complete induced subgraph 
of $C_a$ and the clique expansion adds all the edges between $v_i$ and any 
vertex of $S$, resulting in a clique $C_b = S\cup v_i$ which is complete and 
therefore free from chordless cycles of length $\ge 4$. If $S = G_a$ the expansion generates a larger clique 
$C_b = C_a \cup v_i$ which is complete. If $S$ is empty then the 
expansion trivially does not add any chordless cycle of length $\ge 4$.
\end{proof}

\begin{theorem} \label{prop:perfect:sequence}
The \mfcf{} algorithm produces cliques in a perfect order, provided that the 
initial cliques $C_I$ are arranged in a perfect order.
\end{theorem}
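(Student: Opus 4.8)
The plan is to argue by induction on the number of clique-expansion steps executed in the main loop S4--S9, maintaining the joint invariant that the list \algvar{cliques} stored by the algorithm is, at every stage, simultaneously (i) the set of maximal cliques of the chordal graph built so far and (ii) a perfect sequence of subsets in the order in which the algorithm keeps them. For the base case, after step S2 the list is either the singleton $[C_1]$ returned by $FirstClique()$, which is trivially a perfect sequence of a single-clique (hence chordal) graph, or the user-supplied $C_I$, which is a perfect sequence by hypothesis and whose associated graph is a clique forest.

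For the inductive step, assume that before an iteration the stored list $[C_1,\dots,C_k]$ is a perfect sequence of the cliques of the current chordal graph $G$, and that S5 selects a clique $C_a$, a separator $S\subseteq C_a$, and an outstanding vertex $v$; since outstanding vertices are isolated, $v$ is fresh and lies in no $C_i$. This last elementary fact is the workhorse of the whole argument. By Theorem~\ref{prop:chordality} the expanded graph $G'$ is again chordal, and a short check (any clique of $G'$ containing $v$ lies inside $N[v]$, any clique of $G'$ avoiding $v$ is an old clique, and no old clique is absorbed because $S\subsetneq C_a$ in the relevant case) shows its clique set is the old one with either $C_a$ enlarged to $C_a\cup v$ or one new clique added. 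I then split into the three cases of S6. If $\emptyset\neq S\subsetneq C_a$, the new clique $C_b=S\cup v$ is appended last; because $v$ is fresh, $C_b\cap(C_1\cup\dots\cup C_k)=S$, and $S\subseteq C_a$ witnesses the running intersection property while $S$ is complete as a subset of the clique $C_a$; the separators and parents of $C_1,\dots,C_k$ are untouched, so $[C_1,\dots,C_k,C_b]$ is again a perfect sequence. The case $S=\emptyset$ is identical with $C_b=\{v\}$, which simply starts a new tree of the forest via an empty separator.

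The one case that needs care is the pure expansion $S=C_a$, where $C_a$ is replaced in place by $C_a':=C_a\cup v$ and no clique is appended, so one must re-verify the running intersection property for the whole list $[C_1,\dots,C_{a-1},C_a',C_{a+1},\dots,C_k]$ rather than merely appealing to "appending at the end''. For indices $i<a$ nothing changes. For $i=a$, the history preceding $C_a'$ is unchanged and, since $v$ lies in no earlier clique, $C_a'\cap(C_1\cup\dots\cup C_{a-1})=C_a\cap(C_1\cup\dots\cup C_{a-1})$, so the separator and parent of $C_a$ are literally unchanged. For $i>a$ the preceding history gains exactly the vertex $v$, which lies in no $C_i$, so $C_i\cap H_{i-1}$ is again unchanged; the only effect is that a clique whose parent used to be $C_a$ now has $C_a'\supseteq C_a$ as parent, which still contains the (unchanged) separator. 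Hence the list remains a perfect sequence of subsets.

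Since this reasoning never uses which $C_a$, $S$ or $v$ the gain table selects in S5, nor the value of \algvar{reuse\_separators}, the induction closes and the final output list is a perfect sequence of subsets; by the equivalences recalled in Section~\ref{sec:literature:review} it is exactly a perfect ordering of the cliques of a clique forest. I expect the in-place modification of the full-expansion case to be the main obstacle: there one cannot rely on the "new clique goes last'' shortcut and must control how the histories of all later cliques change, and the argument only works because each history changes solely by the fresh vertex $v$, which is contained in no later clique.
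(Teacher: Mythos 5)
Your proof is correct and follows essentially the same route as the paper's: induction on the number of added vertices with the same three-case split according to whether the chosen separator is a proper nonempty subset of $C_a$, equal to $C_a$, or empty. You are somewhat more thorough than the paper's argument in the full-expansion case (explicitly checking that the separators and parents of all later cliques survive the in-place enlargement of $C_a$, since the histories change only by the fresh vertex $v$) and in verifying that the stored list coincides with the maximal cliques of the graph built so far, but the underlying induction is the same.
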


\begin{proof}
The demonstration can be performed by induction on the number of vertices added. 
Let's assume that the algorithm has added $m-1$ vertices, and by definition 
there are cliques $C_1, \dots, C_j$ that are perfectly ordered. When adding the 
next $v_m$ vertex there are three possibilities:
\begin{enumerate}
 \item[a)] The algorithm selects a clique $C_i, 1 \le i < j$ with a non empty 
separator $S_i$ and therefore a new clique $C_k = S_i \cup v_m$ is created. The 
separator is clearly complete and by construction we have that $S_i \subset 
C_i$.
 \item[b)] The algorithm  selects a clique $C_i, 1 \le i < j$ and the separator 
$S_i = C_i$ (extension of clique $C_i$). By hypothesis there is a clique $C_h$ 
with $1 \le h < i$ such that $S_i \subset C_h$ and $S_i$ is complete. Since 
$v_m$ was disconnected from all the cliques it was in particular disconnected 
from $C_h$ and therefore it does not change the intersection $C_h \cup S_i$, and 
therefore $C_h$ still fulfils the requirements that $S_i \subset C_h$.
 \item[c)] The algorithm does not select a clique and adds a new clique made 
only of the vertex $v_m$. The intersection with any clique is the empty set and 
the result follows trivially.
 \end{enumerate}
\end{proof}

\subsection{The clique tree shrinkage target} \label{sec:clique:tree:target}

The clique tree target is a generalisation of the constant correlation target \citep{Ledoit110} where the target matrix is created gluing together smaller correlation target matrices that represent the cliques of a clique tree. The matrix is built is steps, starting from the calculation of the average correlation between elements in every clique $c$ in the clique tree.

\begin{equation}
\hat{\rho}_c =  \frac{\sum_{i \in c, j \in c, j > i} (\Sigma_c)_{ij}}{\sum_{i \in c, j \in c, j > i} 1}
\end{equation}
\noindent Next we build a clique level correlation matrix for every clique $c$ using the following rules:

\begin{itemize}
\item If $i \in c, j \in c, i = j $ then $(\hat{\hat{\Sigma}}_c)_{ij} = 1$
\item If $i \in c, j \in c, i \neq j $ then $(\hat{\hat{\Sigma}}_c)_{ij} = \frac{ \sum_{c^{\prime} \in \CG, i \in c^{\prime}, j \in c^{\prime}} \hat{\rho}_{c^{\prime}}}{\sum_{c^{\prime} \in \CG, i \in c^{\prime}, j \in c^{\prime}}1}$, that is we calculate average correlation as the mean of the average correlations of all the cliques the element belongs to.
\end{itemize}

\noindent And finally we build the estimate for the inverse applying the ``Lauritzen formula'' (\ref{ml:inverse}) 
\begin{equation} \label{ml:inverse:cost:corr}
\hat{J} = \sum_{c \in \CG} \left[  \left( (1-\theta) \hat{\Sigma}_c + \theta \hat{\hat{\Sigma}}_c \right)^{-1} \right]^V  -
 \sum_{s \in \SG} \left[  \left(   (1-\theta) \hat{\Sigma}_s + \theta \hat{\hat{\Sigma}}_s   \right)^{-1}\right]^V
\end{equation}

\begin{remark}
The constant correlation estimates $\hat{\hat{\Sigma}}$ are positive definite because every $\hat{\hat{\Sigma}}_c$ $(\hat{\hat{\Sigma}}_s)$ is the normalized sum of positive definite matrices.
\end{remark}




\begin{theorem} \label{prop:peo}
The \mfcf{} algorithm adds vertices in reverse perfect elimination order, provided 
that the vertices in the initial cliques $C_I$ are arranged in a reverse 
perfect order.
\end{theorem}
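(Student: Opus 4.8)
The plan is to proceed by induction on the number of vertices added, in parallel with the proof of Theorem~\ref{prop:perfect:sequence}, but tracking the \emph{simpliciality of the newly inserted vertex} rather than the completeness of the newly created separator. Label the vertices $v_1, v_2, \dots, v_n$ in the order the \mfcf{} processes them, so that $v_1,\dots,v_{|C_I|}$ are the vertices of the initial cliques $C_I$ in the order in which they are listed and $v_{|C_I|+1},\dots,v_n$ are the outstanding vertices in the order S5 selects them. Write $G^{(m)}$ for the graph obtained after $m$ vertices have been added. The goal is to show that $[\,v_n, v_{n-1}, \dots, v_1\,]$ is a perfect elimination order of $G = G^{(n)}$, which is exactly what ``reverse perfect elimination order'' means.

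The first ingredient is an elementary structural lemma: the clique expansion operator of Section~\ref{sec:clique:expansion} adds, at each step, only edges incident to the vertex being inserted; consequently $G^{(m)}$ coincides with the subgraph of $G$ induced by $\{v_1,\dots,v_m\}$, and in particular the neighbourhood of $v_m$ inside $\{v_1,\dots,v_m\}$ is fixed at step $m$ and is never enlarged by later insertions. Combined with the fact that eliminating a simplicial vertex introduces no fill-in and leaves behind precisely the induced subgraph on the remaining vertices, this reduces the theorem to the single claim: for every $m$, the vertex $v_m$ is simplicial in $G^{(m)}$. Indeed, if that holds, then running the elimination in the order $v_n, v_{n-1}, \dots, v_1$ deletes, at step $k$, the vertex $v_{n+1-k}$ from the graph $G^{(n+1-k)}$ (no fill-in having accumulated), and simpliciality there is exactly the defining property of a perfect elimination order.

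For $m > |C_I|$ the claim is immediate: when S5--S6 insert $v_m$ with chosen separator $S \subseteq C_a$, the neighbours of $v_m$ in $G^{(m)}$ are exactly $S$ (respectively $C_a$ in the full-expansion case $S = C_a$, or $\emptyset$ in the isolated-vertex case), and in every case this set is a subset of the complete graph $C_a$, hence a clique of $G^{(m)}$; so $v_m$ is simplicial in $G^{(m)}$. For $m \le |C_I|$, the hypothesis that the vertices of $C_I$ are arranged in a reverse perfect order is, by definition, the statement that $[\,v_{|C_I|},\dots,v_1\,]$ is a perfect elimination order of the initial clique forest $G^{(|C_I|)}$, i.e.\ each such $v_m$ is simplicial in the relevant induced subgraph; this provides the base of the induction, and one may alternatively invoke \citep[Th.~3.4]{blairpeython1992} to phrase it via the perfect ordering of $C_I$. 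Assembling the two cases gives the claim for all $m$, and hence the theorem.

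The only genuinely delicate point — and the one I would be careful to spell out — is that $v_m$ need \emph{not} be simplicial in the final graph $G$, since a later insertion may attach a vertex to a clique containing $v_m$; simpliciality only survives in $G^{(m)}$, and it is precisely the decision to eliminate in the \emph{reverse} of the insertion order that guarantees the graph present when $v_m$'s turn arrives is $G^{(m)}$. Once the structural lemma and this ordering observation are stated cleanly, the rest is bookkeeping.
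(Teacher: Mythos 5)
Your proposal is correct and follows essentially the same route as the paper's proof: induction on the insertion order, observing that each newly added vertex is attached only to a separator $S \subseteq C_a$ (or to $C_a$ itself, or to nothing), which is complete, so the vertex is simplicial in the graph built so far and hence the reverse of the insertion order is a perfect elimination order. Your explicit structural lemma (that clique expansion adds only edges incident to the new vertex, so $G^{(m)}$ is the induced subgraph on the first $m$ vertices) and your handling of the special cases and of the initial cliques $C_I$ simply spell out bookkeeping that the paper's shorter argument leaves implicit.
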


\begin{proof}
By induction on the number of vertices, let us assume that the total number of 
vertices is $k$ and that $j$  have been added by the \mfcf{} and that they are 
ordered in reverse perfect order $\left\lbrace v_k, v_{k-1}, \dots, v_j \right\rbrace$. 
When adding the next vertex $v_i$ it is by construction added to a separator 
$S_i$ which is complete, and therefore $adj(i) \cap G_j = S_i$ is trivially 
complete.
\end{proof}

\section{Experiments}

\subsection{Data} \label{sec:Data}

We test the performance of the algorithm on three types of synthetic data and on a real dataset of stocks returns.  The synthetic data are multivariate Gaussian generated using respectively: 
(1) a sparse chordal inverse matrix with known sparsity pattern; 
(2) a factor model; 
(3) a random positive definite matrix generated from random eigenvalues and a random rotation. 
The real example is taken from a long-return series of stock prices. 
All the datasets used in the experiments have been produced for 100 variables ($p = 100$) and varying time series lengths ($n \in \{25, 50, 75, 100, 200, 300, 400, 500, 750, 1000, 1500\}$). The details about the data generation process are described in the sub-Sections \ref{sec:random:clique:forest}, \ref{sec:random:cluster}, \ref{sec:random:factor} and \ref{sec:exp:real:data}.

For every type of data we generate the following datasets: 

\begin{enumerate}
	\item The \emph{train data set} which is used to learn the model parameters, such as the \mfcf{} network and the elements of the precision matrix. For every type of data we generate 5 distinct training data sets to test reproducibility. 
	\item The \emph{validation data set} is used to select the model hyper-parameters: these are the $L_1$ penalty for the graphical lasso, the shrinkage parameter for the shrinkage method, and the maximum clique size and shrinkage parameter for the \mfcf{}. For all methods we perform a grid search over the hyper-parameters and select the model that achieves the best likelihood on the validation dataset. In analogy with the train data we generate 5 distinct validation data sets.
	\item The \emph{test data set} is used to assess the performance of the models. We use 10 distinct test datasets for every training/validation data set and therefore for every data type we have 50 test datasets.
\end{enumerate}

\FloatBarrier
\subsubsection{Synthetic data: sparse decomposable precision matrix} \label{sec:random:clique:forest}

This data has been produced with a multivariate model from a sparse inverse covariance matrix (the benchmark precision matrix) where the non-zero structure pattern is a clique forest. The clique forest was generated by applying  repeatedly the clique expansion operator  with a random choice of the vertices, cliques and separators that were available at any steps.


For every clique $c \in \mathcal{C}$  we have defined a factor $F_c$ distributed as $\mathcal{N}(1,1)$. Next we add up the factor contributions at the variable level: $X_i = \sum_{\left\lbrace  c \in \mathcal{C} \vert i \in c \right\rbrace} F_c + \epsilon_{i,c}$, where $\epsilon_{i,c} \sim \mathcal{N}(0, 0.1)$ is a small noise factor to avoid perfect correlation between the variables in the clique $c$. Finally we assemble the sparse inverse covariance matrix  by using the familiar sum over cliques and separators (see the description of Eq. \ref{ml:inverse} for the explanation of the notation): $ J = \sum_{c \in \CG} \left[  \left( \Sigma_c \right)^{-1} \right]^V  - \sum_{s \in \SG} \left[  \left( \Sigma_s \right)^{-1}\right]^V$. 

As an example to help with the intuition, let us consider two cliques $c_A$ and $c_B$ with non empty intersection $S = c_A \cap c_B$. In this case the structure of the precision matrix consists of two blocks that overlap on the variables $X_S$. This means that the partial correlations of the variables $X_A$ and $X_B$, controlling for $X_S$, are all zero, and this in turn means that $X_A \ci X_B \vert X_S$. In the particular case where $S = \emptyset$, the variables $X_A$ and $X_B$ are unconditionally independent.

The exact inverse of the precision matrix has been used to generate the training, validation and test data sets, using the function \texttt{mvrnorm} of the package MASS \citep{MASS} developed for the R language \citep{rlanguage}.

\FloatBarrier
\subsubsection{Synthetic data: Full Positive Definite Matrix from package ``clusterGeneration''}  \label{sec:random:cluster}

This data has been generated using the R package ``clusterGeneration'' (\cite{clusterGeneration}). The methodology is to produce a vector of random eigenvalues ($p = 100$ values in the range $[0.01, 100]$ in this experiment) and to {rotate} the diagonal matrix of eigenvalues with a random orthogonal matrix to produce a dense positive definite matrix that is used as the benchmark reference covariance. As in the previous example the generation of the data sets has been carried out using the package `MASS' as described in \ref{sec:random:clique:forest}.

\FloatBarrier
\subsubsection{Random Factor Model with noise} \label{sec:random:factor}

This data set has been generated by building a factor model with 5 factors. For a review of factor models, with particular regards to large factor models see \citet{bai2008large}; here we follow their conventions and model the variables $\mathbf{X}$ as $\mathbf{X} = \mathbf{\Lambda} \mathbf{F} + \mathbf{\epsilon}$ where: $\mathbf{F}$ is an $f \times n$ matrix, with $f < p $ the number of factors, $\mathbf{\Lambda}$ is the $p \times f$ matrix of \emph{factor loadings} and $\mathbf{\epsilon}$ is the $p \times n$ idiosyncratic term.

Accordingly, the correlation matrix breaks down in two parts: $\mathbf{\Sigma} = \mathbf{\Lambda} \mathbf{\Lambda'} + \mathbf{\Omega}$, where  $\mathbf{\Lambda} \mathbf{\Lambda'}$ is the systematic component and $\mathbf{\Omega}$ is  the idiosyncratic component .

The training, validation and test matrices have been generated using $f = 5$. The factor loadings have been randomly generated from independent normal distribution and the factors have been generated as independent normal variates. As the factor loadings are in general different from zero, this model is dense.

\FloatBarrier
\subsubsection{Real Data} \label{sec:exp:real:data}

This data set contains a set of stock returns for 342 companies over 4025 trading days, as described in \cite{barfuss2016parsimonious}. 
For every training, validation and test execution we have sampled randomly without replacement $p=100$ time series. 
The training, validation and testing datasets have been sampled taking days, with replacement, from the total time series of 4025 trading days.

The estimate of the `real'  reference covariance matrix has been produced using the full dataset, and this has been used as a benchmark for the estimates produced by the models.

\subsection{Performances indicators} \label{sec:perf:ind}

For every test set we collect the following performance indicators: \footnote{We define: TP (True Positives) as the count of elements in the precision matrix that are correctly predicted as different from zero, TN (True Negatives) as the count of elements in the precision matrix that are correctly predicted as zero, and FP (False Positives) and FN (False Negatives) in analogous fashion. This is possible only when the `true' precision matrix is known (synthetic data) and these measures are meaningful only when it is sparse.}

\begin{enumerate}

	\item Log likelihood, which is   $ \frac{p}{2} \left( \log \vert J \vert  - \Tr \left( \hat{\Sigma} \cdot J \right) \right)$ (consistently with the definition of the objective function used in the R package glasso we omit the constant). Please note that $J$ is the precision matrix estimated using the training dataset, while $\hat{\Sigma}$ is the sample correlation estimated on the test dataset.
	
	\item $Accuracy = \frac{TP + TN}{TP + TN + FP + FN}$, which is the fraction of entries in the precision matrix $J$ that are correctly predicted as zero or non-zero.
	
	\item $Sensitivity = \frac{TP}{TP+FN}$, which is the fraction of non-zero entries in the precision matrix $J$ that are correctly predicted by the models.
	
	\item $Specificicty = \frac{TN}{TN+FP}$, which is the fraction of zero entries in the precision matrix $J$ that are correctly predicted by the model.  
	
	\item The \emph{correlation} of the estimated precision matrix with the true precision matrix (which is known in the case of synthetic data) or with the maximum likelihood estimate of the precision matrix computed on the longest possible data set (in the case of real data). The correlation is calculated as if the two matrices were vectors in $\mathbb{R}^{p^2}$.
	
	\item \emph{Eigenvalue distance} is the $\mathbb{R}^2$ norm of the vector of differences of the eigenvalues of the real or maximum-likelihood estimate precision matrix and the estimated precision matrix $\left(\sum_{i=1}^p (\hat{\lambda}_i - \lambda_i)^2 \right)^{\frac{1}{2}}$.
	
	\item \emph{Eigenvalue inverse distance} is the $\mathbb{R}^2$ norm of the the vector of differences of the reciprocal of the eigenvalues of the real or maximum-likelihood estimate precision matrix and the estimated precision matrix $\left(\sum_{i=1}^p (\hat{\lambda}_i^{-1} - \lambda_i^{-1})^2 \right)^{\frac{1}{2}}$.

\end{enumerate}

\subsection{Results}

\subsubsection{Synthetic data: sparse decomposable precision matrix} \label{sec:random:clique:forest:res}

Figure \ref{fig:ex:Chordal} provides a box plot representing the mean the confidence interval and the extreme values of the log-likelihood achieved by the algorithms over the test data sets, broken down by the length of the series\footnote{The boxplots in this paper have been produced with the R \citep{rlanguage} package GGPLOT2 \citep{ggplot}. According to the package documentation the first lower and upper hinges correspond to the first and third quantile, the upper whisker covers the values form the third quartile hinge to $1.5$ times the inter-quartile range away from the hinge, and similarly the lower whisker covers the values between the first quartile hinge and $1.5$ times the interquartile range below the hinge. The remaining points are considered outliers and plotted individually.}. 
We observe that, in all cases, the \mfcf{} algorithms outperform both the graphical lasso and the shrinkage estimator. 
The graphical lasso improves performances as the length increases but does not exceeds \mfcf{}s.
The dispersion around the mean is similar for all methods and it has been computed by repeating the experiments on 50 independent datasets (10 testing sets for each 5 training and validating sets, as explained in \ref{sec:Data}).

Table \ref{tab:Chordal:shrink} reports the average value of the graphical lasso penalty parameter and of the shrinkage parameter as selected by the grid search. 
As expected, the parameters become smaller as the series length grows, with \mfcf{}s requiring less shrinkage/penalisation than the other methodologies\footnote{The comparison of penalty and shrinkage parameter is purely indicative, as the two parameters are not directly comparable.}, especially with short time series. We believe that this is a desirable feature of the \mfcf{} algorithm: the topological constraint allows to estimate with good accuracy the cliques with high likelihood, and excludes edges with low likelihood with the end effect of requiring less shrinking.

\begin{table}[ht]
\centering
\begingroup\footnotesize
\begin{tabular}{|r|C{2cm}|C{2cm}|C{2cm}|C{2cm}|C{2cm}|C{2cm}|}
  \hline
\thead{Series \\ length} & \thead{GLASSO \\ XVAL} & \thead{MFCF \\ FIX} & \thead{MFCF \\ FIX ID} & \thead{MFCF \\ VAR} & \thead{MFCF \\ VAR ID} & SHRINKAGE \\ 
  \hline
  25 & 0.150 & 0.005 & 0.005 & 0.005 & 0.005 & 0.726 \\ 
    50 & 0.120 & 0.002 & 0.002 & 0.002 & 0.001 & 0.522 \\ 
    75 & 0.102 & 0.002 & 0.001 & 0.002 & 0.001 & 0.374 \\ 
   100 & 0.056 & 0.001 & 0.001 & 0.001 & 0.001 & 0.257 \\ 
   200 & 0.014 & 0.001 & 0.001 & 0.001 & 0.001 & 0.074 \\ 
   300 & 0.011 & 0.001 & 0.000 & 0.001 & 0.000 & 0.022 \\ 
   400 & 0.010 & 0.000 & 0.000 & 0.000 & 0.000 & 0.020 \\ 
   500 & 0.010 & 0.000 & 0.000 & 0.000 & 0.000 & 0.004 \\ 
   750 & 0.010 & 0.000 & 0.000 & 0.000 & 0.000 & 0.000 \\ 
  1000 & 0.010 & 0.000 & 0.000 & 0.000 & 0.000 & 0.000 \\ 
  1500 & 0.010 & 0.000 & 0.000 & 0.000 & 0.000 & 0.000 \\ 
   \hline
\end{tabular}
\endgroup
\caption{Mean penalty (GLASSO\_XVAL) or shrinkage parameters by length of time series. The statistics is based on 5 different calibrations (one per each training / validation set) of the shrinkage parameters per each length of the time series.}
\label{tab:Chordal:shrink}
\end{table}

Table \ref{tab:Chordal:param_no} reports the number of non zero elements in the precision matrix for every length of the time series. One can observe that the \mfcf{} algorithms are much more parsimonious than the graphical lasso (the shrinkage method produces always a full precision matrix). 

\begin{table}[ht]
\centering
\begingroup\footnotesize
\begin{tabular}{|r|R{2cm}|R{2cm}|R{2cm}|R{2cm}|R{2cm}|R{2cm}|}
  \hline
\thead{Series \\ length} & \thead{GLASSO \\ XVAL} & \thead{MFCF \\ FIX} & \thead{MFCF \\ FIX ID} & \thead{MFCF \\ VAR} & \thead{MFCF \\ VAR ID} & SHRINKAGE \\ 
  \hline
25 & 1194 & 99 & 99 & 98 & 98 & 4950 \\ 
  50 & 1118 & 99 & 99 & 135 & 135 & 4950 \\ 
  75 & 1161 & 138 & 138 & 136 & 136 & 4950 \\ 
  100 & 1730 & 158 & 158 & 191 & 191 & 4950 \\ 
  200 & 2586 & 216 & 216 & 195 & 176 & 4950 \\ 
  300 & 2632 & 216 & 216 & 231 & 231 & 4950 \\ 
  400 & 2549 & 216 & 216 & 231 & 231 & 4950 \\ 
  500 & 2451 & 216 & 236 & 213 & 231 & 4950 \\ 
  750 & 2254 & 255 & 255 & 246 & 246 & 4950 \\ 
  1000 & 2108 & 255 & 255 & 244 & 244 & 4950 \\ 
  1500 & 1867 & 294 & 294 & 281 & 281 & 4950 \\ 
   \hline
\end{tabular}
\endgroup
\caption{Mean number of non-zero coefficient in the precision matrix by length of time series. The statistics is based on 5 different calibrations (one per each training / validation set) per each length of the time series.} 
\label{tab:Chordal:param_no}
\end{table}

\begin{figure} 
\centering
\includegraphics[scale=0.8]{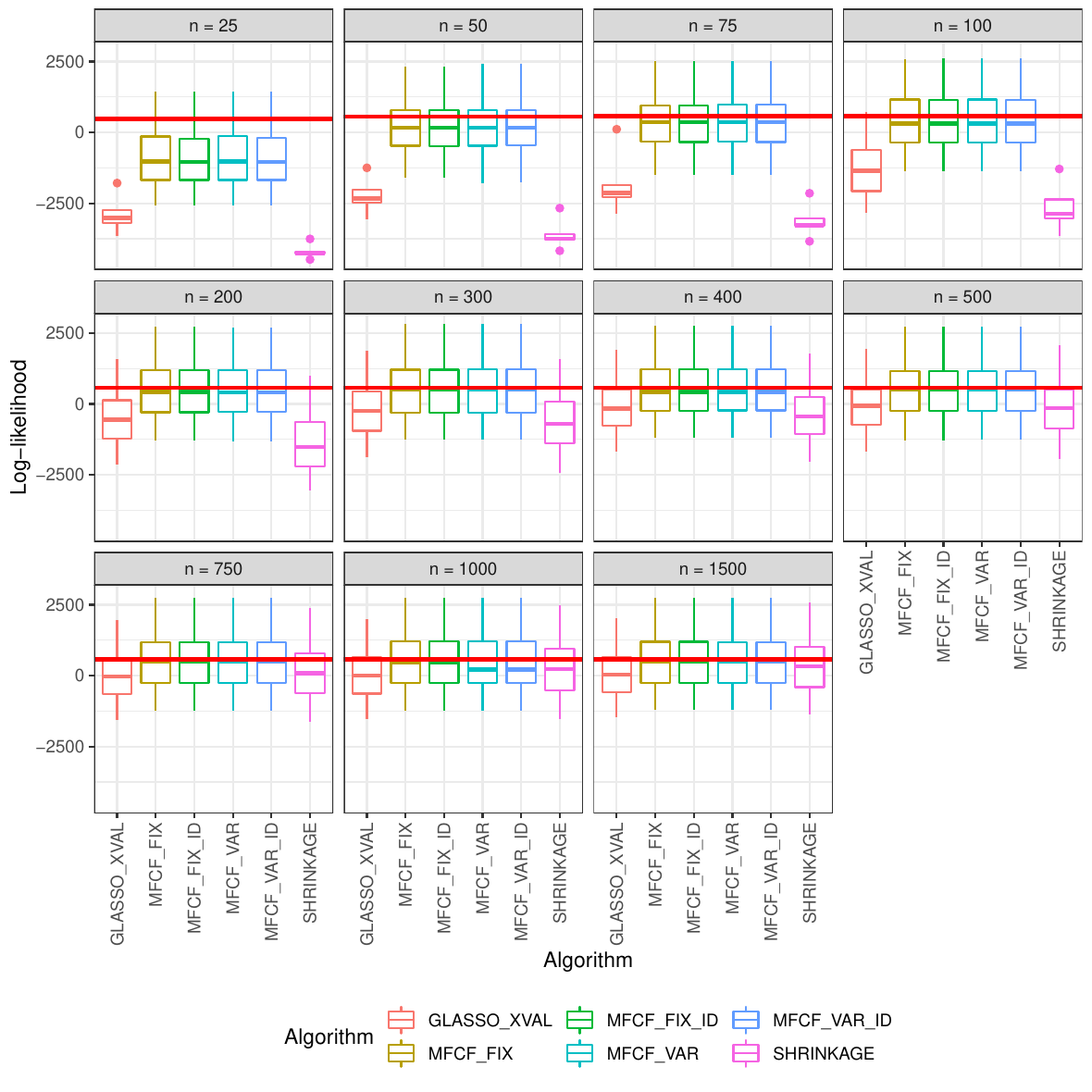}
  \caption{Box plot for the log-likelihood of the algorithms on synthetic data (sparse decomposable precision matrix) for different lengths of the series. The statistics is based on a total of 50 test sets (10 test sets for each of 5 different training / validation sets).}
  \label{fig:ex:Chordal}
\end{figure} 

Figure \ref{fig:ex:Chordal:1} shows a summary of the performance measures. We observe that that the \mfcf{} family is better, overall,  than the graphical lasso especially for what concerns accuracy and specificity. While the graphical lasso is more sensitive picking up more true positives. However, it is also less selective and produces denser precision matrices with a much higher number of false negatives. We observe that the performance of the graphical lasso improves in all measures for time series of length greater than 200, when the penalty parameter is essentially fixed at 0.01.
The \mfcf{} exhibit better log-likelihood, as already observed, and also larger correlations with the true precision matrix. 

\begin{figure}
\centering
\includegraphics[scale=0.8]{pictures/Chordal_acc.pdf}
 \caption{Performance measures of the algorithms on synthetic data (sparse decomposable precision matrix) for different lengths of the series. The statistics is based on a total of 50 test sets (10 test sets for each of 5 different training / validation sets).}
  \label{fig:ex:Chordal:1}
\end{figure} 

Figure \ref{fig:ex:Chordal:2} shows the distance between the spectra of the precision matrix produced by the models and the true precision matrix. The measure is normalised so that the identity matrix has distance  one. We observe that the \mfcf{} algorithms always perform better and the performance improves for all methods as the time series length increases, with the exception of the graphical lasso in the region where the penalty parameter is floored at 0.01.

\begin{figure}
\centering
\includegraphics[scale=0.8]{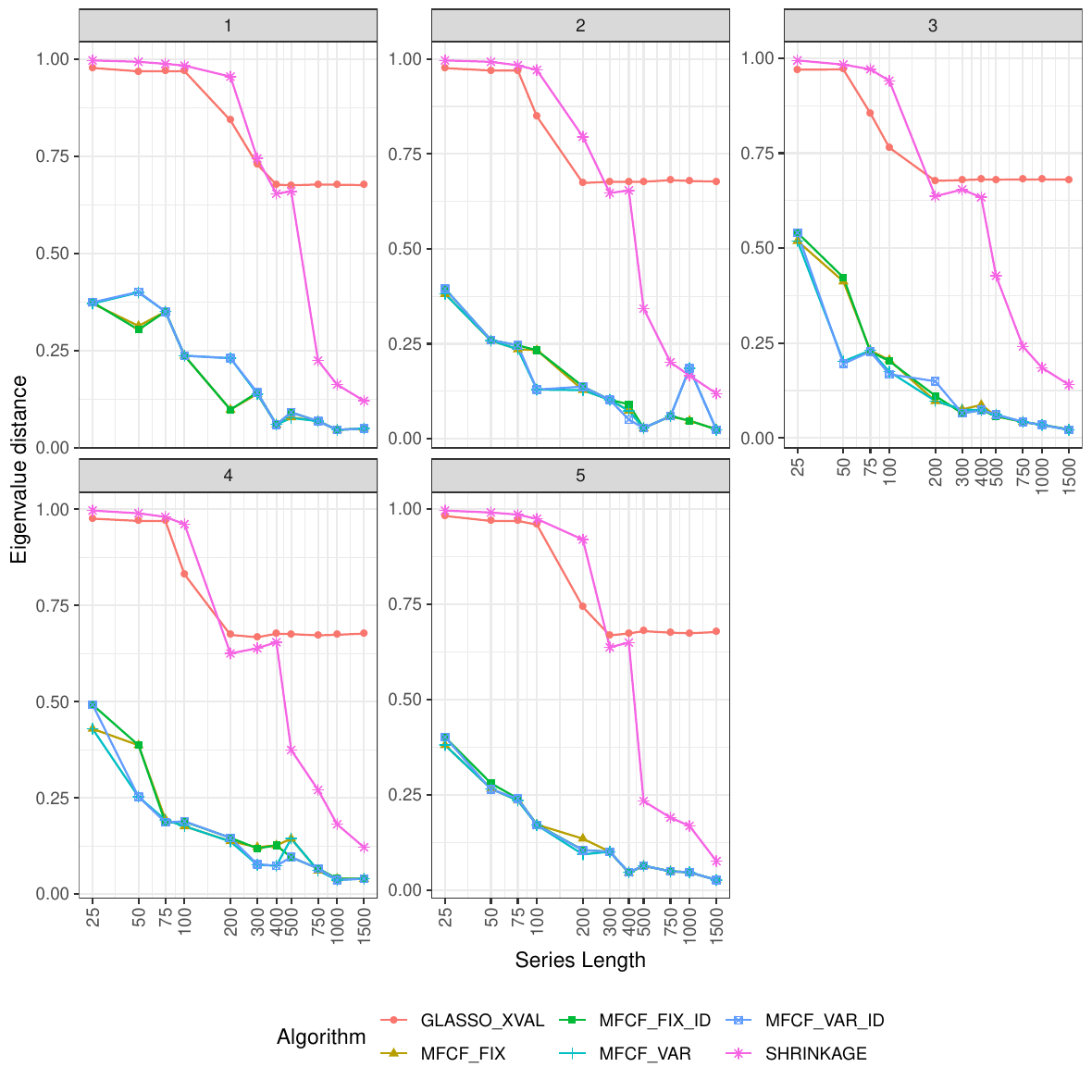}
 \caption{Eigenvalue distance for synthetic data (sparse decomposable precision matrix). The five panels show the values at different time series lengths for 5 training  / validation datasets.}
  \label{fig:ex:Chordal:2}
\end{figure} 

Figure \ref{fig:ex:Chordal:3} shows the distance between the inverse spectra of the precision matrix ($\mathbf{\lambda_i^{-1}}$) produced by the models and the ones for the true precision matrix. We observe that the \mfcf{} algorithms perform slightly better than the  graphical lasso and shrinkage but performance is very similar. Interestingly, in this case, the distance decreases with the time series length for all algorithms and there is no apparent effect due to the flooring of the graphical lasso penalty parameter.

\begin{figure}
\centering
\includegraphics[scale=0.8]{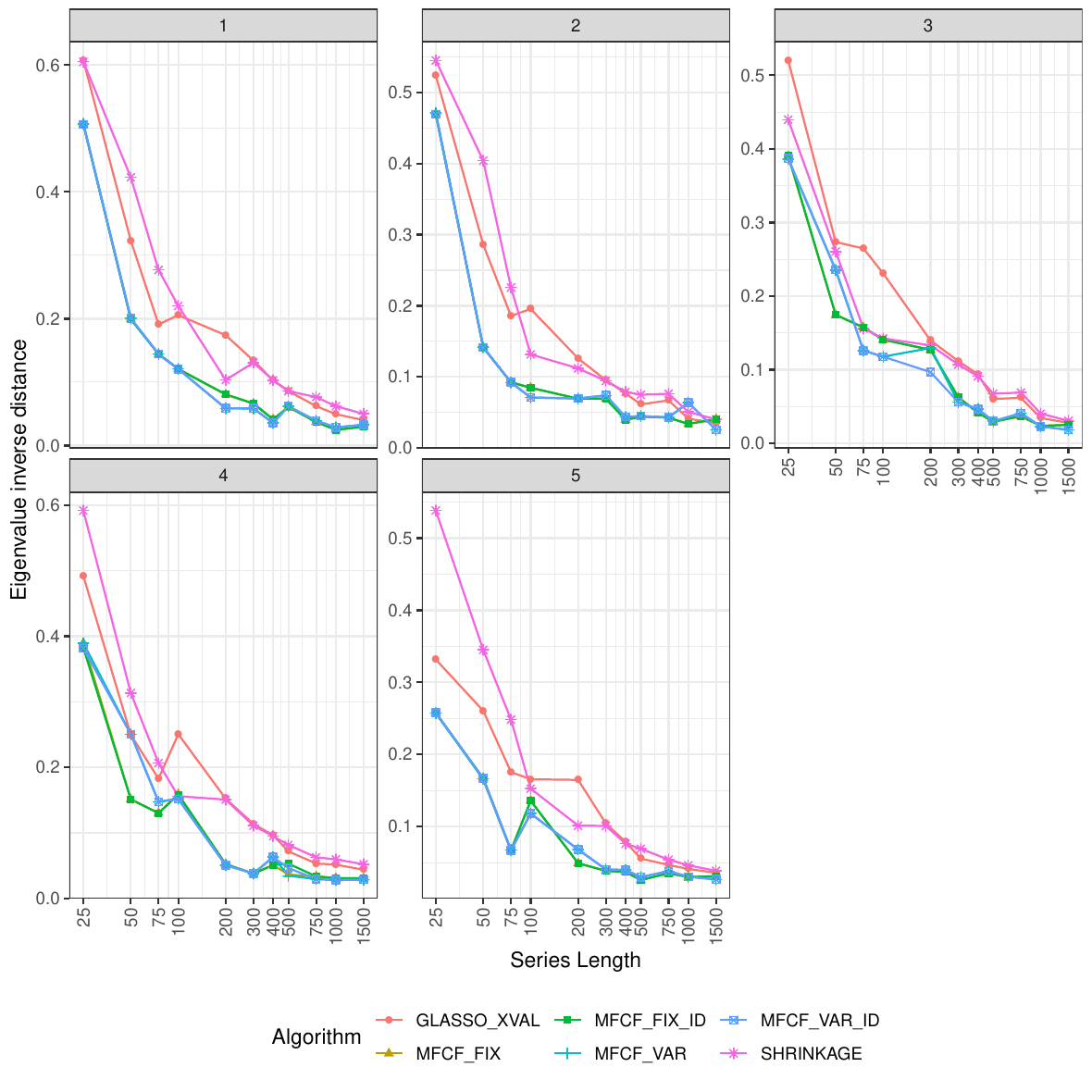}
 \caption{Inverse eigenvalue distance for synthetic data (sparse decomposable precision matrix).  The five panels show the values at different time series lengths for 5 training  / validation datasets.}
  \label{fig:ex:Chordal:3}
\end{figure} 

Figure \ref{fig:ex:Chordal:4} shows the number of cliques of different size produced by the \mfcfvar{} algorithm as a function of the maximum allowed clique size and of the time series length. We note that as the time series length increases the test becomes  less stringent with a higher number of large cliques in the model; conversely, when the time series is shorter ($n < p$), the models produced are more parsimonious. The number of cliques of size smaller than the maximum is linked to the degree of sparsity of the model. We will see in Section \ref{sec:factor:model:noise} that in the case of systems that are inherently dense the vast majority of the cliques will have the maximum allowed clique size.

\begin{figure}
\centering
\includegraphics[scale=0.8]{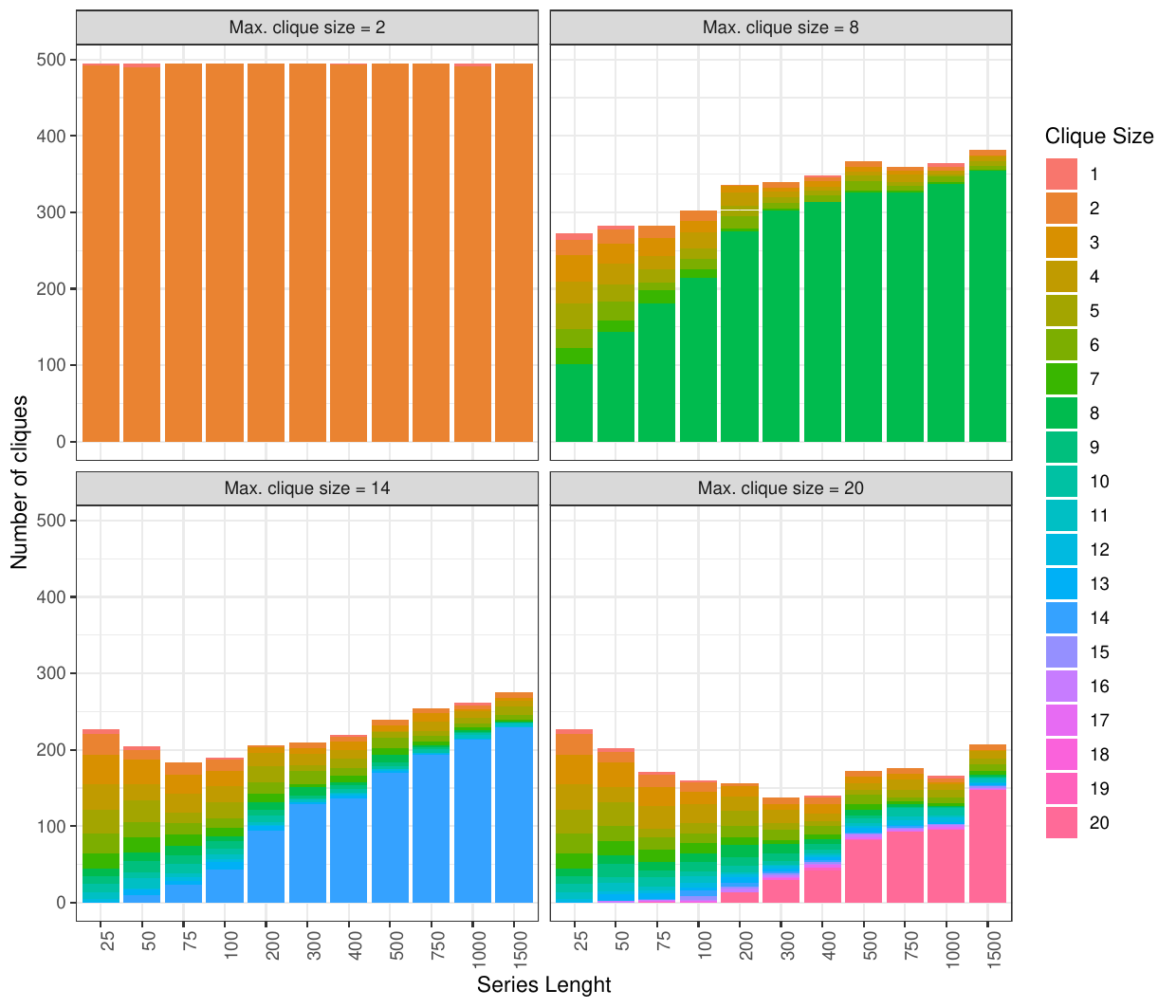}
 \caption{Composition of cliques from \mfcfvar{} for synthetic data (sparse decomposable precision matrix).   The statistics is based on a total of 5 different training / validation sets.}
  \label{fig:ex:Chordal:4}
\end{figure}

\FloatBarrier
\subsubsection{Synthetic data: Full Positive Definite Matrix from package ``clusterGeneration''. } \label{sec:posdef:matrxix:results}

In this sub-section and in the next two we repeat on different datasets all the analyses described in the previous subsection \ref{sec:random:clique:forest:res}.
Figure \ref{fig:ex:ClusterGen} displays the log-likelihood of the models. We observe that \mfcf{} algorithms perform overall better than either \glasso{} or \shrinkage{}, but is worth noting the overall low level of the log-likelihood for all models. In particular the \glasso{} performs worse than the null hypothesis (which has log-likelihood of 5000) for short time series. From Table \ref{tab:ClusterGen:shrink} we observe that the penalty or shrinkage parameters decrease but they retain higher overall values than in the other examples.

\begin{table}[ht]
\centering
\begingroup\footnotesize
\begin{tabular}{|r|C{2cm}|C{2cm}|C{2cm}|C{2cm}|C{2cm}|C{2cm}|}
  \hline
\thead{Series \\ length} & \thead{GLASSO \\ XVAL} & \thead{MFCF \\ FIX} & \thead{MFCF \\ FIX ID} & \thead{MFCF \\ VAR} & \thead{MFCF \\ VAR ID} & SHRINKAGE \\ 
  \hline
 25 & 0.33 & 0.97 & 0.95 & 0.99 & 0.99 & 0.99 \\ 
   50 & 0.24 & 0.97 & 0.89 & 0.97 & 0.91 & 0.99 \\ 
   75 & 0.22 & 0.94 & 0.88 & 0.95 & 0.87 & 0.95 \\ 
  100 & 0.17 & 0.88 & 0.83 & 0.90 & 0.83 & 0.93 \\ 
  200 & 0.12 & 0.85 & 0.75 & 0.86 & 0.77 & 0.89 \\ 
  300 & 0.12 & 0.73 & 0.64 & 0.72 & 0.62 & 0.86 \\ 
  400 & 0.12 & 0.63 & 0.55 & 0.63 & 0.54 & 0.84 \\ 
  500 & 0.12 & 0.56 & 0.48 & 0.54 & 0.46 & 0.80 \\ 
  750 & 0.10 & 0.40 & 0.36 & 0.39 & 0.38 & 0.76 \\ 
  1000 & 0.08 & 0.34 & 0.31 & 0.32 & 0.28 & 0.71 \\ 
  1500 & 0.05 & 0.20 & 0.18 & 0.20 & 0.18 & 0.62 \\ 
   \hline
\end{tabular}
\endgroup
\caption{Mean penalty/shrinkage parameter by length of time series. The statistics is based on a total of 5 different training / validation sets per length of the time series.} 
\label{tab:ClusterGen:shrink}
\end{table}

Table \ref{tab:ClusterGen:param_no} shows the number of non zero elements in the precision matrix for every length of the time series. We note that the statistically validated methods \mfcfvar{} and \mfcfvari{} produce consistently sparser models, without significant deterioration on the performance in terms of log-likelihood or correlation.

\begin{table}[ht]
\centering
\begingroup\footnotesize
\begin{tabular}{|r|R{2cm}|R{2cm}|R{2cm}|R{2cm}|R{2cm}|R{2cm}|}
  \hline
\thead{Series \\ length} & \thead{GLASSO \\ XVAL} & \thead{MFCF \\ FIX} & \thead{MFCF \\ FIX ID} & \thead{MFCF \\ VAR} & \thead{MFCF \\ VAR ID} & SHRINKAGE \\ 
  \hline
25 & 484 & 1062 & 1164 & 234 & 36 & 4950 \\ 
  50 & 459 & 465 & 679 & 250 & 276 & 4950 \\ 
  75 & 343 & 592 & 1129 & 255 & 353 & 4950 \\ 
  100 & 472 & 555 & 757 & 247 & 308 & 4950 \\ 
  200 & 472 & 351 & 687 & 238 & 301 & 4950 \\ 
  300 & 260 & 352 & 466 & 272 & 382 & 4950 \\ 
  400 & 155 & 331 & 369 & 272 & 343 & 4950 \\ 
  500 & 108 & 294 & 351 & 297 & 364 & 4950 \\ 
  750 & 165 & 370 & 408 & 330 & 428 & 4950 \\ 
  1000 & 171 & 313 & 427 & 292 & 336 & 4950 \\ 
  1500 & 922 & 313 & 313 & 282 & 349 & 4950 \\ 
   \hline
\end{tabular}
\endgroup
\caption{Mean number of non-zero coefficient in the precision matrix by length of time series. The statistics is based on a total of 5 different training / validation sets per length of the time series.} 
\label{tab:ClusterGen:param_no}
\end{table}

\begin{figure} 
\centering
\includegraphics[scale=0.8]{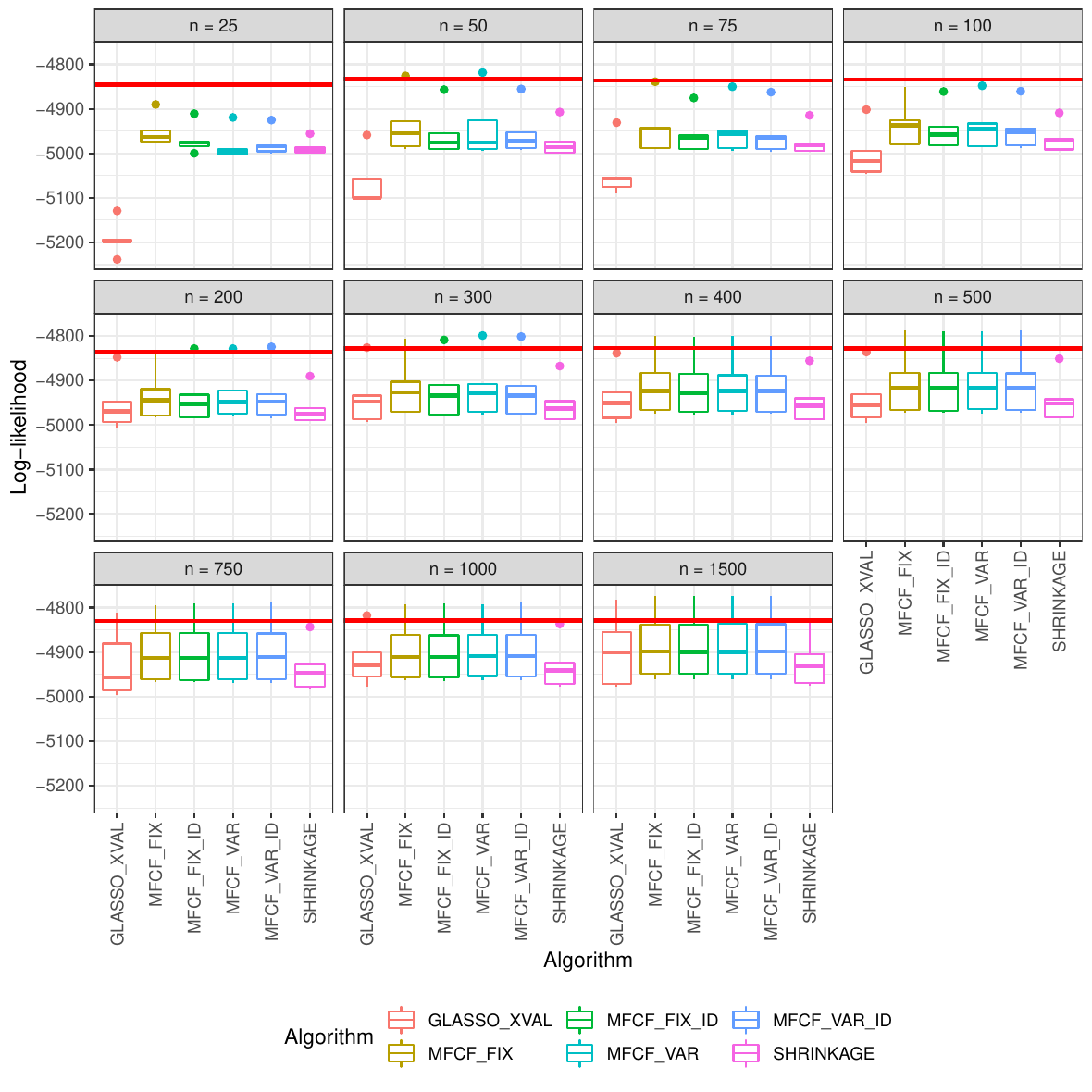}
  \caption{Box plot for the likelihood of the algorithms on synthetic data (random positive definite matrix generated by ClusterGen) for different lengths of the series. The statistics is based on a total of 50 test sets (10 test sets for each of 5 different training / validation sets) per length of the time series.}
  \label{fig:ex:ClusterGen}
\end{figure} 

The measures of performance are reported in Figures \ref{fig:ex:ClusterGen}-\ref{fig:ex:ClusterGen:4}. We note that the \mfcffix{} and \mfcffixi{} are more accurate for short time series as they pick up many more matrix elements than the validated methods, but this does not translate in improvements for the other measures of performance. 
The \mfcf{} methods seem to perform better than \glasso{} and \shrinkage{} also when it comes to distance of the eigenvalues, especially with short time series.
Interestingly, the composition of the clique structure produced by the \mfcfvar{} shown in Figure \ref{fig:ex:ClusterGen:4}suggests that even for medium and long time series the algorithm produces a mostly small or large cliques with only a small fraction of cliques with intermediate sizes.

\begin{figure}
\centering
\includegraphics[scale=0.8]{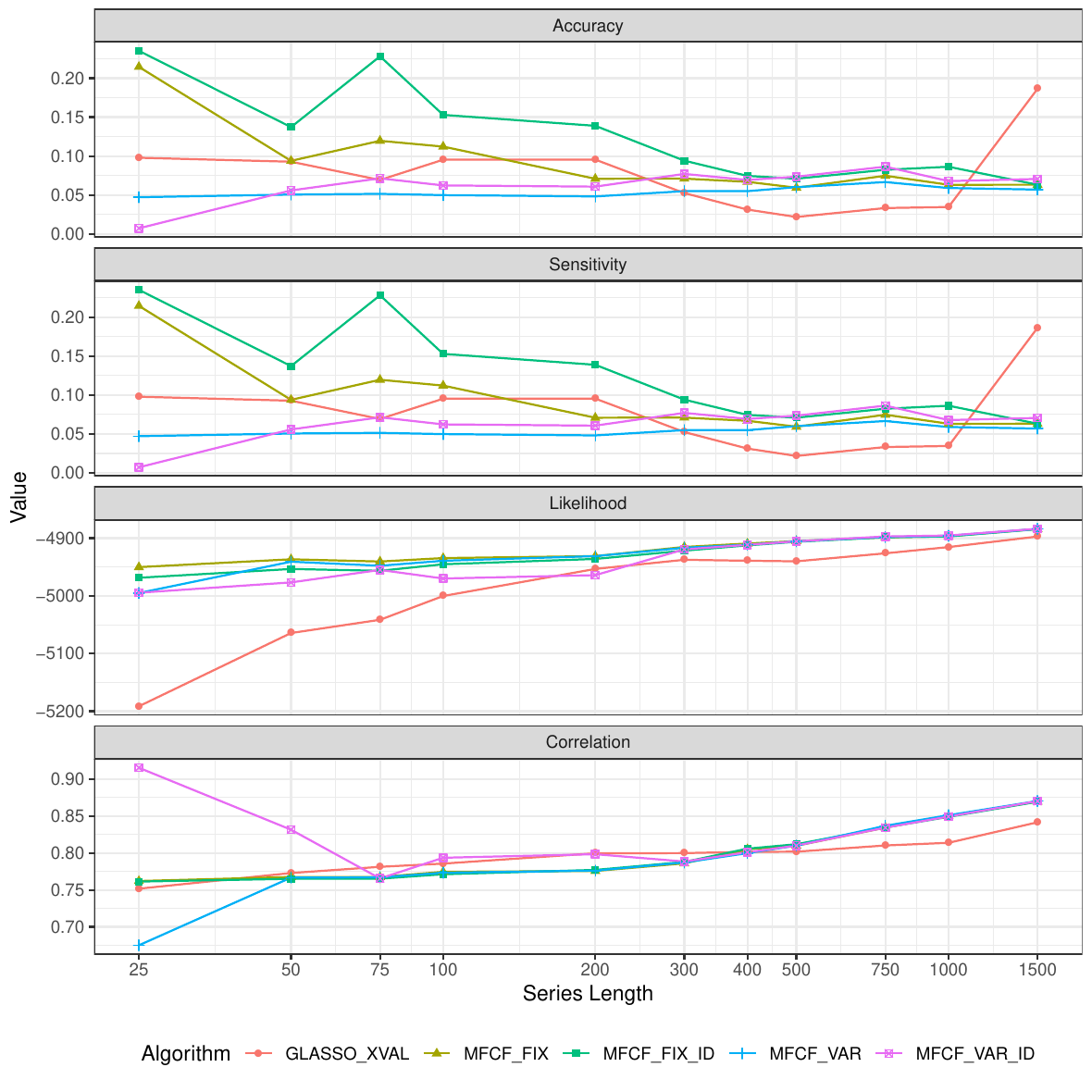}
 \caption{Performance of the algorithms on synthetic data (random positive definite matrix generated by ClusterGen) for different lengths of the series. The statistics is based on a total of 50 test sets (10 test sets for each of 5 different training / validation sets).}
  \label{fig:ex:ClusterGen:1}
\end{figure}

\begin{figure}
\centering
\includegraphics[scale=0.8]{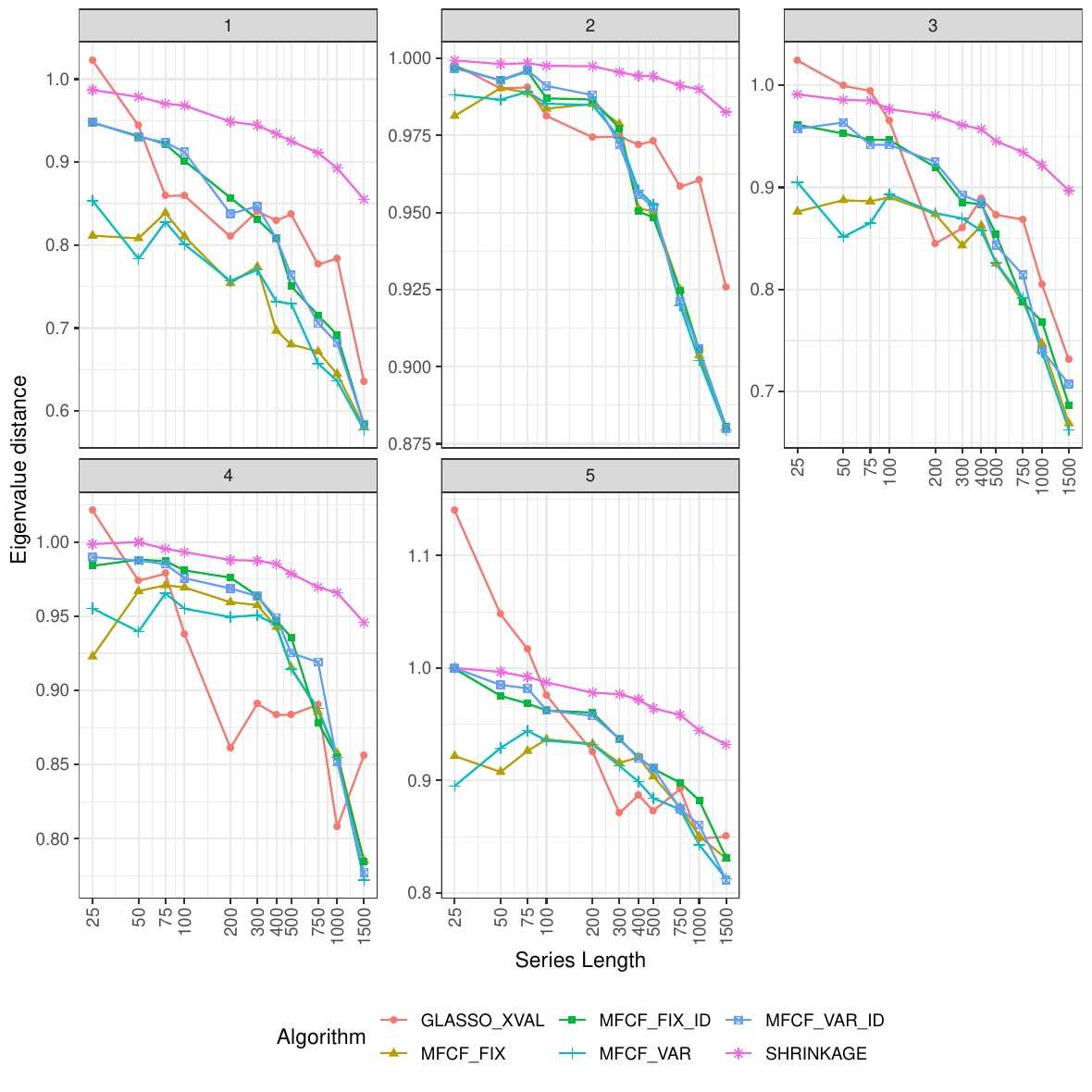}
 \caption{Eigenvalue distance for synthetic data (random positive definite matrix generated by ClusterGen). The five panels show the values at different time series lengths for 5 training  / validation datasets.}
  \label{fig:ex:ClusterGen:2}
\end{figure}

\begin{figure}
\centering
\includegraphics[scale=0.8]{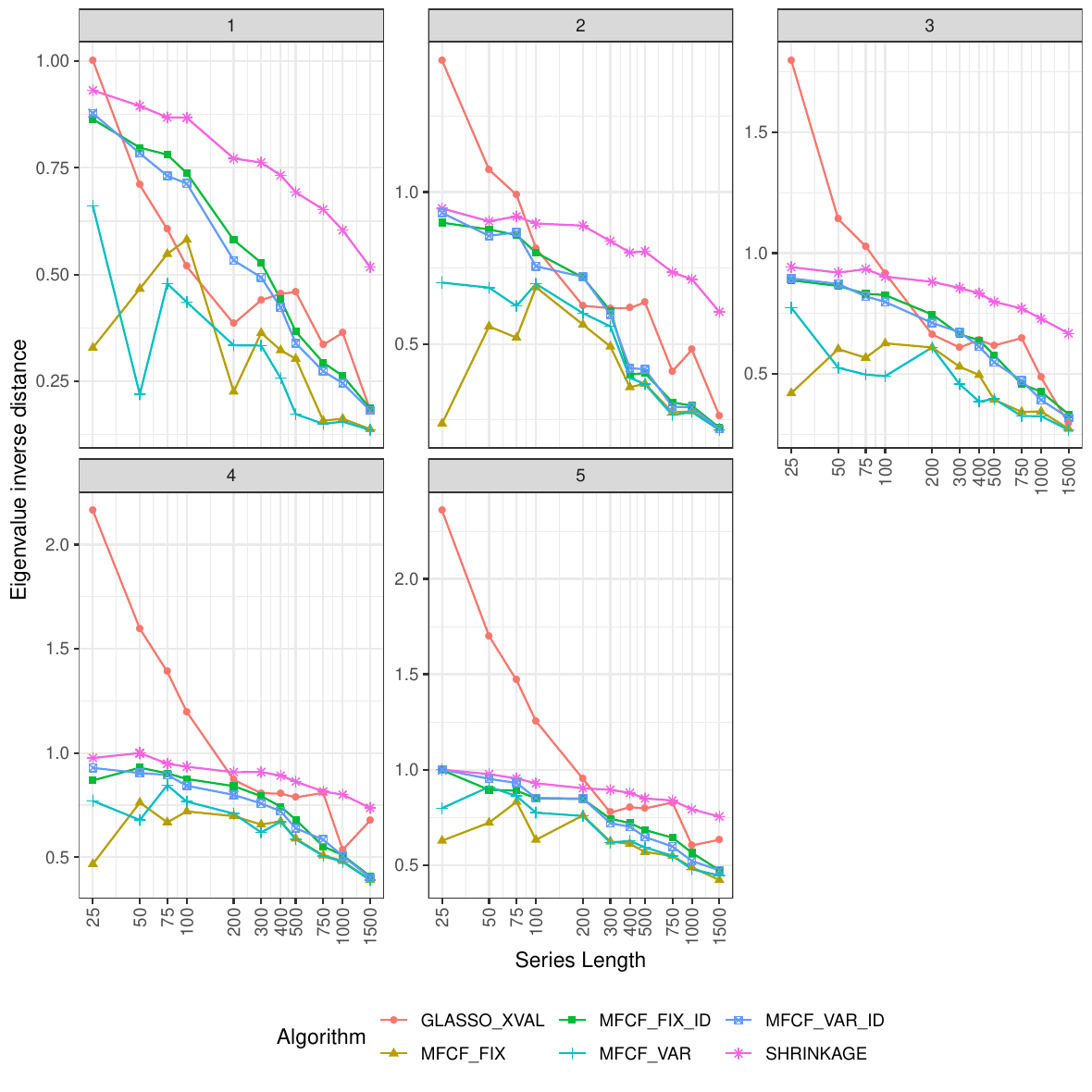}
 \caption{Inverse eigenvalue distance for synthetic data (random positive definite matrix generated by ClusterGen). The five panels show the values at different time series lengths for 5 training  / validation datasets.}
  \label{fig:ex:ClusterGen:3}
\end{figure}

\begin{figure}
\centering
\includegraphics[scale=0.8]{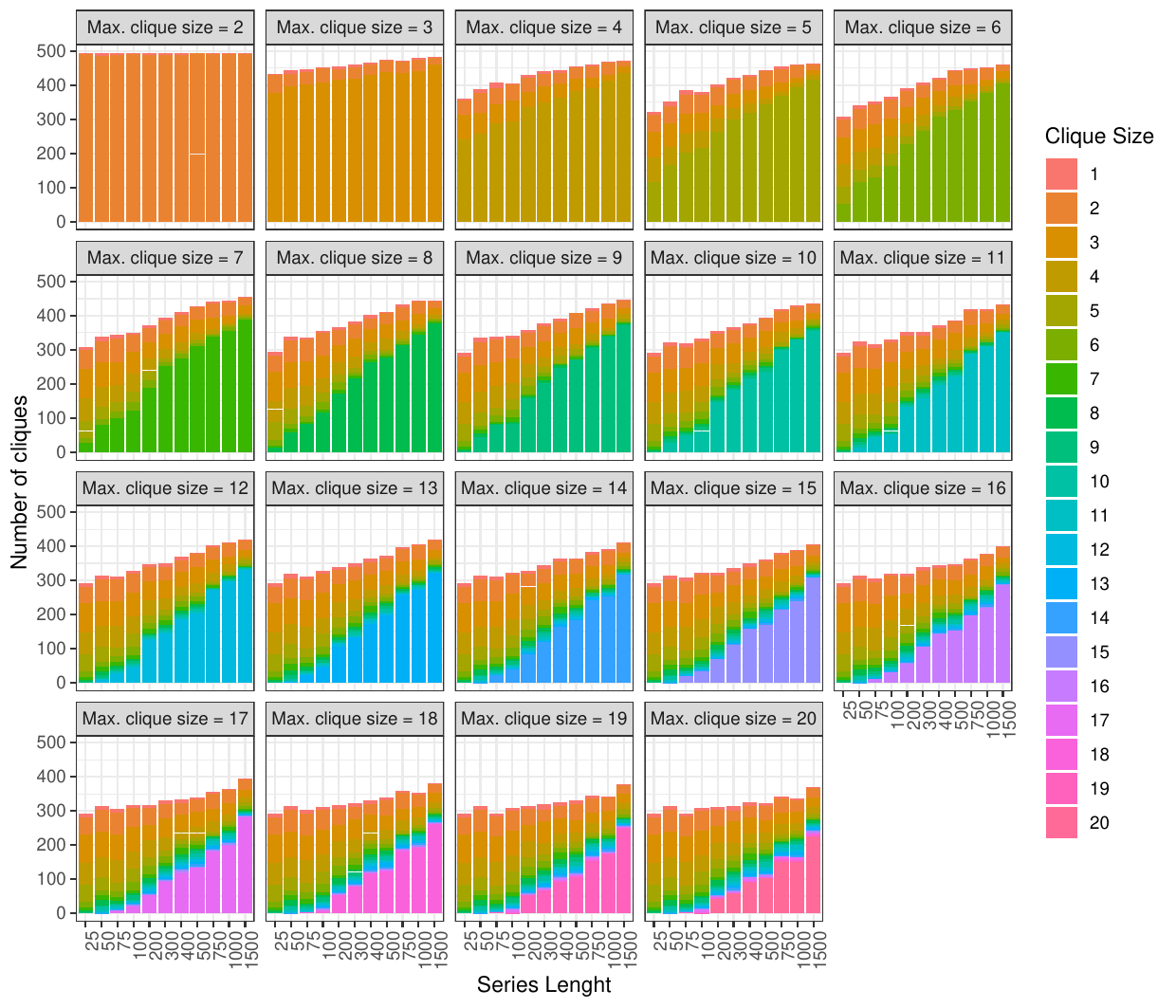}
 \caption{Composition of cliques from \mfcfvar{} for synthetic data (random positive definite matrix generated by ClusterGen). The statistics is based on a total of 5 different training / validation sets per length of the time series.}
  \label{fig:ex:ClusterGen:4}
\end{figure}

\FloatBarrier
\subsubsection{Random Factor Model with noise} \label{sec:factor:model:noise}

Performance measures are reported in Figures \ref{fig:ex:Factor}-\ref{fig:ex:Factor:4}. As discussed in Section \ref{sec:Data}, the loadings to the 5 factors are different from zero for every time series, and therefore the model is not suitable for local algorithms such as the \mfcf{}; this would probably explain why in this instance the \glasso{} performs better in terms of almost all measures. 
Table \ref{tab:Factor:shrink} shows that the behaviour of the shrinkage or penalty parameters are decreasing with series length as expected. 
Table \ref{tab:Factor:param_no} highlights how all the models tend to use as many parameters as possible, consistently with the constraints imposed on penalty and clique size.
Figure \ref{fig:ex:Factor:4} supports the idea that the underlying model is non local, since the validated methods tend to use exclusively the largest cliques allowed by the constraints. This suggests that the analysis of the clique sizes might provide insight into the sparsity of the data set, when the data generation process is not known.


\begin{table}[ht]
\centering
\begingroup\footnotesize
\begin{tabular}{|r|C{2cm}|C{2cm}|C{2cm}|C{2cm}|C{2cm}|C{2cm}|}
  \hline
\thead{Series \\ length} & \thead{GLASSO \\ XVAL} & \thead{MFCF \\ FIX} & \thead{MFCF \\ FIX ID} & \thead{MFCF \\ VAR} & \thead{MFCF \\ VAR ID} & SHRINKAGE \\ 
  \hline
 25 & 0.12 & 0.27 & 0.26 & 0.25 & 0.25 & 0.30 \\ 
   50 & 0.12 & 0.19 & 0.18 & 0.18 & 0.18 & 0.23 \\ 
   75 & 0.09 & 0.15 & 0.14 & 0.15 & 0.15 & 0.20 \\ 
  100 & 0.06 & 0.13 & 0.13 & 0.12 & 0.12 & 0.18 \\ 
  200 & 0.02 & 0.08 & 0.08 & 0.08 & 0.08 & 0.14 \\ 
  300 & 0.01 & 0.06 & 0.06 & 0.06 & 0.06 & 0.11 \\ 
  400 & 0.01 & 0.05 & 0.05 & 0.05 & 0.05 & 0.09 \\ 
  500 & 0.01 & 0.04 & 0.04 & 0.04 & 0.04 & 0.08 \\ 
  750 & 0.01 & 0.03 & 0.03 & 0.03 & 0.03 & 0.06 \\ 
  1000 & 0.01 & 0.02 & 0.02 & 0.02 & 0.02 & 0.04 \\ 
  1500 & 0.01 & 0.01 & 0.01 & 0.01 & 0.01 & 0.04 \\ 
   \hline
\end{tabular}
\endgroup
\caption{Mean penalty/shrinkage parameter by length of time series. The statistics is based on a total of 5 different training / validation sets per length of the time series.} 
\label{tab:Factor:shrink}
\end{table}


\begin{table}[ht]
\centering
\begingroup\footnotesize
\begin{tabular}{|r|R{2cm}|R{2cm}|R{2cm}|R{2cm}|R{2cm}|R{2cm}|}
  \hline
\thead{Series \\ length} & \thead{GLASSO \\ XVAL} & \thead{MFCF \\ FIX} & \thead{MFCF \\ FIX ID} & \thead{MFCF \\ VAR} & \thead{MFCF \\ VAR ID} & SHRINKAGE \\ 
  \hline
25 & 1168 & 1694 & 1661 & 1582 & 1597 & 4950 \\ 
  50 & 1206 & 1677 & 1661 & 1694 & 1694 & 4950 \\ 
  75 & 1308 & 1661 & 1661 & 1678 & 1678 & 4950 \\ 
  100 & 1648 & 1710 & 1710 & 1645 & 1645 & 4950 \\ 
  200 & 2143 & 1710 & 1710 & 1678 & 1694 & 4950 \\ 
  300 & 2426 & 1710 & 1694 & 1694 & 1694 & 4950 \\ 
  400 & 2568 & 1710 & 1710 & 1710 & 1710 & 4950 \\ 
  500 & 2814 & 1694 & 1694 & 1694 & 1694 & 4950 \\ 
  750 & 2760 & 1694 & 1694 & 1694 & 1694 & 4950 \\ 
  1000 & 2755 & 1710 & 1710 & 1678 & 1678 & 4950 \\ 
  1500 & 2761 & 1694 & 1694 & 1710 & 1710 & 4950 \\ 
   \hline
\end{tabular}
\endgroup
\caption{Mean number of non-zero coefficient in the precision matrix by length of time series. The statistics is based on a total of 5 different training / validation sets per length of the time series.} 
\label{tab:Factor:param_no}
\end{table}

\begin{figure} 
\centering
\includegraphics[scale=0.8]{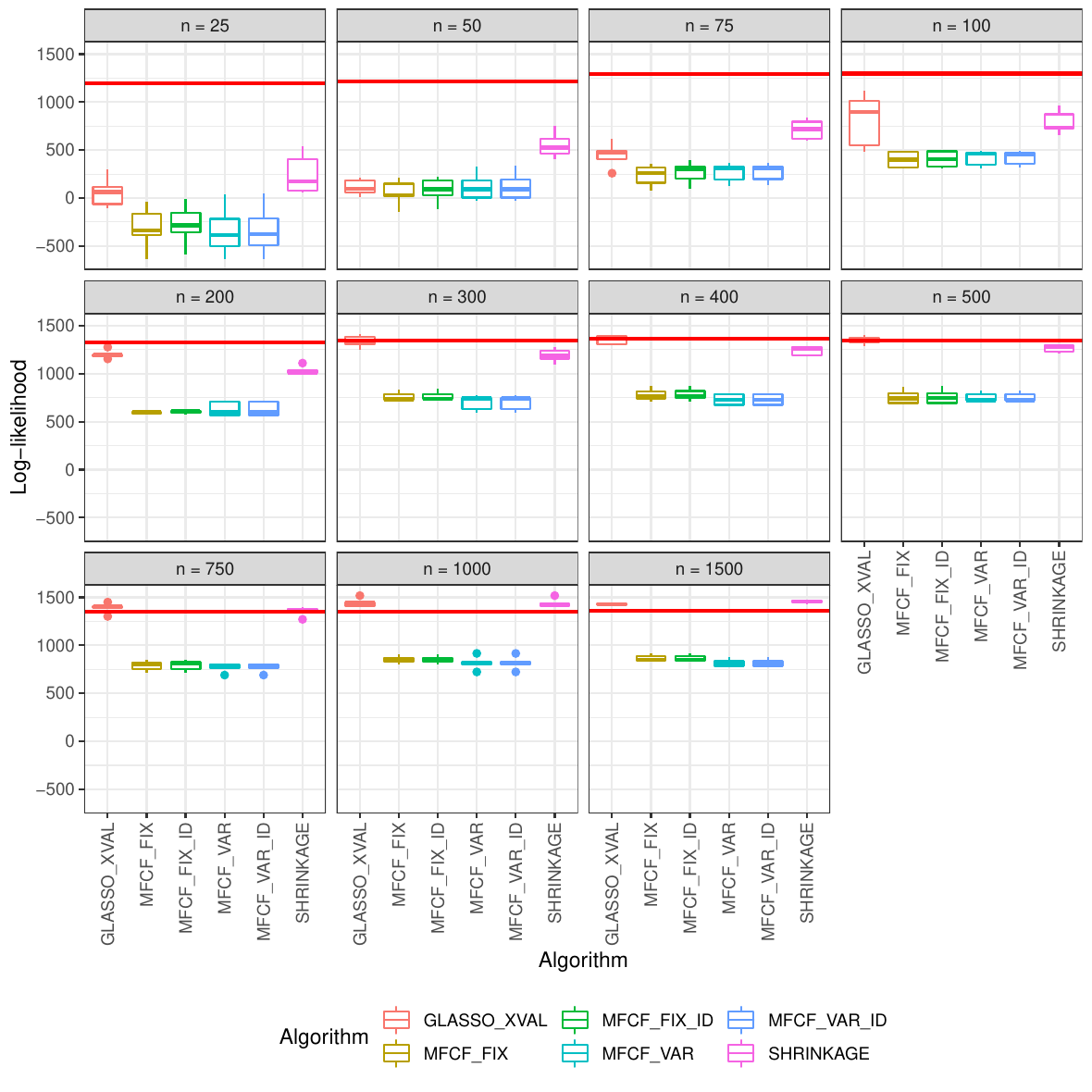}
  \caption{Box plot for the likelihood of the algorithms on synthetic data (factor model) for different lengths of the series. The statistics is based on a total of 50 test sets (10 test sets for each of 5 different training / validation sets) per length of the time series.}
  \label{fig:ex:Factor}
\end{figure}

\begin{figure}
\centering
\includegraphics[scale=0.8]{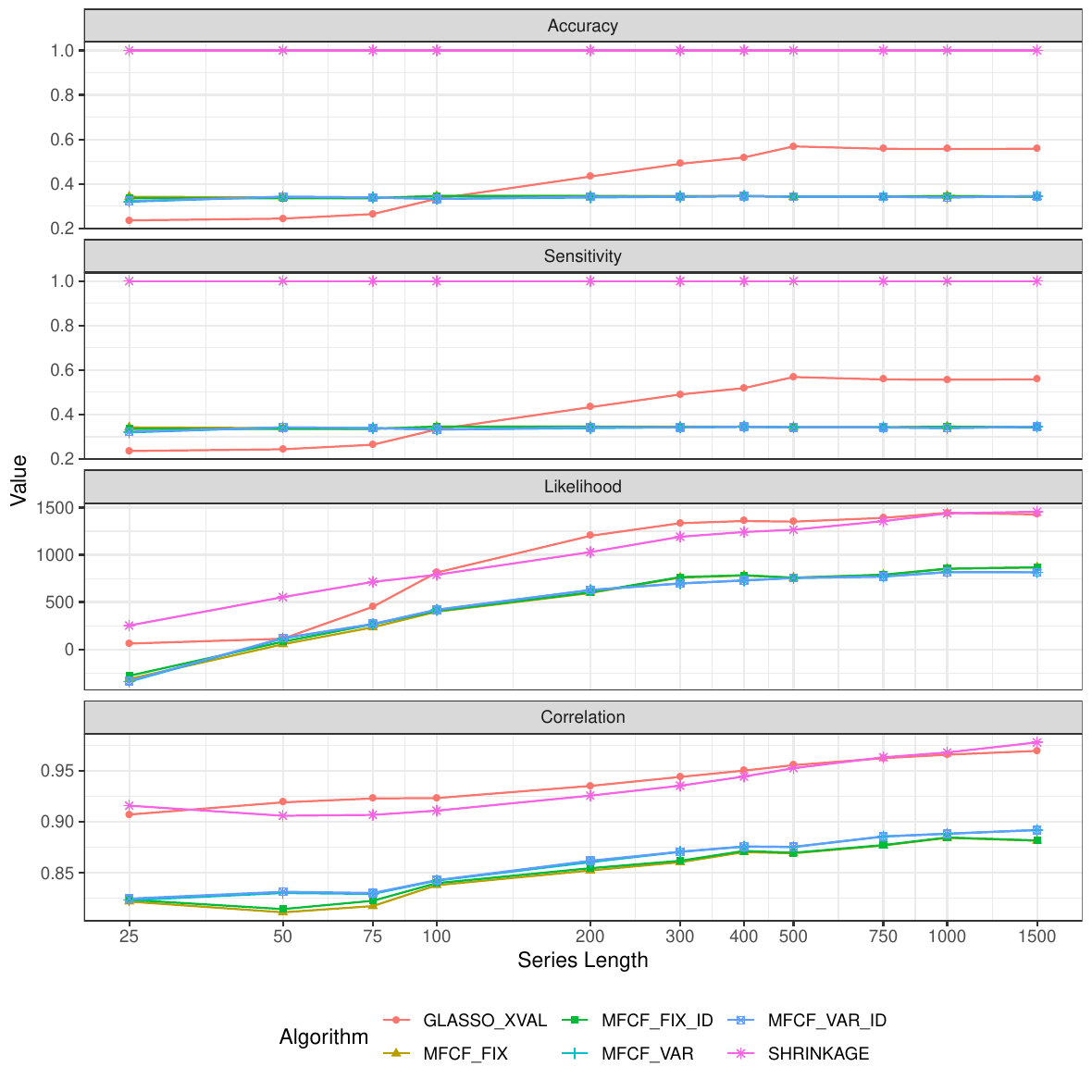}
 \caption{Performance of the algorithms on synthetic data (factor model)  for different lengths of the series. The statistics is based on a total of 50 test sets (10 test sets for each of 5 different training / validation sets) per length of the time series.}
  \label{fig:ex:Factor:1}
\end{figure}

\begin{figure}
\centering
\includegraphics[scale=0.8]{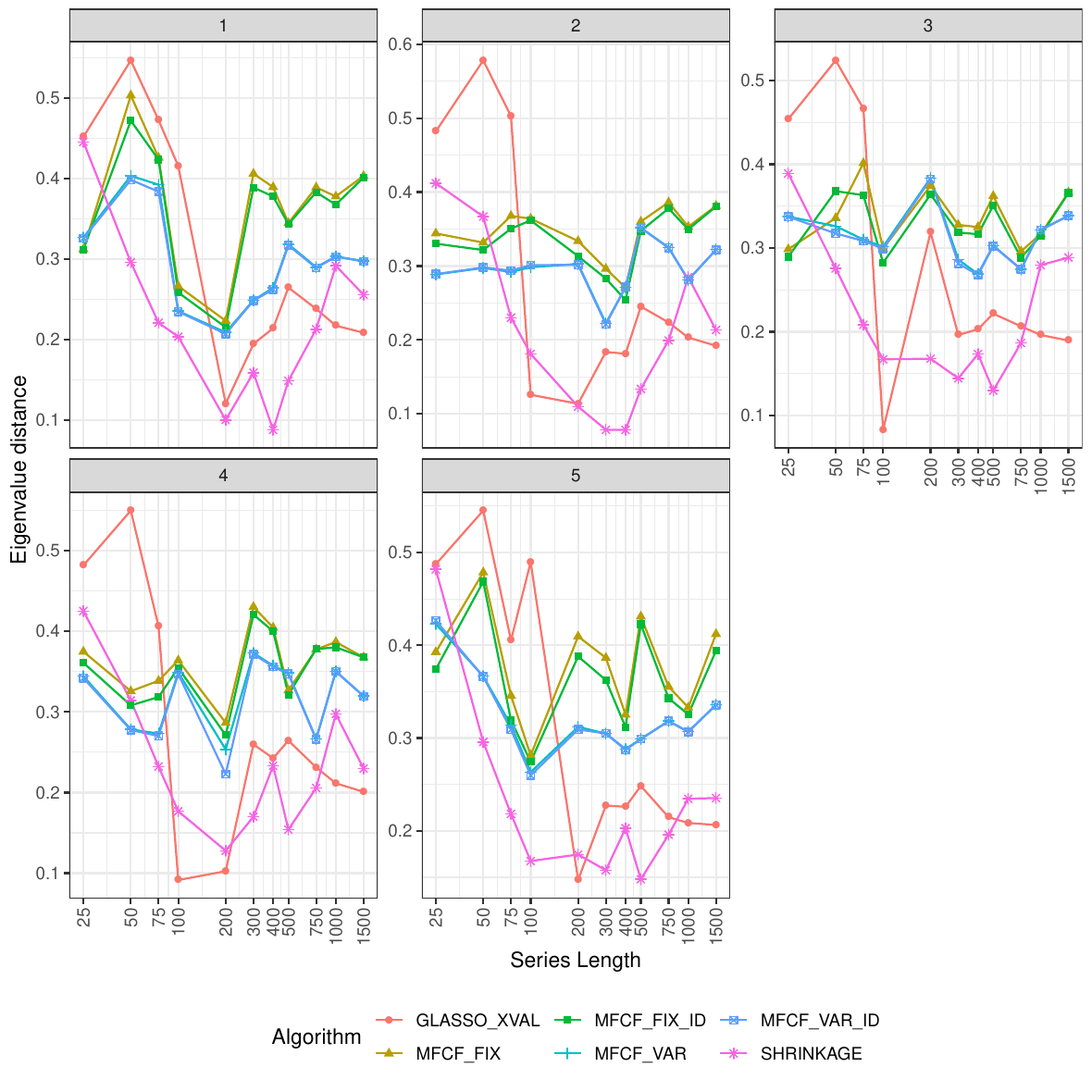}
 \caption{Eigenvalue distance for synthetic data (factor model). The five panels show the values at different time series lengths for 5 training  / validation datasets.}
  \label{fig:ex:Factor:2}
\end{figure}

\begin{figure}
\centering
\includegraphics[scale=0.8]{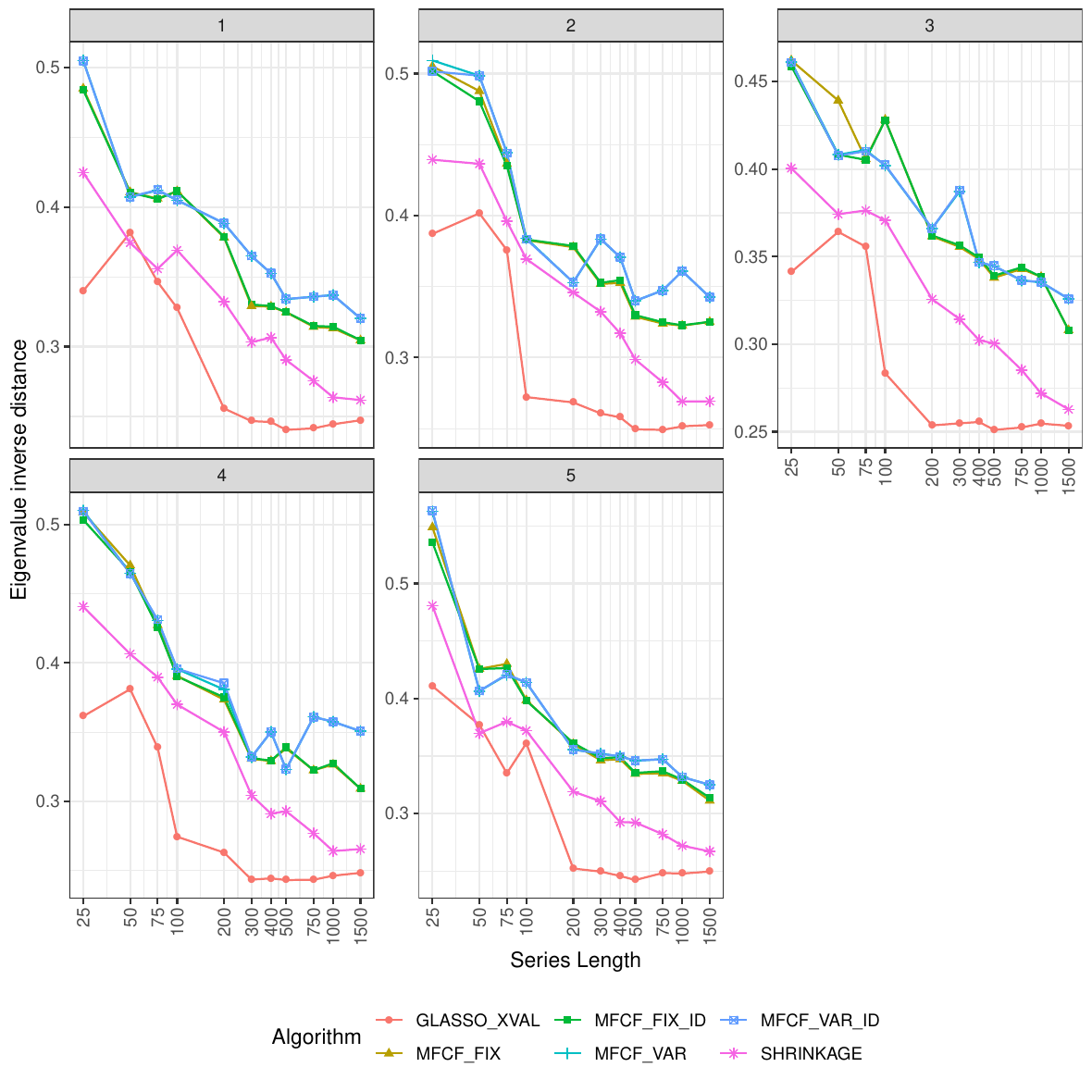}
 \caption{Inverse eigenvalue distance for synthetic data (factor model). The five panels show the values at different time series lengths for 5 training  / validation datasets.}
  \label{fig:ex:Factor:3}
\end{figure}

\begin{figure}
\centering
\includegraphics[scale=0.8]{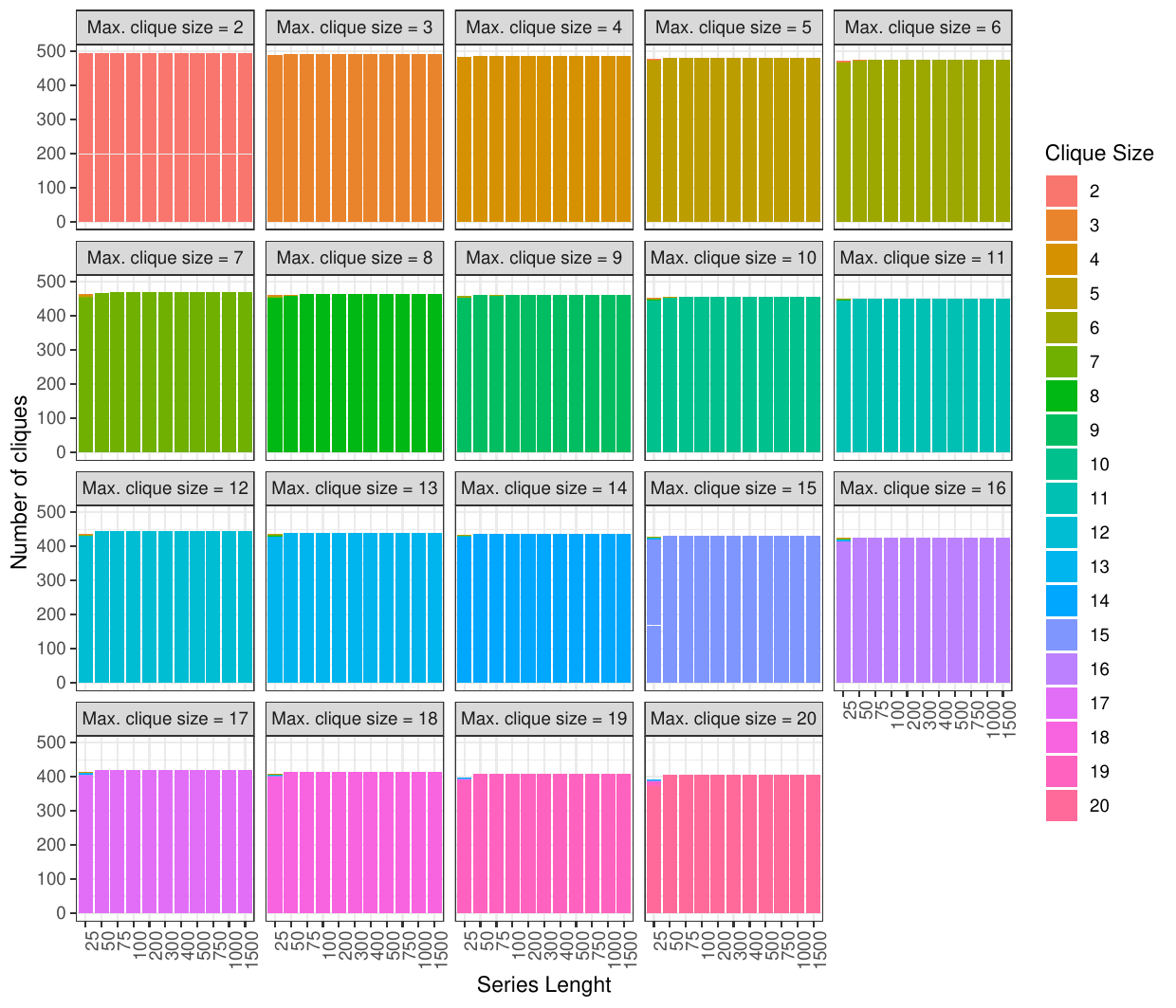}
 \caption{Composition of cliques from \mfcfvar{} for synthetic data (factor model). The statistics is based on a total of 5 different training / validation sets per length of the time series.}
  \label{fig:ex:Factor:4}
\end{figure} 

\FloatBarrier
\subsubsection{Real Data} \label{sec:exp:real:data:res}

Performance measures are reported in Figures \ref{fig:ex:Real}-\ref{fig:ex:Real:4}. We see from the inspection of Figures \ref{fig:ex:Real} and \ref{fig:ex:Real:1} that with real data the log-likelihood is comparable across all models, with slight better values for \mfcffix{} and \mfcfvar{} for shorter time series. It is worth noting that, in the family of the \mfcf{} algorithms, the two that use the clique tree shrinkage target described in Equation \ref{ml:inverse:cost:corr} (\mfcffix{} and \mfcfvar{}) perform significantly better, for short time series, than the models with the same structure but the simpler identity matrix (\mfcffixi{} and \mfcfvari{}) as a shrinkage target.
Table \ref{tab:Real:shrink} shows that the penalty and shrinkage parameters decrease, as expected, with the length of the time series.
Table \ref{tab:Real:param_no} shows that from time series lengths above 500 the \glasso{} produces matrices with a significantly higher number of parameters different from zero, including almost 50\% of the total number of elements. 
The growth in performance for \mfcf{} family of algorithms is in this case constrained by the maximum clique size.
As Figure \ref{fig:ex:Real:2} shows the \mfcf{} algorithms performs better in the approximation of the eigenvalues of the precision matrix, and slightly worse (Figure \ref{fig:ex:Real:3} in the representation of the eigenvalues of the inverse precision matrix. 
In this experiment, since we don't know the ``true'' correlation matrix we have used the maximum likelihood estimate of the correlation matrix over the full time series.
Overall the results for this dataset demonstrate that \mfcffix{} and \mfcfvar{} are the best performer.
The analysis in Figure \ref{fig:ex:Real:4} showing the composition of the cliques of different sizes shows that the synthetic model closest to the real data is the factor model (see \ref{sec:factor:model:noise}). 


\begin{table}[ht]
\centering
\begingroup\footnotesize
\begin{tabular}{|r|C{2cm}|C{2cm}|C{2cm}|C{2cm}|C{2cm}|C{2cm}|}
  \hline
\thead{Series \\ length} & \thead{GLASSO \\ XVAL} & \thead{MFCF \\ FIX} & \thead{MFCF \\ FIX ID} & \thead{MFCF \\ VAR} & \thead{MFCF \\ VAR ID} & SHRINKAGE \\ 
  \hline
 25 & 0.21 & 0.81 & 0.57 & 0.80 & 0.54 & 0.67 \\ 
   50 & 0.15 & 0.73 & 0.47 & 0.70 & 0.47 & 0.63 \\ 
   75 & 0.12 & 0.62 & 0.35 & 0.61 & 0.36 & 0.55 \\ 
  100 & 0.12 & 0.56 & 0.31 & 0.54 & 0.33 & 0.54 \\ 
  200 & 0.12 & 0.43 & 0.26 & 0.44 & 0.26 & 0.44 \\ 
  300 & 0.10 & 0.32 & 0.20 & 0.34 & 0.18 & 0.39 \\ 
  400 & 0.08 & 0.28 & 0.17 & 0.28 & 0.17 & 0.34 \\ 
  500 & 0.07 & 0.25 & 0.15 & 0.24 & 0.15 & 0.30 \\ 
  750 & 0.03 & 0.18 & 0.12 & 0.19 & 0.13 & 0.24 \\ 
  1000 & 0.02 & 0.13 & 0.08 & 0.13 & 0.08 & 0.17 \\ 
  1500 & 0.01 & 0.11 & 0.07 & 0.11 & 0.07 & 0.14 \\ 
   \hline
\end{tabular}
\endgroup
\caption{Mean penalty/shrinkage parameter by length of time series. The statistics is based on a total of 5 different training / validation sets per length of the time series.} 
\label{tab:Real:shrink}
\end{table}


\begin{table}[ht]
\centering
\begingroup\footnotesize
\begin{tabular}{|r|R{2cm}|R{2cm}|R{2cm}|R{2cm}|R{2cm}|R{2cm}|}
  \hline
\thead{Series \\ length} & \thead{GLASSO \\ XVAL} & \thead{MFCF \\ FIX} & \thead{MFCF \\ FIX ID} & \thead{MFCF \\ VAR} & \thead{MFCF \\ VAR ID} & SHRINKAGE \\ 
  \hline
25 & 950 & 1561 & 1478 & 1206 & 1055 & 4950 \\ 
  50 & 1039 & 1512 & 1010 & 1286 & 1018 & 4950 \\ 
  75 & 1099 & 1628 & 798 & 1416 & 850 & 4950 \\ 
  100 & 1063 & 1545 & 1053 & 1310 & 1060 & 4950 \\ 
  200 & 1057 & 1363 & 1152 & 1443 & 1094 & 4950 \\ 
  300 & 1170 & 1202 & 1049 & 1375 & 888 & 4950 \\ 
  400 & 1191 & 1359 & 1154 & 1325 & 1255 & 4950 \\ 
  500 & 1464 & 1578 & 1358 & 1477 & 1360 & 4950 \\ 
  750 & 1994 & 1495 & 1495 & 1562 & 1562 & 4950 \\ 
  1000 & 2404 & 1595 & 1595 & 1645 & 1645 & 4950 \\ 
  1500 & 2752 & 1710 & 1677 & 1645 & 1629 & 4950 \\ 
   \hline
\end{tabular}
\endgroup
\caption{Mean number of non-zero coefficient in the precision matrix by length of time series. The statistics is based on a total of 5 different training / validation sets per length of the time series.} 
\label{tab:Real:param_no}
\end{table}

\begin{figure} 
\centering
\includegraphics[scale=0.8]{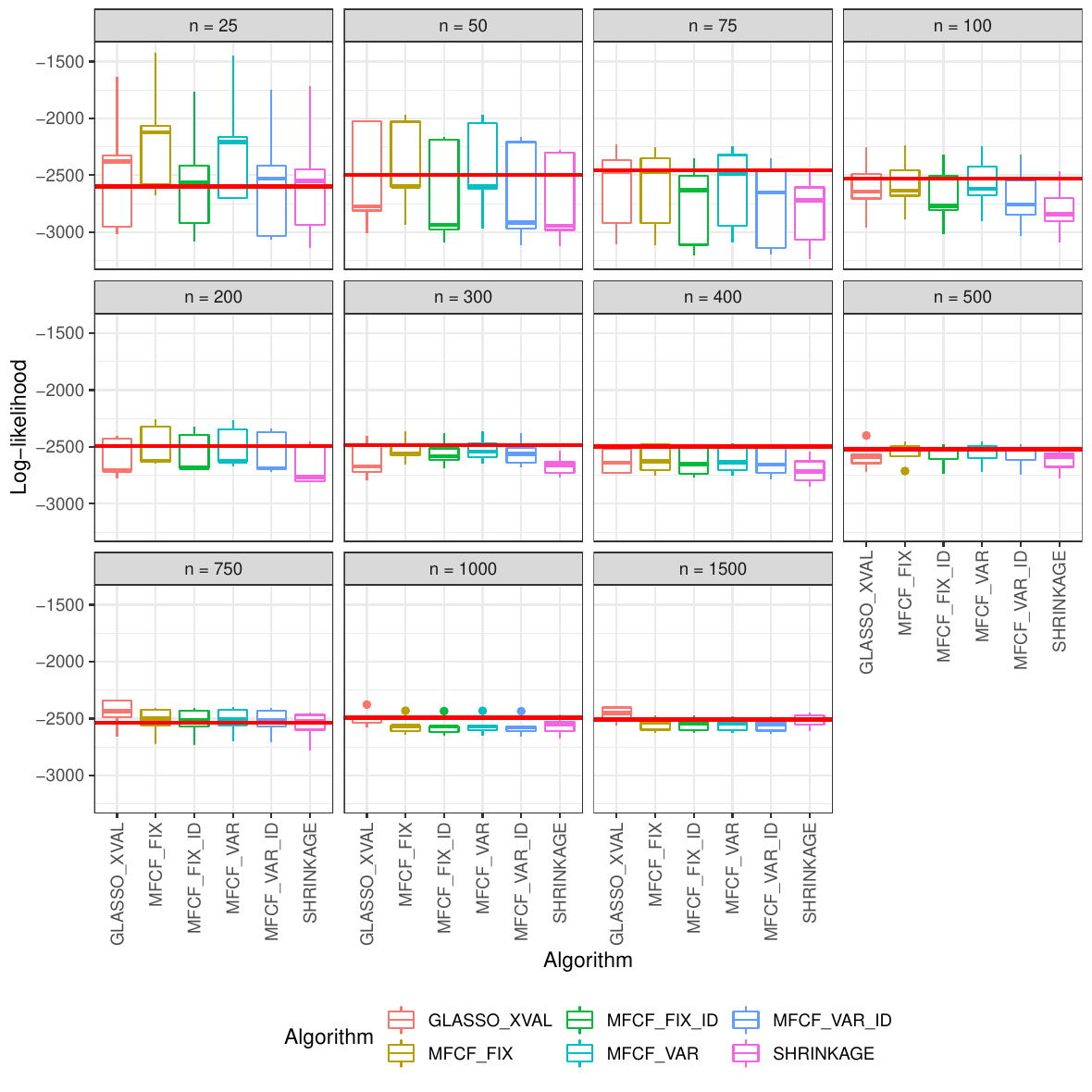}
  \caption{Log-likelihood of the algorithms on real data (stock returns) for different length of the series. The statistics is based on a total of 50 test sets (10 test sets for each of 5 different training / validation sets) per length of the time series.}
  \label{fig:ex:Real}
\end{figure}

\begin{figure}
\centering
\includegraphics[scale=0.8]{pictures/Real_acc.pdf}
 \caption{Performance of the algorithms on real data (stock returns) for different lengths of the series. The statistics is based on a total of 50 test sets (10 test sets for each of 5 different training / validation sets) per length of the time series.}
  \label{fig:ex:Real:1}
\end{figure}

\begin{figure}
\centering
\includegraphics[scale=0.8]{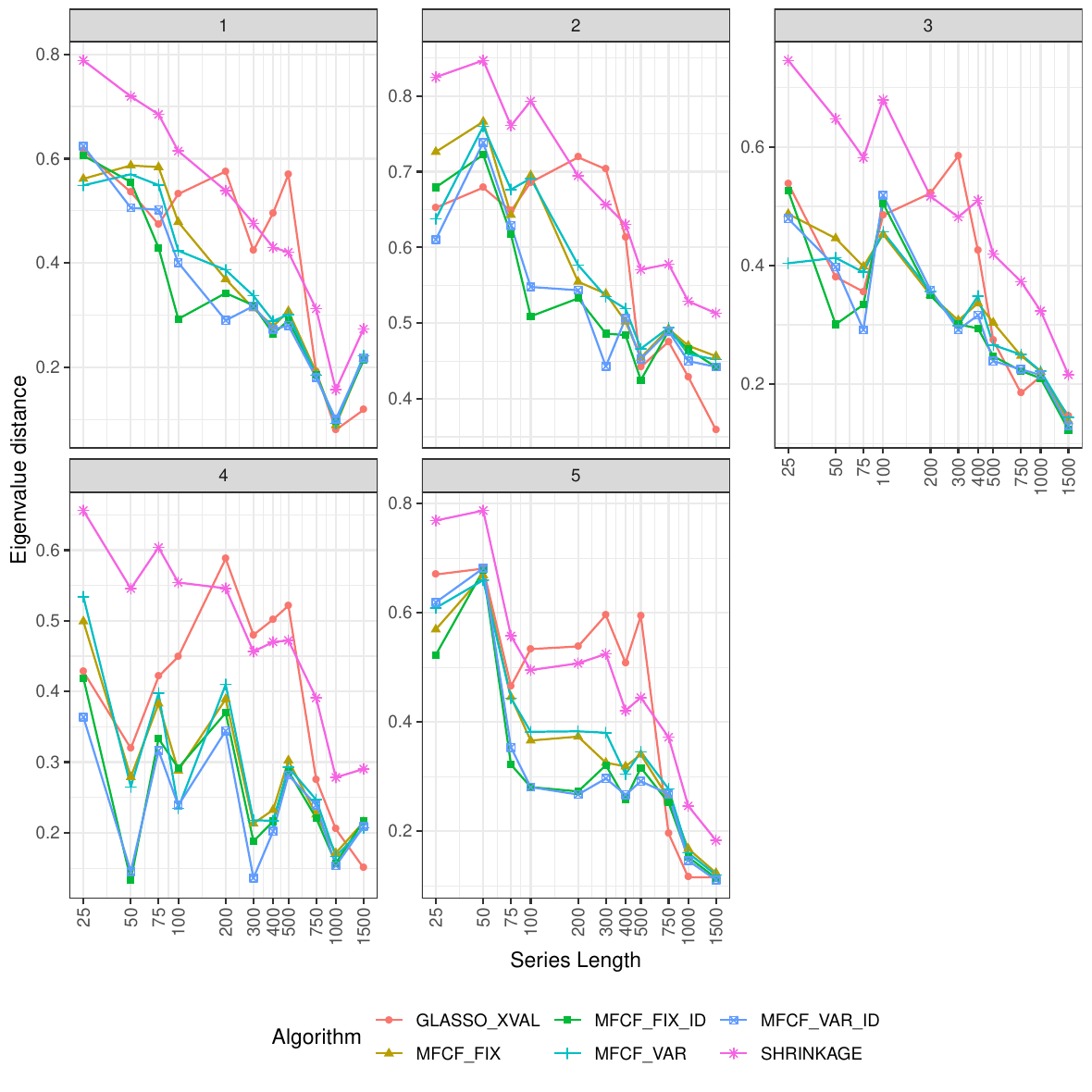}
 \caption{Eigenvalue distance for real data (stock returns) for 5 training sets. The five panels show the values at different time series lengths for 5 training  / validation datasets.}
  \label{fig:ex:Real:2}
\end{figure}

\begin{figure}
\centering
\includegraphics[scale=0.8]{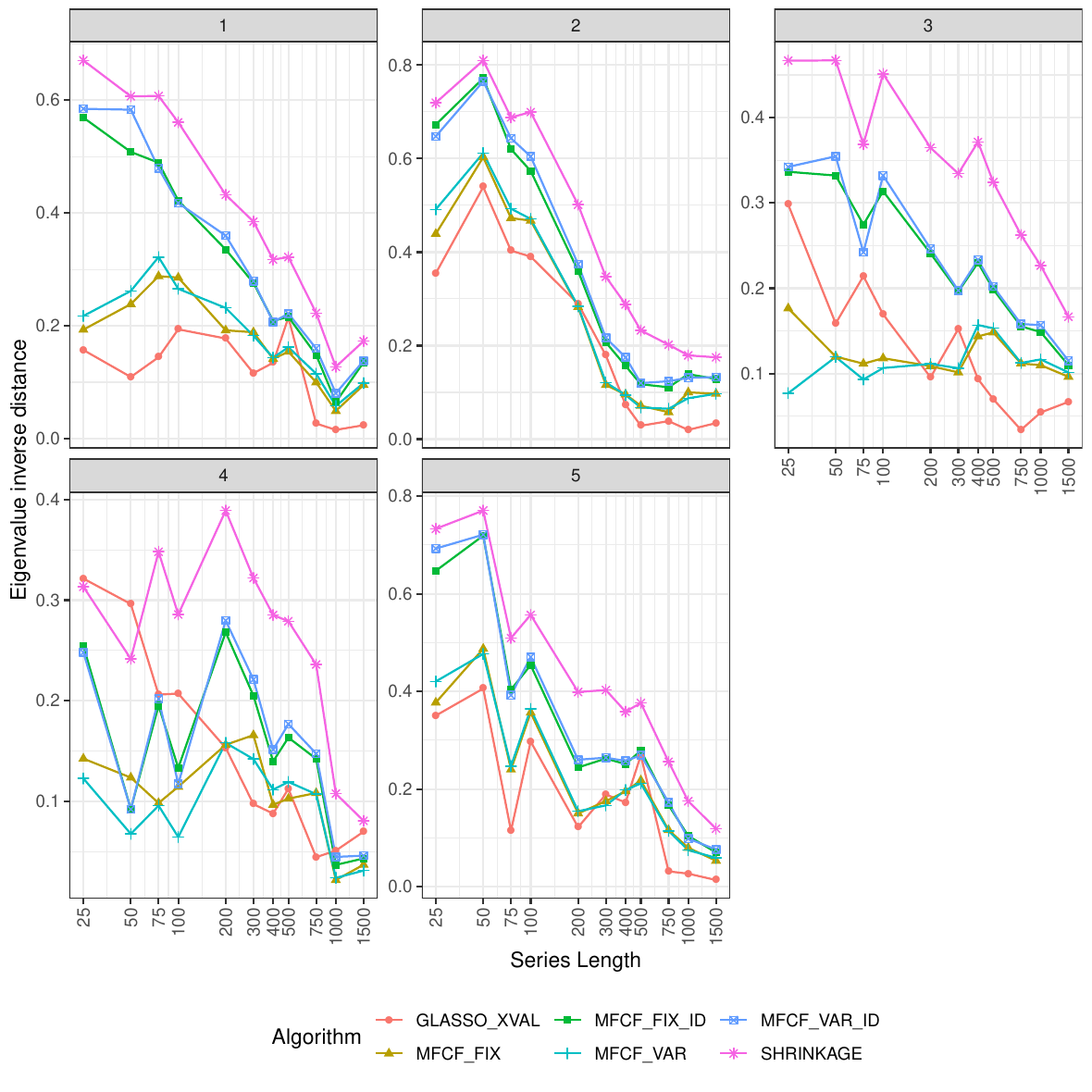}
 \caption{Inverse eigenvalue distance for real data (stock returns) for 5 training sets. The five panels show the values at different time series lengths for 5 training  / validation datasets.}
  \label{fig:ex:Real:3}
\end{figure} 

Figure \ref{fig:ex:Real:4} shows that, excepting for the shortest time series, the \mfcfvar{} algorithms almost always use the largest allowed clique size.

\begin{figure}
\centering
\includegraphics[scale=0.8]{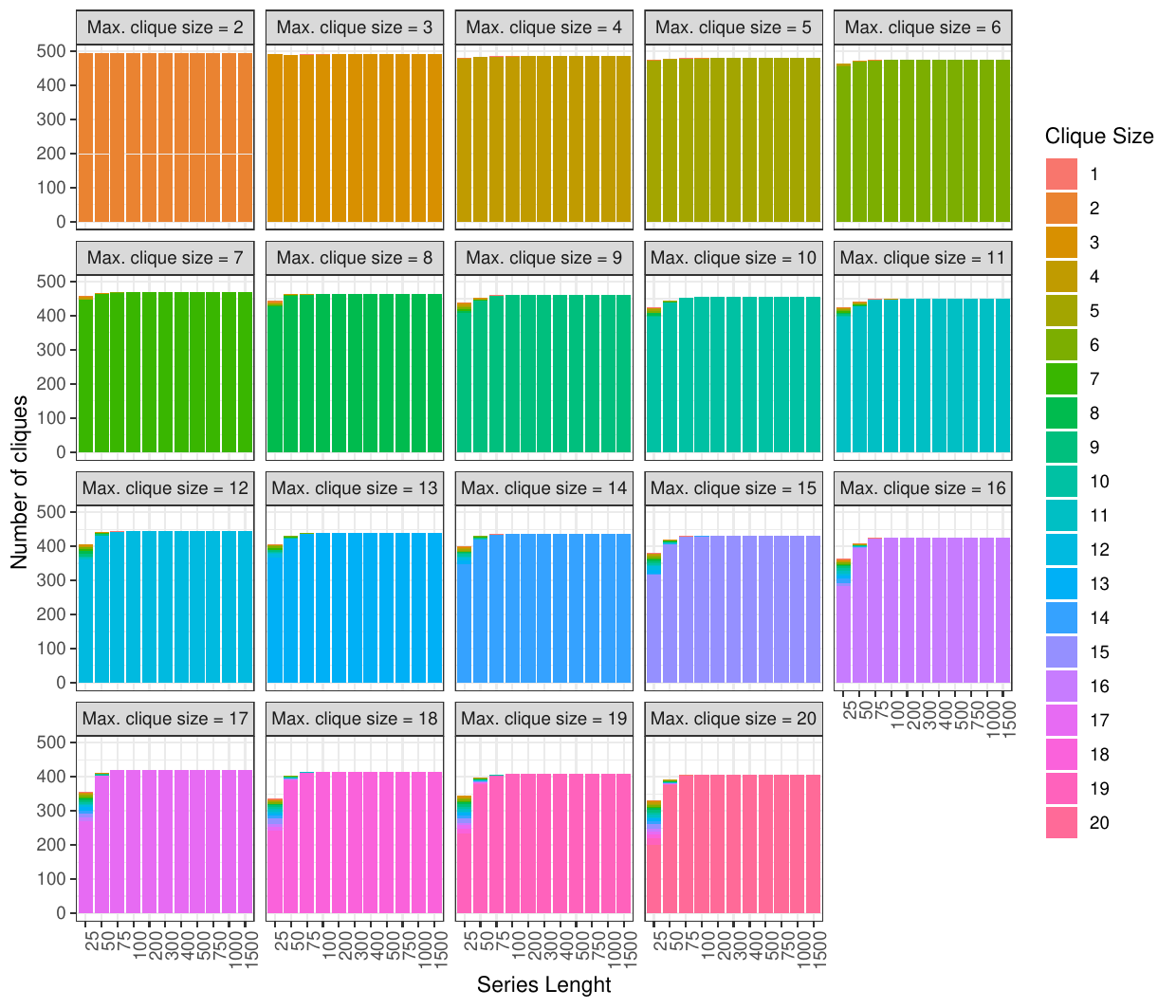}
 \caption{Composition of cliques from \mfcfvar{} for real data (stock returns). The statistics is based on a total of 5 different training / validation sets per length of the time series.}
  \label{fig:ex:Real:4}
\end{figure}

\end{document}